\documentclass[11pt,a4paper]{article}

\usepackage[utf8]{inputenc}
\usepackage[T1]{fontenc}
\usepackage{times}
\usepackage{graphicx}
\usepackage{amsmath,amssymb,amsthm}
\usepackage{booktabs}
\usepackage{enumitem}
\usepackage{url}
\usepackage{hyperref}
\hypersetup{colorlinks=true, linkcolor=blue, citecolor=blue, urlcolor=blue, allcolors=blue}
\usepackage{algorithm}
\usepackage{algorithmic}
\usepackage{xcolor}
\usepackage{natbib}
\usepackage[margin=2.5cm]{geometry}
\usepackage{tikz}
\usetikzlibrary{positioning, shapes, arrows.meta, fit, backgrounds}
\usepackage{longtable}
\usepackage{array}
\usepackage{float}

\newtheorem{theorem}{Theorem}
\newtheorem{lemma}{Lemma}
\newtheorem{corollary}{Corollary}
\newtheorem{proposition}{Proposition}
\newtheorem{definition}{Definition}

\title{\textbf{The Reasoning Trap:\\An Information-Theoretic Bound on Closed-System Multi-Step LLM Reasoning}\\
\large \emph{A Falsifiable Theorem, the Multi-Agent-Debate Instantiation, and a Triple Failure of Human Reliability}}

\author{%
  Kwan Soo Shin\thanks{Sole author and corresponding author.} \\
  PolymathMinds AI Lab \\
  \texttt{\href{mailto:sshinresearch@gmail.com}{sshinresearch@gmail.com}} \\
  ORCID: \href{https://orcid.org/0009-0001-5799-7556}{0009-0001-5799-7556}
}

\date{May 2026 \\[0.3em] \textit{arXiv preprint --- full version}}

\begin{document}

\maketitle

\begin{abstract}
When copies of the same language model are prompted to debate, they do not produce diverse perspectives---they produce diverse phrasings of the same perspective. Multi-agent debate (MAD), and more broadly closed-system reasoning protocols in which agents repeatedly transform each other's outputs, tends to preserve answer accuracy while degrading the reasoning behind those answers. This paper offers a \emph{programmatic theory of evidence-grounded reasoning failure}: a formal account of why closed-system deliberation can preserve answers while losing evidential grounding. We support the theory with diagnostic evidence on multi-agent debate and a constructive open-system counter-design. We name the multi-agent case the \textbf{Debate Trap} and the broader phenomenon the \textbf{Reasoning Trap}.

The framework has three parts: (i) \textbf{SFS} (Supported Faithfulness Score), a claim-level metric decomposing each response into atomic claims verified against the provided evidence (condition-level rankings are decomposer-invariant: Spearman $\rho{=}1.0$, R4--R5); (ii) \textbf{EGSR} (Evidence-Grounded Socratic Reasoning), replacing adversarial argumentation with evidence-grounded inquiry; (iii) \textbf{Theorem~1 (DPI Bound)}: under standard MAD, the chain $E \to O^0 \to O^1 \to \cdots$ forms a Markov process (each round depends only on the previous), and the Data Processing Inequality implies $\mathbb{E}[I(E; O^{t+1})] \leq \mathbb{E}[I(E; O^t)]$. Three companion formal results---open-system recovery (Theorem~\ref{thm:recovery}), EGSR information accumulation (Lemma~\ref{lem:egsr-accum}), and the vote-aggregation floor (Proposition~\ref{prop:vote-floor})---complete the framework. Together they partition multi-step LLM reasoning protocols by their information-theoretic relationship to $E$.

As illustrative diagnostic evidence we observe a three-tier spectrum---reasoning degradation, the Trap proper, reasoning elimination---across 16 conditions on SciFact (300 claims) and a FEVER replication (1{,}000 claims): in the Trap-proper configuration (DebateCV, C13), debate preserves 88\% of baseline accuracy while SFS drops 43\%; majority-vote MAD (C15) reduces SFS to 1.7\% of baseline ($p < 10^{-6}$, $d = -0.96$); EGSR recovers 98\% of baseline SFS. An \textbf{R6 cohort study} (Korean, $n{=}10{\times}30$ FEVER; English, $n{=}3{\times}200$ SciFact) further finds inter-rater Fleiss $\kappa \leq +0.018$ with $0.8$--$1.4$ Likert intra-rater shifts across language and domain---under the rating practices prevalent in this literature, the human inter-rater agreement that faithfulness metrics have been calibrated against is not itself stable.

We distinguish faithfulness from accuracy, calibration, and inter-rater agreement, and offer one falsifiable conjecture, framed as a research program rather than a tested claim across paradigms: any closed-system reasoning protocol preserving the Markov structure of Theorem~\ref{thm:dpi} is, in expectation, subject to the same DPI bound.\footnote{Companion OSF artifact (raw experimental outputs, anonymized R6 logs, 132-paper Knowledge Frontier Map BibTeX, analysis scripts): \href{https://doi.org/10.17605/OSF.IO/P75XG}{10.17605/OSF.IO/P75XG} (CC~BY~4.0). Reproduction code: \url{https://github.com/seanshin0214/debate-trap} upon acceptance.}
\end{abstract}

\section{Introduction}
\label{sec:intro}

\paragraph{The phenomenon.} When humans encounter $84 \times 25$, the number $25$ is often recognised as $\tfrac{100}{4}$, the problem reframed as $\tfrac{8400}{4}$, and the answer appears without sequential computation. This \emph{frame shift}---selecting from an open-ended space of reframings rather than executing a fixed sequence---is precisely what autoregressive next-token prediction does not reliably produce. \citet{dziri2023compositionality} formalise the limit: across compositional tasks of increasing depth, the probability of correct prediction collapses from training-distribution accuracy to near-chance, with shallow pattern matching standing in for compositional reasoning; \citet{berglund2023reversalcurse} document a basic generalisation failure (the Reversal Curse: a model trained on ``$A$ is $B$'' often fails to infer ``$B$ is $A$'') that no scaling, retrieval, or fine-tuning recipe has eliminated; \citet{geirhos2020shortcut} (\emph{Nature Machine Intelligence}) frame the broader phenomenon as \emph{shortcut learning}, where models are biased toward the simplest solutions that satisfy training objectives rather than the deep solutions that would generalise. The model can generate a chain that ends in the right answer; it can do so under a wide range of prompting strategies; but the chain itself, and especially the chain produced when several copies of the same model are made to debate one another, often is not what produced the answer in any defensible sense. Multi-agent debate (MAD) was proposed for distinct goals---accuracy improvement \citep{du2023debate} and scalable oversight \citep{irving2018debate}---under the working assumption that diverse perspectives in debate would yield convergence toward correct reasoning. The proposal has scaled rapidly across reasoning and fact-verification benchmarks; recent work has begun to question whether the convergence assumption holds \citep{zhang2025stop, yang2025revisiting, choi2025debatevote}. This paper offers a falsifiable theoretical reason why it does not, and a constructive remedy that does.

\paragraph{And yet.} The architecture borrows from a human practice---group deliberation---whose pathologies were catalogued long before LLMs existed. Three landmark observations are directly relevant: \citet{janis1972victims}'s eight symptoms of groupthink (illusions of unanimity, self-censorship, direct conformity pressure, mindguards); \citet{asch1951effects}'s finding that $75\%$ of participants conformed to a clearly incorrect majority on at least one trial; and \citet{sunstein2002polarization}'s Law of Group Polarization (deliberation tends to move groups toward a more extreme point in their pre-deliberation direction). The deeper theoretical lineage runs further back: \citet{degroot1974reaching}'s consensus dynamics showed that any iterated averaging among agents who exchange information only with one another converges to a value determined by initial weighted opinions, not by external truth, and \citet{banerjee1992simple}'s informational-cascade model showed that sequential decision-makers who observe only predecessors' actions will rationally ignore their own private signals once a cascade forms. An architectural simulation of human deliberation may inherit, and could plausibly amplify, these pathologies; recent work shows LLM agents reproduce them under standard MAD \citep{yang2025conformity, wang2024llmcollective}. We test the assumption empirically and find that the principal cost of MAD in our experiments is not \emph{accuracy} (which it tends to preserve) but \emph{faithfulness} (which it tends to degrade). Across 16 experimental conditions, the Trap-proper configuration (DebateCV, C13) preserves 88\% of baseline accuracy while our Supported Faithfulness Score (SFS) drops by 43\%; in the extreme, majority-vote debate (C15) reduces evidence grounding to 1.7\% of baseline (Wilcoxon $p < 10^{-6}$, Cohen's $d = -0.96$). We refer to this phenomenon as the \textbf{Debate Trap}---the multi-agent instantiation of a more general \textbf{Reasoning Trap} (Definition~\ref{def:debate_trap}). The DeGroot--Banerjee lineage indicates this is not a contingent failure of any particular MAD implementation: it is what closed-system iterated reasoning under shared parameters does, predicted by half a century of social science before any LLM existed to instantiate it.

\paragraph{The paradox is methodologically generated.} The reason this failure has been invisible is structural rather than empirical. Prevailing studies of MAD (e.g., \citealp{du2023debate, choi2025debatevote, liu2026martingale}, all evaluated against accuracy benchmarks) operationalise ``does debate improve reasoning?'' as ``does debate improve the answer?''---never as ``does debate improve the reasoning \emph{behind} the answer?'' This conflation is not an oversight in any individual paper; it is the field's evaluation paradigm. The result is a research-level analogue of the phenomenon the paper diagnoses: an entire literature whose accuracy-only measurement apparatus is unable to detect the failure mode the architecture induces, even when the failure is large enough to halve evidence-grounding scores. We address it by providing the metric, algorithm, and theorem that the multi-agent setting requires, in a single integrated package rather than as separate contributions distributed across follow-up work. We further validate against a human signal that, \emph{under the rater-calibration practices prevalent in this literature}, proves unstable across context---a finding that inverts a default assumption in faithfulness research. Our metric is therefore better understood as an \emph{epistemological intervention}---a tool that surfaces a hitherto undetected failure mode---than as a calibration toward the prevailing human gold standard, which our evidence shows is not stable enough to bear that role.

\paragraph{Why this matters now.} The strategic context for this paper is the rapid uptake of multi-agent systems in safety-critical evaluation pipelines. \citet{browncohen2023doubly}, \citet{buhl2025alignment}, and \citet{lang2025weak} build scalable-oversight proposals on the premise that debate produces honest reasoning at scale; the convergence-to-truth assumption is load-bearing for these arguments. Theorem~\ref{thm:dpi} provides a counter-premise: under the closed-system conditions of standard MAD, expected faithfulness is non-increasing along the chain, strictly so under any vote-aggregating protocol. The implication is not that scalable oversight cannot work but that the architectural assumptions on which current proposals rest do not survive without external evidence injection. The same structural point applies to evaluation in deployment: faithfulness benchmarks anchored to single-domain inter-rater agreement inherit the human-rating-instability documented by R6, and accuracy benchmarks alone cannot detect the trap. Both ends of the evaluation pipeline---the theoretical assumption and the human calibration target---are load-bearing in ways the prevailing methodology has not made explicit, and Theorem~\ref{thm:dpi} together with the R6 cross-cohort design provides the formal apparatus for stating both load-bearing roles precisely.

\paragraph{Five contributions: a three-part framework, two theorems, and reliability evidence.} (1) The \textbf{Debate Trap}: a formal definition (Definition~\ref{def:debate_trap}) and a three-tier empirical spectrum---\emph{reasoning degradation} (C4: SocraSynth turn-aggregated debate, 39\% SFS drop), \emph{the Trap proper} (C13: DebateCV, 88\% accuracy retention with 43\% SFS drop), and \emph{reasoning elimination} (C15: majority-vote MAD, SFS at 1.7\% of baseline). (2) The \textbf{Supported Faithfulness Score} (SFS): a claim-level metric that first decomposes each response into atomic claims (using an LLM as \emph{decomposer}), then verifies each claim against the provided evidence; condition-level rankings are invariant to choice of decomposer (Spearman $\rho{=}1.0$, R4--R5). (3) \textbf{Evidence-Grounded Socratic Reasoning} (EGSR): a protocol that replaces adversarial argumentation with structured Socratic inquiry against external evidence; recovers SFS to within 2\% of the no-debate baseline (98\% recovery). (4) \textbf{Theorem~1 (DPI Bound)} and \textbf{Theorem~2 (Open-System Recovery)}: under standard MAD assumptions (formalised in Theorem~\ref{thm:dpi}, \S\ref{sec:debate_trap}), agent reasoning forms a Markov chain $E \to O^0 \to O^1 \to \cdots$ along which faithfulness is non-increasing in expectation; any debate variant that preserves this structure is, in expectation, subject to the same DPI bound. Theorem~\ref{thm:recovery} establishes the recovery side: open-system protocols re-injecting $E$ form sub-martingales, and Lemma~\ref{lem:egsr-accum} establishes that EGSR's update structure satisfies this property concretely. Together with the Vote-Aggregation Floor proposition, the four formal results yield a structural classification of multi-step LLM reasoning protocols (the partition is stated formally in \S\ref{sec:debate_trap}). (5) \textbf{R6 human-reliability evidence}: across two language-and-domain cohorts (Korean: $n{=}10$ raters $\times$ 30 FEVER items; English: $n{=}3 \times 200$ SciFact), inter-rater Fleiss $\kappa$ for faithfulness ratings is at most $+0.018$ across both cohorts; the two raters who completed both languages further exhibit intra-rater Likert shifts of 0.8--1.4 points---a direct empirical demonstration that human judgement of reasoning faithfulness, as elicited under the rater-calibration practices prevalent in this literature, is not itself a stable measurement target.

\paragraph{The shape of the contribution.} What is unusual about this paper is not any individual contribution but their integration. The faithfulness-metric literature has produced operational scores without information-theoretic grounding (e.g., \citealp{min2023factscore, wei2024safe}); the multi-agent-debate literature has produced architectural analyses without process-faithful measurement \citep{du2023debate, kenton2024scalable}; the scalable-oversight literature has produced theoretical convergence arguments without operational instantiation \citep{irving2018debate, browncohen2023doubly}; the human-evaluation literature has produced calibration targets without cross-context test-retest \citep{landis1977measurement, akbar2024hallumeasure}. The contribution here is the closure: a metric (SFS) that exercises seven axioms (A1--A7), an algorithm (EGSR) that breaks the Markov chain through external evidence injection, two theorems that bound the closed-system regime above and the open-system regime below, and a human-evaluation cohort design that surfaces the calibration-target instability the field has assumed away. The Knowledge Frontier Map (Figure~\ref{fig:knowledge_frontier}, Appendix~\ref{appendix:frontier_map}) catalogues 132 contributions across eight active lineages and locates the empty region the present work fills---formally diagnosed in \S\ref{sec:related_work} (\emph{The empty quadrant}).

\paragraph{Central thesis.} This paper does not argue that debate never helps accuracy. It shows that closed-system debate can preserve accuracy while systematically eroding evidence-grounded reasoning faithfulness, and that recovery requires breaking the closed information chain through evidence re-injection. The argument is positive: closed-system reasoning is bounded above (Theorem~\ref{thm:dpi}); open-system reasoning that re-injects $E$ is bounded below (Theorem~\ref{thm:recovery}); the latter strictly dominates the former on the metric the multi-agent reasoning literature has not yet measured.

\paragraph{Roadmap.} The remainder positions the work in eight active research lineages (\S\ref{sec:related_work}, Figure~\ref{fig:knowledge_frontier}), defines the Debate Trap and proves Theorem~1 (\S\ref{sec:debate_trap}), specifies SFS and EGSR with the seven design axioms (\S\ref{sec:method}), reports the experimental design and the R1--R8 robustness suite (\S\ref{sec:experiments}), presents results across 16 conditions and two cross-architecture models with the cross-language R6 evidence (\S\ref{sec:results}), discusses implications including the fifty-year DeGroot--Banerjee--Janis--Asch--Sunstein lineage and the Hong-Page diversity-prediction theorem (\S\ref{sec:discussion}), and concludes (\S\ref{sec:conclusion}).

\section{Related Work}
\label{sec:related_work}
\label{sec:rw_synthesis}

\paragraph{An eight-lineage map across 132 contributions.} Multi-agent reasoning faithfulness sits at the intersection of eight active research lineages spanning 2017--2026: (L1) CoT faithfulness in single agents \citep{turpin2023unfaithful, paul2024frodo, shen2025faithcot, jiang2025robustanswers, matton2025walkthetalk, lu2026streaming, mittal2026c2faith}; (L2) MAD critique \citep{du2023debate, zhang2025stop, choi2025debatevote, kenton2024scalable, liu2026martingale, zhu2026demystifying, becker2025drift, pitre2025consensagent, choi2025identity, wu2025canllmsdebate}; (L3) faithfulness/factuality metrics \citep{min2023factscore, wei2024safe, song2024veriscore, akbar2024hallumeasure}; (L4) sycophancy and convergence \citep{sharma2024sycophancy, perez2022sycophancy, chen2024spt, kim2025evaluator, bai2022constitutional}; (L5) accuracy--faithfulness divergence \citep{nguyen2024directeval, jacovi2020faithfulnessmetrics, deyoung2020eraser}; (L6) AI safety via debate \citep{irving2018debate, browncohen2023doubly, lang2025weak, buhl2025alignment, arnesen2024winning}; (L7) Socratic reasoning as alternative \citep{qi2023socratic, he2023socreval, kumar2024socraticqgen, shi2025ssr}; and (L8) metacognition and self-reflection \citep{shinn2023reflexion, guo2025mirror, hubinger2024sleeper, guo2024deception}. We catalogue \textbf{132 contributions} across these lineages plus L0 foundational/dataset references in \textbf{Appendix~\ref{appendix:frontier_map}}: per-lineage paragraph summaries (Appendix~\ref{appendix:eight_lineages}) and the 89-row catalogue table (Table~\ref{tab:frontier_map}, Appendix~\ref{appendix:frontier_table}) annotated by core claim, boundary, and relation to the present work; reviewers seeking the full breadth of prior coverage are referred there. Figure~\ref{fig:5gen_map} (Appendix~\ref{appendix:visual}) further situates the work within five generations of multi-step reasoning research.

\paragraph{The empty quadrant.} The 132-contribution catalogue arranges itself in three of four quadrants defined by system architecture (single agent vs.\ multi-agent debate) and measurement granularity (final outcome vs.\ step-level reasoning process). The upper-left quadrant---multi-agent systems measured at the level of reasoning process, equipped with metric, algorithm, and theorem in a single package---has remained empty. The present work fills it. Sycophancy- and identity-bias studies sit on the boundary between L1 and L2, partially diagnosing the mechanism we quantify; none provides a metric--algorithm--theorem triple. Figure~\ref{fig:knowledge_frontier} (Appendix~\ref{appendix:frontier_map}) maps the eight lineages onto these two axes.

\paragraph{Reliability of the ground truth.} A second axis concerns the human signal against which faithfulness metrics have been calibrated. Existing studies \citep{min2023factscore, wei2024safe, song2024veriscore, akbar2024hallumeasure} report inter-rater agreement on a single domain in a single language; cross-context intra-rater consistency has not been empirically tested. Figure~\ref{fig:reliability_axes} (Appendix~\ref{appendix:visual}) positions the prior art on these axes; our R6 cohort design (\S\ref{sec:results}) tests this directly, finding intra-rater shifts that exceed the typical between-rater variation in either cohort. The lineage gap and the reliability gap together motivate the formalisation that follows: a metric whose rankings do not depend on the human signal we documented as unstable \emph{under prevailing rater-calibration practices}, and a bound that explains why the empty quadrant has remained empty.

\paragraph{Five generations of multi-step reasoning research.} Figure~\ref{fig:5gen_map_inline} situates the present work within five generations of reasoning research evaluation: outcome benchmarking (Gen I, 1995--2020), self-rationalization with explicit Chain-of-Thought (Gen II, 2020--2023), CoT faithfulness as a research target in single agents (Gen III, 2023--2026), MAD descriptive analysis (Gen IV, 2023--2026), and Process-Faithful Multi-Agent reasoning (Gen V, 2026--, the present work). Both single-agent and multi-agent streams converge in Generation V, which integrates metric (SFS), algorithm (EGSR), and theorem (DPI Bound) into a single package---none present in Generations III or IV individually.

\begin{figure}[H]
\centering
\vspace{-0.8em}
\includegraphics[width=0.72\textwidth]{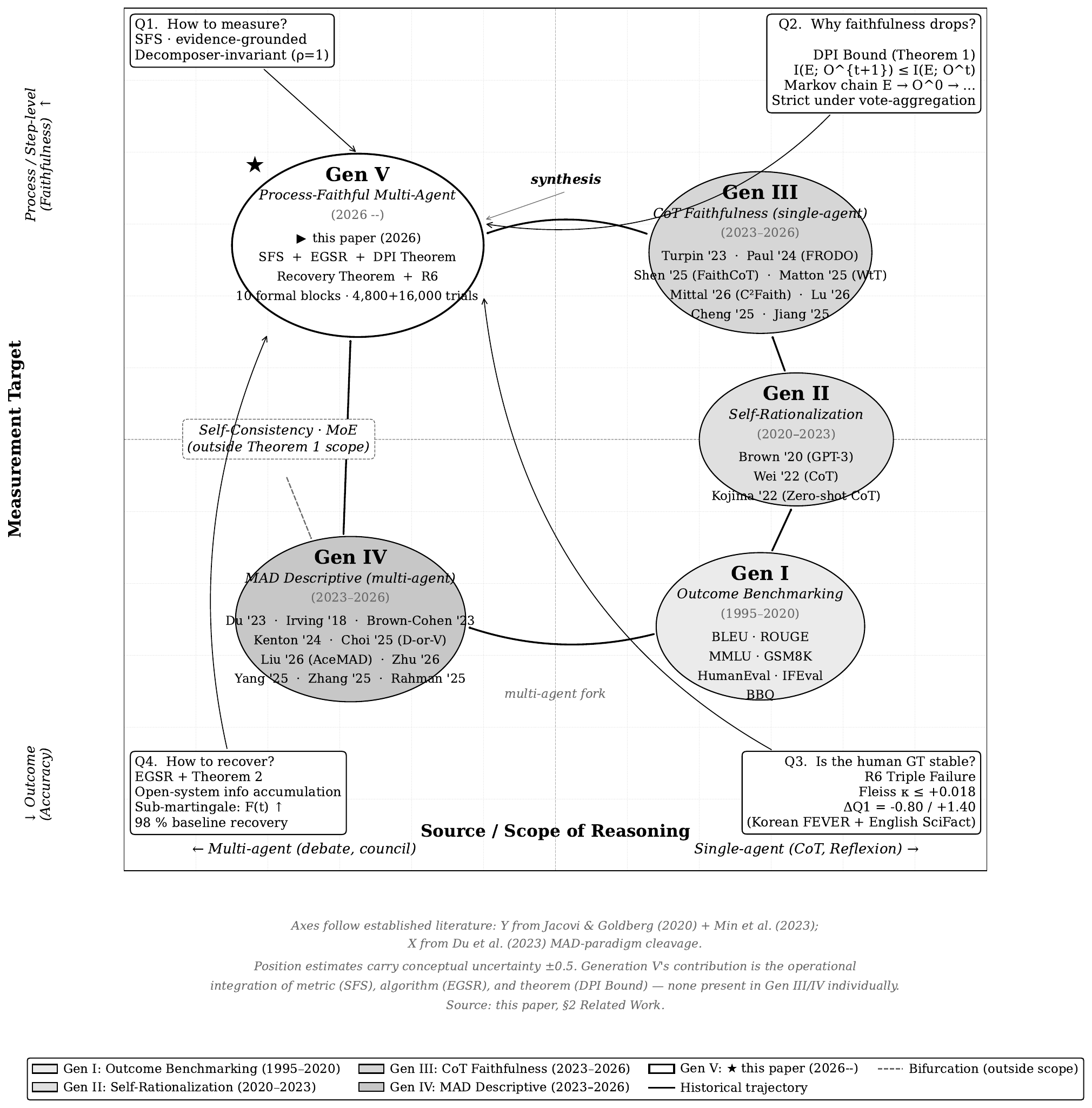}
\vspace{-1.0em}
\caption{\footnotesize\textbf{A Map of Reasoning Faithfulness in the LLM Era.} Five generations of multi-step reasoning research and the emergence of Process-Faithful Multi-Agent reasoning (Generation V). The single-agent stem evolves Gen I (outcome benchmarking, 1995--2020) $\to$ Gen II (self-rationalization with explicit Chain-of-Thought, 2020--2023) $\to$ Gen III (CoT faithfulness as a research target, 2023--2026). The multi-agent fork from Gen I gives rise to Gen IV (MAD descriptive analysis, 2023--2026). Both streams converge into Generation V (the present work, 2026--), which integrates the metric (SFS), algorithm (EGSR), and theorem (DPI Bound, Theorem~\ref{thm:dpi}), together with Theorem~\ref{thm:recovery} (open-system recovery) and the R6 triple-failure evidence on human reliability. The four corner boxes (Q1--Q4) map directly to the paper's four operational contributions (metric, theorem, human-reliability evidence, and recovery algorithm); the integrative \emph{Debate Trap} framing concept (the fifth contribution) is depicted at Generation V itself. Self-Consistency \citep{wang2023selfconsistency} and Mixture-of-Experts violate the closed-system + shared-$\theta$ conditions of Theorem~\ref{thm:dpi} and are shown as a dashed bifurcation outside the bound's scope. Axes follow established literature: the $y$-axis (outcome vs.\ process) draws on \citet{jacovi2020faithfulnessmetrics}'s faithfulness taxonomy and the factuality lineage \citep{min2023factscore}; the $x$-axis (single vs.\ multi-agent) draws on the MAD-paradigm cleavage introduced by \citet{du2023debate}. Position estimates carry conceptual uncertainty $\pm 0.5$; under reasonable axis re-projection the upper-left empty-quadrant pattern persists.}
\label{fig:5gen_map_inline}
\end{figure}

\paragraph{Eight lineages and their missing element.} Table~\ref{tab:lineage_summary} summarises what each lineage measures, what it leaves unmeasured, and how the present work relates. The pattern is consistent: prior work establishes either a metric (L3, L1) or an algorithm (L2, L7) or a critique (L4, L5) or a safety argument (L6) or a self-reflection mechanism (L8), but no lineage to date provides the full \emph{metric--algorithm--theorem} triple at the multi-agent reasoning-process granularity. The closure of the four-condition partition we establish in \S\ref{sec:debate_trap} is what makes the empty quadrant tractable.

\begin{table}[!ht]
\centering\small
\caption{Eight lineages, what each measures, what each leaves missing, and relation to the present work.}
\label{tab:lineage_summary}
\begin{tabular}{p{2.0cm}p{4.2cm}p{4.2cm}p{3.6cm}}
\toprule
Lineage & What it measures & What it leaves missing & Relation to this paper \\
\midrule
L1 (CoT faithfulness, single-agent) & Whether single-agent CoT explains its own answer & No multi-agent extension; no information-theoretic bound & SFS extends to multi-agent; Theorem~\ref{thm:dpi} bounds the chain \\
L2 (MAD critique) & Accuracy convergence in debate & Faithfulness behaviour is not a measurement target & The Debate Trap names this as a structural failure mode \\
L3 (factuality / faithfulness metrics) & Claim-level factual correctness against external evidence & Calibration target stability not tested & R6 cohort study tests this; SFS adds decomposer-invariance \\
L4 (sycophancy / convergence) & Mechanism: agents conform to peer claims & Faithfulness consequence is not formalised & Closed-system Markov chain formalises the consequence \\
L5 (accuracy--faithfulness divergence) & Empirical observation that accuracy can diverge from faithfulness & No mechanism, no bound & Theorem~\ref{thm:dpi} provides the mechanism \\
L6 (AI safety via debate) & Convergence to truth as scalable oversight signal & Premise: convergence implies faithfulness & Counter-premise: faithfulness can decrease in expectation \\
L7 (Socratic reasoning) & Alternative protocol structure & No information-theoretic anchor or external-evidence guarantee & EGSR adds external-evidence anchor; satisfies Theorem~\ref{thm:recovery} \\
L8 (metacognition / self-reflection) & Internal self-correction signals & Self-correction inside closed-system $\to$ no new information & Bounded by Theorem~\ref{thm:dpi}; only external evidence escapes the bound \\
\bottomrule
\end{tabular}
\end{table}

\paragraph{From map to mechanism.} The map identifies \emph{where} the failure mode lives---multi-agent systems measured at the level of reasoning process---and Table~\ref{tab:lineage_summary} identifies \emph{what} each lineage has and is missing. What remains is the \emph{why}: a structural account of the reasoning failure that the eight lineages collectively diagnose but none formally bounds. The next section formalises that account. We name the broader phenomenon the \emph{Reasoning Trap} (any closed-system iterated reasoning chain) and its multi-agent instantiation the \emph{Debate Trap}, and we prove an information-theoretic bound that explains why the upper-left quadrant has been empty: under the four-condition Markov closure of \S\ref{sec:debate_trap}, faithfulness is bounded above in expectation, with strict inequality under any vote-aggregating protocol.

\section{The Reasoning Trap: Definitions and Theorem}
\label{sec:debate_trap}

We formalise the phenomenon. Let $E = \{e_1, \ldots, e_m\}$ denote the evidence set provided for a claim $c$, and let $O_i^t$ denote agent $i$'s reasoning output at debate round $t$ (with $O^0$ the initial response).

\begin{definition}[Reasoning Faithfulness]
\label{def:faithfulness}
The faithfulness of agent $i$ at round $t$ is $F_i(t) = I(E; O_i^t)$, where $I$ denotes mutual information. Aggregate faithfulness is $F(t) = \frac{1}{n}\sum_i F_i(t)$. Under symmetric agents (identical $\theta$), the joint-MI faithfulness $F^{\mathrm{joint}}(t) := I(E; O^t)$ with $O^t := (O_1^t, \ldots, O_n^t)$ is the natural quantity to which the Data Processing Inequality applies (Theorem~\ref{thm:dpi}); we use $F^{\mathrm{joint}}$ as the load-bearing object (Lemma~\ref{lem:joint-avg}, Appendix~\ref{appendix:proofs}). The per-agent average $F(t)$ defined above remains a complementary symmetry-respecting summary.
\end{definition}

\begin{definition}[Evidence Utilization Rate and the Debate Trap]
\label{def:debate_trap}
$\mathrm{EUR}(t) = |E_{\text{cited}}^t| / |E|$ is the fraction of provided evidence cited at round $t$. Let $\mathrm{Acc}(t) := \Pr[\hat{y}(O^t) = y]$ denote answer accuracy. A multi-agent system exhibits the \emph{Debate Trap} if there exist rounds $t_1 < t_2$ such that $\mathrm{Acc}(t_2) \geq \mathrm{Acc}(t_1) - \epsilon$ \emph{and} $F(t_2) < F(t_1) - \delta$ for some $\epsilon, \delta > 0$.
\end{definition}

\begin{theorem}[Faithfulness Bound under Standard MAD (DPI Bound)]
\label{thm:dpi}
Under the standard MAD protocol where agent $i$'s output at round $t+1$ depends only on $O^t$ and shared parameters $\theta$, the reasoning process forms a Markov chain $E \to O^0 \to O^1 \to \cdots \to O^T$. By the Data Processing Inequality, $\mathbb{E}\!\left[I(E; O^{t+1})\right] \leq \mathbb{E}\!\left[I(E; O^t)\right]$: joint-MI faithfulness is non-increasing in expectation along a debate trajectory. The inequality is \emph{strict} whenever the round-$t{+}1$ aggregation is non-injective in $O^t$ (e.g., majority voting, summary aggregation, or any vote-aggregating protocol).
\end{theorem}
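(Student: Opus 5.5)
The plan is to make the Markov hypothesis of Theorem~\ref{thm:dpi} explicit as a structural equation, apply the Data Processing Inequality round by round, and then read off the equality case to handle strictness. I would fix a claim $c$ with evidence $E$ and parameters $\theta$, and model standard MAD as
\[
O^0 = f_\theta(E, c, U^0), \qquad O^{t+1} = g_\theta(O^t, U^{t+1}),
\]
where $g_\theta$ is the joint update for all $n$ agents. The noise variables $U^0, U^1, \ldots$ stand for sampling randomness. They are mutually independent and independent of $E$. The crucial modelling point is that $g_\theta$ has no argument $E$: this is exactly the ``closed system'' clause.

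\textbf{Step 1 (Markov property).} Conditional on $O^t$, the next output $O^{t+1}$ is a function of $U^{t+1}$ alone. Since $U^{t+1}$ is independent of $(E, O^0, \ldots, O^t)$, we get $O^{t+1} \perp (E, O^0, \ldots, O^{t-1}) \mid O^t$. Hence $E \to O^0 \to \cdots \to O^T$ is a Markov chain.

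\textbf{Step 2 (DPI).} The chain rule gives
\[
I(E; O^t, O^{t+1}) = I(E; O^t) + I(E; O^{t+1} \mid O^t) = I(E; O^{t+1}) + I(E; O^t \mid O^{t+1}).
\]
By Step~1, $I(E; O^{t+1} \mid O^t) = 0$. Nonnegativity of conditional mutual information then yields $I(E; O^{t+1}) \le I(E; O^t)$, with a deficit of exactly $I(E; O^t \mid O^{t+1})$.

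\textbf{Step 3 (the expectation).} Mutual information is already an average over the joint law. The outer $\mathbb{E}$ in the statement I would interpret as an average over claims $c$ (and over $\theta$, if random). The inequality of Step~2 holds conditionally on each $(c, \theta)$, so it survives averaging by monotonicity of expectation. This also covers the joint-MI object $F^{\mathrm{joint}}$ singled out in Definition~\ref{def:faithfulness}. I would not try to push the result to the per-agent average $F(t)$ here; that passage is deferred to Lemma~\ref{lem:joint-avg}.

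\textbf{Step 4 (strictness), the main obstacle.} By Step~2, the inequality is strict if and only if $I(E; O^t \mid O^{t+1}) > 0$, i.e., $O^{t+1}$ fails to be a sufficient statistic for $E$ relative to $O^t$. Non-injectivity of the aggregation map alone does \emph{not} give this. For example, if majority voting merges transcripts that differ only in phrasing irrelevant to $E$, the posterior $P(E \mid O^t)$ is constant on each merged class and no information is lost.

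So I would prove strictness under a sharpened hypothesis: the deterministic aggregation $a$ inside $g_\theta$ identifies, with positive probability, two values $o \ne o'$ of $O^t$ with $a(o) = a(o')$ but $P(E \mid O^t = o) \ne P(E \mid O^t = o')$. Under this hypothesis $E$ and $O^t$ remain dependent given $O^{t+1}$, hence $I(E; O^t \mid O^{t+1}) > 0$.

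For vote aggregation specifically, I would argue that collapsing the rationales to a label discards citation content. Evidence citations are $E$-dependent by construction, since EUR varies with $E$, so the condition is met generically. The strictness clause should therefore be stated with this posterior-separating condition rather than bare non-injectivity, and I would flag this explicitly.
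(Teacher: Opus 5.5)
Your Steps 1--3 are the paper's proof made precise. The paper likewise says $O^{t+1}$ is a function of $O^t$ and $\theta$, concludes that $E \to O^0 \to \cdots \to O^T$ is Markov conditionally on $\theta$, applies the DPI, and then ``takes expectations''. That matches your reading of the outer $\mathbb{E}$ as an average over claims and $\theta$.

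Your requirement that $U^{t+1}$ be fresh and independent of $(E, U^0, \ldots, U^t)$ is genuinely needed. It is sharper than the paper's allowance of ``stochastic with shared randomness''. If a single seed $U$ were reused across rounds, the chain could fail to be Markov. Take $E, U$ independent uniform bits, $O^0 = E \oplus U$ and $O^1 = O^0 \oplus U = E$; then $I(E;O^0) = 0 < 1 = I(E;O^1)$.

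On strictness you depart from the paper, and you are right to. The paper's proof simply restates that the inequality is strict for non-injective aggregation and gives no argument. By your Step 2 the gap is exactly $I(E;O^t \mid O^{t+1})$. This is zero whenever the merged outputs induce the same posterior on $E$, and trivially so whenever $I(E;O^t) = 0$, where both sides vanish for any aggregation.

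The paper's own quantitative-gap display supports your reading. For deterministic $f$, its expected KL term equals $I(E;O^t \mid f(O^t))$. That term vanishes precisely when $f(O^t)$ is sufficient for $E$, not only when $f$ is injective. So the strictness clause as stated is false, and your posterior-separating hypothesis is the correct replacement.

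Three points would make Step 4 airtight:
\begin{itemize}
\item State that the whole update depends on $O^t$ only through $a(O^t)$. Otherwise $g_\theta$ could pass $O^t$ along next to the vote and lose nothing. With this assumption, $I(E;O^{t+1}) \le I(E;a(O^t)) = I(E;O^t) - I(E;O^t \mid a(O^t)) < I(E;O^t)$.
\item Require both merged values to carry positive probability, or a positive-measure set of such pairs for continuous outputs.
\item Present the claim that citations make vote aggregation posterior-separating ``generically'' as a heuristic, not a proof.
\end{itemize}
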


\noindent The proof follows from DPI applied to the Markov structure (Appendix~\ref{appendix:proofs}; Markov-chain visualization Figure~\ref{fig:dpi_markov}, Appendix~\ref{appendix:visual}). When agents share $\theta$ and the system is informationally closed, debate redistributes beliefs but does not inject new information about $E$. Read through the information bottleneck principle, closed-system multi-agent reasoning compresses representations through a bottleneck at each aggregation step. Three concurrent theoretical lines---\citet{choi2025debatevote}'s belief martingale, \citet{liu2026martingale}'s AceMAD generalization, \citet{zhu2026demystifying}'s confidence-modulated debate---establish the parallel result for accuracy; Theorem~\ref{thm:dpi} extends a complementary framework to faithfulness.

\paragraph{Generalization to closed-system reasoning chains.} The Markov property holds whenever a protocol satisfies four conditions: (i)~shared $\theta$ across steps, (ii)~$E$ provided once at $t{=}0$ without re-injection, (iii)~step $t{+}1$ depends only on step $t$ and $\theta$, (iv)~symmetric aggregation. This suggests a \emph{conditional extension} beyond MAD rather than an empirical claim across paradigms: closed-loop readings of single-agent Chain-of-Thought, Reflexion-style self-critique \citep{shinn2023reflexion}, and linear Tree-of-Thought may instantiate the same structural risk when they repeatedly condition on prior reasoning traces without re-injecting source evidence. Consistent with this conditional reading, single-agent CoT modeled as a token-Markov process \citep{hao2025markovcot} exhibits empirical unfaithfulness rates of $0.04\%$--$13\%$ across seven frontier models \citep{arcuschin2025wild,young2026liecot}; the present paper's empirical core, however, is MAD, and the broader claim is a falsifiable research program rather than a tested benchmark. Self-Consistency \citep{wang2023selfconsistency} (independent sampling) and Mixture-of-Experts (routed experts maintain distinct $\theta$) are explicitly excluded (full four-condition verification across these five-plus-two paradigms in Appendix~\ref{appendix:5paradigm_matrix}, Figure~\ref{fig:d1_5paradigm}). Figure~\ref{fig:5paradigm_inline} summarises the four-condition verification across the five-plus-two paradigms in inline form for reader convenience.

\begin{figure}[H]
\centering
\includegraphics[width=0.92\textwidth]{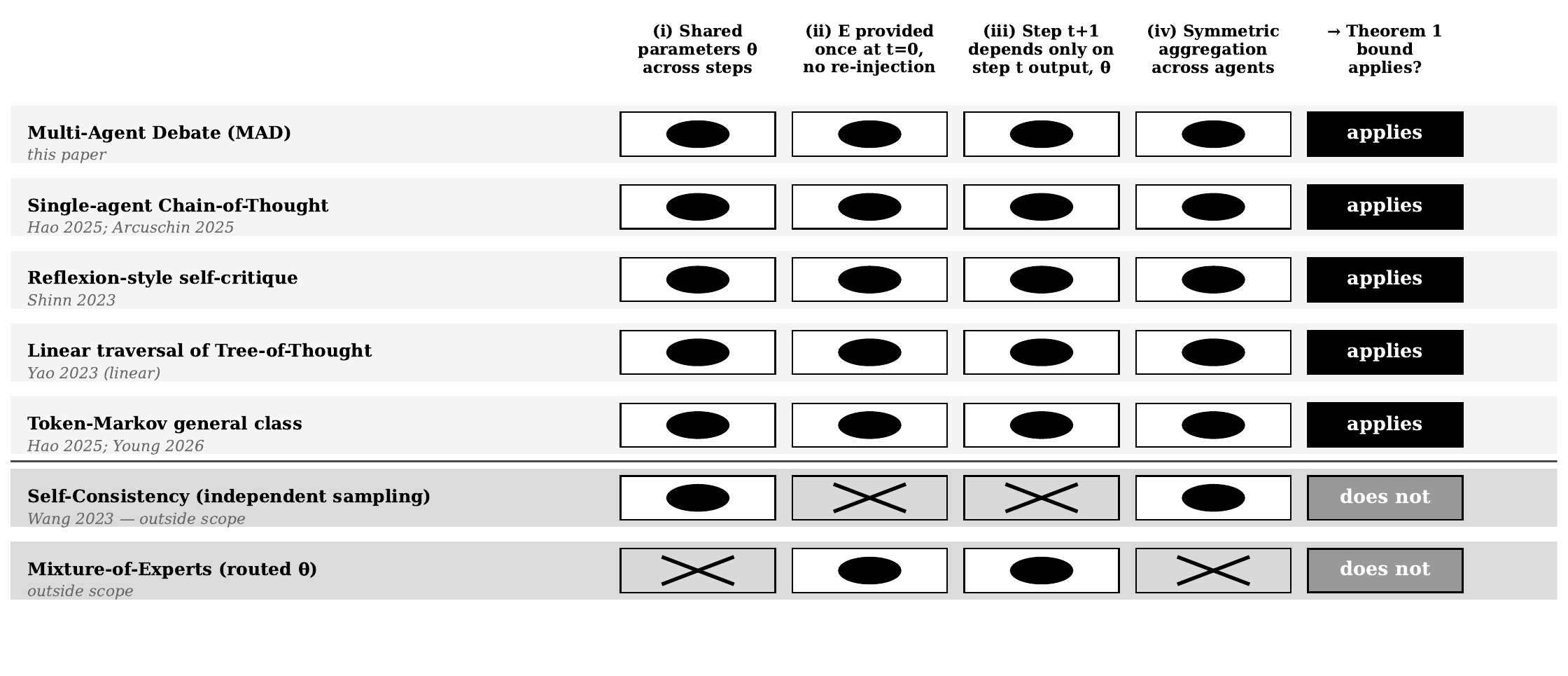}
\caption{\textbf{Theorem~\ref{thm:dpi}'s scope across seven reasoning paradigms.} Each row is a paradigm; columns are the four conditions of Theorem~\ref{thm:dpi}. Five paradigms (multi-agent debate, single-agent CoT under token-Markov reading, Reflexion-style self-critique, linear traversals of Tree-of-Thought, the broader token-Markov class) satisfy all four conditions and inherit the DPI bound. Two paradigms (Self-Consistency, Mixture-of-Experts) violate condition (iii) (independent sampling) or (i) (distinct $\theta$) and fall outside the scope.}
\label{fig:5paradigm_inline}
\end{figure}

\paragraph{Two practical extensions and a long-context prediction.} Two practical extensions of the bound deserve explicit treatment. \emph{Long-context reasoning systems} (e.g., OpenAI's o1, o3 and similar test-time-compute models that generate extended internal CoT before answering) tighten condition (ii) as internal CoT lengthens without re-injecting external evidence. Each additional internal reasoning step is a further application of the data processing inequality on the closed chain $E \to O^0 \to O^1 \to \cdots \to O^T$; longer chains \emph{sharpen} rather than escape the bound, because the only information about $E$ available at step $T$ is what survived the cascade of internal transformations. The long-context-reasoning trend therefore amplifies, rather than mitigates, the Reasoning Trap: more internal computation is more compression, not more grounding. \emph{Heterogeneous-agent settings} (e.g., GPT-4 with Claude in the same debate) partially relax condition (i): the chain is no longer governed by a single shared $\theta$ but by a sequence of distinct $\theta_1, \theta_2, \ldots, \theta_T$. The mutual information bound weakens but does not vanish: as long as the system remains informationally closed (no external evidence re-injection), the joint MI cannot exceed $I(E; O^0)$ regardless of how many distinct $\theta$ are interleaved. Heterogeneity changes the speed of erosion; it does not change the direction. Theorem~\ref{thm:dpi}'s scope is \emph{closed-system} MAD; \emph{evidence-reinjected} MAD variants whose prompts re-include $E$ at every round violate condition~(ii) and fall outside the bound (Corollary~\ref{cor:egsr-breaks}). EGSR is the constructive instance of the evidence-reinjected regime (\S\ref{sec:egsr}). The structure can be read as an LLM formalisation within \citet{degroot1974reaching}'s consensus and \citet{banerjee1992simple,bikhchandani1992theory}'s sequential-cascade lineage. We use \emph{Reasoning Trap} for the broader failure mode, \emph{Debate Trap} for its multi-agent instantiation.

\paragraph{Companion results.} Three companion formal results (Appendix~\ref{appendix:proofs}) complete the partition of multi-step LLM reasoning protocols. \emph{(i) Faithfulness Recovery under Open-System Information Accumulation} (Theorem~\ref{thm:recovery}): if at each round the protocol produces $O^{t+1} = (O^t, A^{t+1})$ with $A^{t+1}$ generated with direct access to $E$, then $I(E; O^{t+1}) \geq I(E; O^t)$ and $F(t)$ is a sub-martingale. \emph{(ii) EGSR information accumulation} (Lemma~\ref{lem:egsr-accum}): EGSR's running-aggregate verdict update with externally retrieved evidence satisfies the conditions of Theorem~\ref{thm:recovery}, making $F(t)$ a sub-martingale for the protocol; the same update structure breaks the Markov closure of Theorem~\ref{thm:dpi} by re-accessing $E$ each round (Corollary~\ref{cor:egsr-breaks}), so $I(E; O^{t+1})$ is no longer upper-bounded by $I(E; O^t)$. \emph{(iii) Vote-Aggregation Floor} (Proposition~\ref{prop:vote-floor}): under any protocol reducing $O^T$ to a vote tally $v$ in space $V_K$, $\mathrm{SFS}(O^T) \to 0$ since $I(E; O^T) \leq \log_2 K$ regardless of $H(E)$, explaining C15's $0.006$ floor (1.7\% of baseline) in \S\ref{sec:results}. Theorem~\ref{thm:dpi}, Theorem~\ref{thm:recovery}, Lemma~\ref{lem:egsr-accum}, and Proposition~\ref{prop:vote-floor} together partition the space of multi-step LLM reasoning protocols by their information-theoretic relationship to $E$.

\paragraph{A unified information-capacity picture.} The four formal results admit a unified reading in the language of channel capacity \citep{cover2006elements}. Closed-system MAD (Theorem~\ref{thm:dpi}) imposes the bound $I(E; O^T) \leq I(E; O^0) \leq H(E)$ along the chain, with strict inequality at every non-injective aggregation step. Vote-aggregating MAD (Proposition~\ref{prop:vote-floor}) tightens the bound further to $I(E; O^T) \leq \log_2 K$, independent of $H(E)$---a structural floor at the information capacity of the $K$-way verdict alphabet. Open-system EGSR (Theorem~\ref{thm:recovery}) replaces the closed channel with a sub-martingale that, by Doob's almost-sure martingale convergence theorem \citep{doob1953stochastic}, converges to a limit $F_\infty \leq H(E)$ on every realization of the trajectory. The three regimes therefore correspond to three distinct asymptotic capacities: \emph{closed-system} (bounded above by $H(E)$, monotonically erodes), \emph{vote-aggregating} (collapses to $\log_2 K$ regardless of $H(E)$), and \emph{open-system EGSR} (climbs towards $H(E)$ along a sub-martingale). The Information Bottleneck principle of \citet{tishby2000ib} is the natural lens: closed-system MAD instantiates the IB compression objective without the prediction-side relaxation that allows IB to balance representation cost against task utility, and EGSR's external evidence injection re-supplies the prediction-side information IB takes for granted.

\paragraph{Estimation guarantee.} The SFS estimator $\widehat{\mathrm{SFS}}(O) = \tfrac{1}{N}\sum_i s_i$ over $N$ atomic claims with $s_i \in [0,1]$ concentrates at exponential rate. By Hoeffding's inequality \citep{hoeffding1963probability}, for any $\epsilon > 0$,
\begin{equation}
\Pr\!\left[\,|\widehat{\mathrm{SFS}}(O) - \mathbb{E}[\mathrm{SFS}(O)]| > \epsilon\,\right] \leq 2\exp(-2N\epsilon^2).
\end{equation}
For the design used throughout (paired comparisons across $N{=}300$ SciFact trials per condition), choosing $\epsilon = 0.10$ gives a tail bound below $5 \times 10^{-3}$; choosing $\epsilon = 0.05$ gives a tail bound near $0.5$. The $N{=}300$ design therefore supplies tight finite-sample guarantees for condition-level SFS gaps of $\geq 0.10$ (which include all three Trap tiers: C4 vs.\ C1 $|\Delta\mathrm{SFS}| = 0.135$; C13 vs.\ C1 $|\Delta\mathrm{SFS}| = 0.149$; C15 vs.\ C1 $|\Delta\mathrm{SFS}| = 0.343$).

\paragraph{Quantitative gap of Theorem~\ref{thm:dpi}.} The strict-inequality clause of Theorem~\ref{thm:dpi} admits a quantitative formula via the KL-divergence representation of conditional information loss \citep{cover2006elements}. Writing $f$ for the round-$t{+}1$ aggregation step, the per-step gap is bounded below by the divergence between the conditional distribution of $O^t$ given $(E, f(O^t))$ and given $f(O^t)$ alone:
\begin{equation}
I(E; O^t) - I(E; O^{t+1}) \;\geq\; \mathbb{E}_{E, f(O^t)}\!\left[\,D_{\mathrm{KL}}\!\left(P(O^t \mid E, f(O^t)) \,\Vert\, P(O^t \mid f(O^t))\right)\,\right].
\end{equation}
The bound vanishes only when $f$ is injective in $O^t$ given $E$ (the Markov chain becomes degenerate, equality in DPI). For majority-vote aggregation $f: \{0,1\}^n \to \{0,1\}$, the conditional divergence is strictly positive whenever the round-$t$ outputs $O^t$ are not deterministic given the vote---that is, on almost any non-trivial debate input. The Vote-Aggregation Floor (Proposition~\ref{prop:vote-floor}) is the asymptotic limit of this per-step gap accumulated over rounds.

\paragraph{Optional-stopping interpretation of EGSR's gate.} Algorithm~\ref{alg:egsr} terminates EGSR at the round when the Checker's gate threshold $\tau$ is met. By Doob's optional stopping theorem \citep{doob1953stochastic} applied to the sub-martingale $F(t)$ of Theorem~\ref{thm:recovery}, for any almost-surely-finite stopping time $T_\tau$ adapted to the natural filtration of the EGSR trajectory,
\begin{equation}
\mathbb{E}\!\left[F(T_\tau)\right] \;\geq\; \mathbb{E}\!\left[F(0)\right].
\end{equation}
That is, terminating EGSR at any threshold-based stopping rule preserves the expected faithfulness gain established by Theorem~\ref{thm:recovery}: an early stop does not lose the recovery, and a late stop cannot decrease it in expectation. The deployment-time gate threshold of Algorithm~\ref{alg:egsr} is therefore not merely a heuristic but a stopping rule whose expected output is bounded below by initial faithfulness.

\paragraph{Architectural-limit parallel to compositionality results.} Theorem~\ref{thm:dpi} stands in formal parallel to \citet{dziri2023compositionality}'s compositionality limit (their Proposition 4.1): where Dziri et~al.\ establish that the probability of correct prediction on compositional tasks converges exponentially to chance under depth scaling, Theorem~\ref{thm:dpi} establishes that joint mutual information about $E$ is non-increasing in expectation under round scaling. Both results identify \emph{architectural} rather than \emph{contingent} limits---they apply to any model satisfying the closure conditions, not to specific training recipes or model sizes---and both situate the failure mode in the structure of the inference chain rather than in the model's parameters. The two results are jointly constraining: a closed-system multi-agent system inherits both the per-step compositional fragility (Dziri) and the per-round mutual-information erosion (Theorem~\ref{thm:dpi}).

\paragraph{$f$-Divergence generalisation of Theorem~\ref{thm:dpi}.} The DPI bound is not a property of mutual information specifically; by Csisz{\'a}r's $f$-divergence DPI \citep{csiszar1967fdiv}, it holds for every $f$-divergence with convex generator. Recall that for a convex $f$ with $f(1)=0$, the $f$-divergence is $D_f(P\Vert Q) = \mathbb{E}_Q[f(\mathrm{d}P/\mathrm{d}Q)]$, and that mutual information is the $f$-divergence with $f(t)=t\log t$ between the joint distribution and the product of marginals: $I(E;O^t) = D_{\mathrm{KL}}(P_{E,O^t}\Vert P_E\!\otimes\!P_{O^t})$. Csisz{\'a}r's theorem states $D_f(P_X\Vert Q_X) \geq D_f(P_{f(X)}\Vert Q_{f(X)})$ for any (possibly stochastic) channel applied to $X$. Applied to the closed-system Markov chain $E \to O^0 \to O^1 \to \cdots$ of Theorem~\ref{thm:dpi}, the same chain inequality holds for every $f$-divergence:
\begin{equation}
D_f\!\left(P_{E,O^{t+1}} \,\big\Vert\, P_E\!\otimes\!P_{O^{t+1}}\right) \;\leq\; D_f\!\left(P_{E,O^t} \,\big\Vert\, P_E\!\otimes\!P_{O^t}\right).
\end{equation}
Theorem~\ref{thm:dpi} is therefore one member of a family: any faithfulness measure expressed as an $f$-divergence between the joint $(E,O^t)$ distribution and its independent product---R{\'e}nyi divergence ($f(t) = (t^\alpha-1)/(\alpha-1)$), $\chi^2$-divergence ($f(t) = (t-1)^2$), total-variation distance ($f(t) = |t-1|/2$), Hellinger distance ($f(t) = (\sqrt{t}-1)^2$)---inherits the same monotone-non-increase under closed-system iteration. Practitioners who prefer total-variation or $\chi^2$ formulations of faithfulness for computational reasons are not exempt from the bound: the same architectural structure produces the same erosion in every member of the $f$-divergence family.

\section{Method: SFS and EGSR}
\label{sec:method}

\subsection{Supported Faithfulness Score (SFS)}
\label{sec:sfs}

SFS measures how well an agent's reasoning is grounded in the evidence it was given. The construction is deliberately conservative: a claim earns SFS mass only when it is \emph{both} semantically aligned with and logically entailed by some passage in the externally provided evidence set $E$. The score is computed in three stages.

\textbf{Stage 1: Atomic decomposition.} Given reasoning text $O$, a decomposer $\phi$ extracts atomic claims $\{c_1, \ldots, c_N\}$, where each $c_i$ is an independently verifiable assertion. ``Independently verifiable'' is operationalised in two ways: (a) the claim contains a single subject--predicate relation, and (b) its truth can in principle be evaluated without reference to other claims in $O$. Compound claims (``$X$ because $Y$'') are split into their components; claims about reasoning style or methodology, which cannot be checked against $E$, are excluded. We instantiate $\phi$ with GPT-4o using structured JSON output; the cross-decomposer robustness of the resulting rankings is established empirically by R4--R5 (Spearman $\rho = 1.0$ across decomposers, even though atomic-claim sets achieve only moderate soft-Jaccard overlap of 0.49). The condition-level robustness without claim-level identity is itself an A2 axiom result: SFS rankings depend on the evidence-grounding mass of $O$, not on the particular decomposition used.

\textbf{Stage 2: Evidence verification.} Each claim $c_i$ is compared against every passage $e \in E$ via two complementary signals:
\begin{equation}
s_i = \max_{e \in E}\; \mathrm{sim}(c_i, e) \cdot \mathrm{verified}(c_i, e)
\end{equation}
where $\mathrm{sim}(\cdot,\cdot)$ is sentence-BERT cosine similarity using the all-MiniLM-L6-v2 model \citep{reimers2019sbert} and $\mathrm{verified}(c_i, e) \in \{0,1\}$ is a DeBERTa-v3-large NLI entailment indicator \citep{he2021deberta}. The \emph{product} structure is critical: it requires both signals to be active. A claim that is semantically similar to evidence but not entailed (e.g., a paraphrase that flips the polarity) scores zero on the verification gate; a claim that is entailed by evidence the model never had access to scores zero on the similarity gate. The maximisation is over passages, so a claim earns its highest possible support-mass from the single best-matching evidence span---we do not require corroboration from multiple passages, since over-reliance on consensus would penalise legitimate single-source claims.

\textbf{Stage 3: Aggregation.} The final score is the uniform mean across atomic claims:
\begin{equation}
\mathrm{SFS}(O) = \frac{1}{N}\sum_{i=1}^N s_i \in [0,1].
\end{equation}
SFS extends naturally to round-level faithfulness, $F(t) = \mathrm{SFS}(O^t)$, by computing SFS on the reasoning trace produced after debate round $t$. Round-level SFS is what Theorem~\ref{thm:dpi} bounds along the Markov chain $E \to O^0 \to O^1 \to \cdots$. Uniform weighting (rather than length- or confidence-weighted aggregation) is a deliberate choice consistent with the A1 (claim-level granularity) and A4 (support-mass monotonicity) axioms: longer reasoning that contains more grounded claims should not automatically receive higher SFS than tightly worded reasoning of equivalent grounding density.

\textbf{Seven design properties (A1--A7).} SFS is constructed to satisfy seven axioms, formally stated in Appendix~\ref{appendix:axioms} and exercised by R1--R8: (A1) claim-level granularity (the score is a function of atomic claims, not surface text); (A2) decomposer-invariance of rankings (robust to choice of $\phi$); (A3) evidence-set sensitivity ($\mathrm{SFS}(O; E) \ne \mathrm{SFS}(O; E')$ when $E \ne E'$, and the difference reflects which claims become groundable); (A4) support-mass monotonicity (adding entailed claims weakly increases SFS); (A5) fabrication penalty (claims unentailed by any $e \in E$ contribute zero); (A6) reproducibility (deterministic given $\phi$, the encoder, and the NLI model); and (A7) same-verdict-different-SFS validity (two reasoning traces reaching the same final answer can---and empirically do---receive substantially different SFS, with $\Delta > 0.05$ on 70\% of paired examples). A7 is the falsifiability anchor: SFS is genuinely \emph{process-level} measurement, not a re-skin of accuracy.

\subsection{Evidence-Grounded Socratic Reasoning (EGSR)}
\label{sec:egsr}

\textbf{Four design requirements.} EGSR's architecture follows from four convergent requirements emerging from prior research and from Theorem~\ref{thm:dpi} itself: (i) breaking the closed-system Markov chain requires injecting an \emph{external signal} that is not derivable from prior reasoning trace---this is what the DPI bound forbids in any closed-system protocol, and the requirement for extra-textual grounding articulated by \citet{bisk2020experience} ($\!$``modeling lexical co-occurrence, no matter the scale, is still modeling the written world'')---the same closure that motivates Theorem~\ref{thm:dpi} also makes external evidence the only available escape; (ii) resisting the scalability paradox \citep{shen2025faithcot}, in which more reasoning produces less grounded reasoning, requires \emph{claim-level decomposition with external verification} (otherwise the model is free to elaborate plausibly without grounding); (iii) deactivating sycophancy and identity-bias amplification \citep{kim2025evaluator,yao2025peacemaker,choi2025identity} requires \emph{evaluative framing rather than persuasive framing}---debate prompts agents to win, while EGSR prompts them to verify; and (iv) Socratic questioning, which prior work has shown to be a productive reasoning structure \citep{qi2023socratic}, becomes faithfulness-preserving \emph{only when its questions are grounded in evidence external to the model}, otherwise it instantiates the same closed-system loop in dialogue form.

\textbf{Three-role architecture (Hypothesis-Free Socratic Verification, HFSV).} EGSR replaces adversarial argumentation among $K$ symmetric agents with an asymmetric three-role inquiry grounded in external evidence. The \textbf{Debater} produces initial reasoning $O^0$ given claim $c$ and evidence $E$. The \textbf{Questioner} examines $O^0$ and generates evidence-grounded sub-questions targeting claims that appear underspecified, weakly grounded, or potentially unsupported---each sub-question must reference at least one passage in $E$ that is relevant to the candidate weakness. The \textbf{Checker} attempts to answer each sub-question by re-consulting $E$, then \emph{gates progression}: if the sub-answer reveals an unsupported claim, the trace is updated to remove or qualify the claim; if the sub-answer confirms grounding, the claim is retained. Critically, $O^{t+1}$ is built by composing $O^t$ with the verified Checker outputs, not by re-asking the Debater for a new round of free-form argument. This composition step is the formal mechanism through which EGSR's $F(t)$ becomes a sub-martingale---each round either preserves or strictly increases evidence-grounded mass.

\textbf{Why this breaks Theorem~\ref{thm:dpi}.} EGSR's verification anchor is \emph{external to all agents}: the Checker re-references $E$ at each round rather than conditioning only on prior agent outputs. This violates condition (ii) of Theorem~\ref{thm:dpi} (``$E$ provided once at $t=0$, no re-injection''), and the chain $E \to O^0 \to O^1 \to \cdots$ is no longer Markov in the sense the DPI bound requires. By Lemma~\ref{lem:egsr-accum}, the running-aggregate update structure satisfies the information-accumulating property of Theorem~\ref{thm:recovery}, making $F(t)$ a sub-martingale for the protocol---this is the formal route to the empirical recovery shown in \S\ref{sec:results} (98\% of baseline, with C8 EGSR variants clustering at SFS $\in [0.32, 0.34]$ across architectural variants C9, C12, C14). Unlike Reflexion \citep{shinn2023reflexion} or self-consistency \citep{wang2023selfconsistency}, EGSR's external anchor is a structural property of the protocol, not an emergent property of sampling many trajectories: the recovery does not depend on $K \to \infty$ averaging but on each individual round respecting the external-injection condition.

Algorithm~\ref{alg:egsr} (Appendix~\ref{appendix:algorithm}) provides the full pseudocode, including termination conditions and the gate threshold $\tau$ governing when the Checker accepts a sub-answer as confirming grounding. An architecture diagram contrasting EGSR's open-system structure with standard MAD's closed-system loop is Figure~\ref{fig:egsr_arch} (Appendix~\ref{appendix:algorithm}); the closed-vs-open comparison is Figure~\ref{fig:closed_vs_open} (Appendix~\ref{appendix:algorithm}). The EGSR family variants tested in \S\ref{sec:results}---Parallel EGSR (C9, parallel sub-question generation rather than sequential), EGSR+SLM (C12, small language model substituted for the Checker role to reduce cost), EGSR+Adversarial (C14, adversarial evidence injected into $E$ to test robustness)---collectively probe the sensitivity of the recovery to architectural choices and confirm that the open-system / external-injection property, not any specific implementation choice, is what drives faithfulness recovery.

\section{Experiments}
\label{sec:experiments}

\paragraph{Datasets.} The primary analysis is on SciFact \citep{wadden2020scifact}, a fact-verification benchmark of scientific claims paired with abstract-level evidence drawn from biomedical literature; we use the standard 300-claim evaluation split. Cross-domain replication uses FEVER \citep{thorne2018fever}, a Wikipedia-based fact-verification benchmark; we use a 1{,}000-claim sample (sampling protocol and retained sample IDs are released with the artefact bundle). Both datasets are public, used in their original form within the licenses they distribute under, and were chosen for their differing surface characteristics: SciFact provides genuinely difficult specialised content where parametric memory is least likely to substitute for evidence-grounded reasoning, while FEVER provides a higher-resource Wikipedia setting where the Debate Trap might plausibly disappear if it were a SciFact-specific artefact. The replication confirms it is not.

\paragraph{Sixteen conditions.} The 16 conditions span five families designed to disentangle the architectural choices that drive faithfulness behaviour. \emph{Baselines}: zero-shot (C1), retrieval-augmented generation (C2, RAG), and the Socratic chain-of-thought baseline (C3, SocCoT). \emph{Debate variants}: SocraSynth (C4), Debate+RAG (C6), Vanilla MAD (C15), DebateCV (C13, the Trap-proper conformity-vote variant), and unilateral debate (C16, UDPO). \emph{EGSR variants}: core EGSR (C8), Parallel EGSR (C9), EGSR+SLM (C12), and EGSR+Adversarial (C14). \emph{Competitors and ablations}: Internal Socratic (C10, the closed-system Socratic counterpart that isolates the external-injection mechanism), SSR (C11), and Pipeline-noSLM (C7). The total is $300 \times 16 = 4{,}800$ SciFact trials and $1{,}000 \times 16 = 16{,}000$ FEVER trials. Self-Consistency \citep{wang2023selfconsistency} is excluded on theoretical grounds---it is not a Markov chain in the sense Theorem~\ref{thm:dpi} requires (independent sampling rather than sequential conditioning) and the bound therefore does not apply to it; including it would obscure the architectural variable rather than illuminate it (formal four-condition verification: Appendix~\ref{appendix:5paradigm_matrix}, Figure~\ref{fig:d1_5paradigm}). Per-condition prompts, agent counts, round limits, and aggregation rules are reproduced in Appendix~\ref{appendix:conditions}.

\paragraph{Models and architectures.} GPT-4o is the primary model for all 16 conditions on both datasets, providing the within-model architectural comparison. Claude-3.5-Sonnet is used for cross-architecture replication on the Trap-defining conditions (C1, C4, C13, C15, C8) to test whether the phenomenon is specific to GPT-4o or generalises across model families with different training distributions. The LLM-as-auditor probe (Appendix~\ref{appendix:theorem2}) additionally evaluates six frontier models on a 15-item stratified sample of the R6 cohort to test whether automated auditors detect what human auditors did not. Within each condition, model temperature, prompt structure, and round count are fixed; varying these is the explicit purpose of the R1--R8 robustness suite below.

\paragraph{Hypotheses (pre-registered) and statistical control.} The pre-registered hypotheses partition into three groups. The \emph{primary EGSR-debate family} is H1 (debate degrades faithfulness, $\mathrm{C4} < \mathrm{C1}$ on SFS), H2 (evidence utilisation rate drops in debate), H4 (external evidence outperforms internal Socratic, the pre-registered counterfactual $\mathrm{C8} > \mathrm{C10}$ that isolates the external-injection mechanism from any general benefit of Socratic prompting), and H5--H9 (EGSR variant robustness across architectures). \emph{Reported separately}: H10 (small-language-model accuracy parity, descriptive) and H3 (SFS--human Spearman correlation $r > 0.70$, originally pre-registered but rendered inconclusive by the R6 triple failure of human reliability and reported descriptively rather than as a confirmation test). \emph{Family-wise error rate} is controlled by Holm--Bonferroni at $\alpha = 0.05$ over the primary EGSR-debate family (H1, H2, H4--H9), giving an adjusted threshold of $\alpha^* = 0.0100$. Effect sizes are reported as Cohen's $d$ with bootstrap 95\% confidence intervals (5{,}000 resamples). For paired-trial comparisons, we use the Wilcoxon signed-rank test rather than the paired $t$-test, since SFS distributions are heavily skewed. Inter-rater agreement uses Fleiss $\kappa$ for Likert-5 ratings, Cohen's $\kappa$ for binary flags, and quadratic-weighted $\kappa$ for the ordinal interpretation under \citet{landis1977measurement}'s thresholds. For $n{=}300$ paired SciFact trials, the design provides power $> 0.99$ to detect $|d| \geq 0.3$ at $\alpha = 0.05$---a regime in which the observed C4--C1 effect ($d = -1.12$) is several multiples of the minimum detectable.

\paragraph{R1--R5: SFS construct robustness.} R1 varies the similarity function (word-overlap Jaccard, sentence-BERT cosine, NLI-only) to test whether condition-level rankings depend on a particular similarity primitive. R2 varies output verbosity (terse, standard, verbose) to test whether longer reasoning artificially inflates SFS. R3 varies the NLI gate threshold $\tau \in \{0.3, 0.5, 0.7, 0.9\}$ to test whether the gating decision is brittle. R4 varies the decomposer (GPT-4o vs.\ Claude-3.5-Sonnet) to test whether atomic-claim extraction style determines the score. R5 measures cross-decomposer agreement (mean soft-Jaccard over 150 reasoning texts) and condition-level rank correlation (Spearman $\rho$). Together R1--R5 establish that SFS \emph{rankings} are decomposer- and similarity-invariant ($\rho = 1.0$) even when atomic-claim sets achieve only moderate overlap (0.49 soft-Jaccard)---SFS measures grounding mass, not surface similarity.

\paragraph{R6: cross-language test-retest of human raters.} R6 is the human-evaluation backbone of the paper and is designed deliberately to surface a failure mode prior work has not measured. \emph{Korean cohort}: ten independent raters each evaluated 30 FEVER items on a Likert-5 faithfulness scale ($Q_1$) and a binary unsupported-claim flag ($Q_2$) under the rubric calibrated against \citeauthor{landis1977measurement}'s interpretation table. \emph{English cohort}: three independent raters each evaluated 200 SciFact items under the same rubric. \emph{Cross-cohort design}: two raters voluntarily completed both cohorts, providing the within-rater test-retest data on which the triple-failure analysis rests. The cross-language design is the methodological innovation: it tests not only between-rater agreement (the standard target in the L3 lineage of \citealp{min2023factscore, wei2024safe, song2024veriscore, akbar2024hallumeasure}) but also intra-rater stability across language and domain---a cross-context test-retest that, to our knowledge, no prior faithfulness-metric calibration has reported. The cohort design, rubric, per-rater $Q_1$ means, $Q_2$ Y-rates, and full pairwise Cohen's $\kappa$ matrix are reproduced in Appendix~\ref{appendix:r6} (and the heat map in Appendix~\ref{appendix:kappa_matrix}, Figure~\ref{fig:kappa_heatmap}).

\paragraph{R7--R8: axiom and metric comparison.} R7 tests axiom A7 (same-verdict, different-SFS) directly: paired $(c, O)$ examples receiving identical verdicts but produced under different conditions are scored, and we measure the fraction with $|\Delta \mathrm{SFS}| > 0.05$. The result (70\%) is the falsifiability anchor for SFS being a process-level rather than verdict-level metric. R8 contrasts SFS with FActScore \citep{min2023factscore} on the same trials. The two metrics correlate at $r = 0.61$ but \emph{diverge in their condition-level ranking}: FActScore ranks debate above EGSR (C4 FActScore = 0.59, C1 FActScore = 0.32), because debate produces fluent, factually precise text whose individual claims are atomically true even when their reasoning chain is ungrounded in $E$. Factuality is not faithfulness: the contribution of R8 is to make this distinction quantitative rather than only conceptual.

\paragraph{Reproducibility, ethics, and artefact release.} Fixed random seeds are used throughout, and the deterministic NLI gating threshold ($\tau = 0.7$ except where varied in R3) is held constant across conditions. Prompt templates are fixed per condition; we do not perform per-trial prompt engineering. The full reproduction artefact---raw experimental outputs (4{,}800 SciFact + 16{,}000 FEVER trials), per-condition compute and API-cost breakdowns, anonymized R6 ratings, and analysis scripts---is released as the OSF companion artifact (DOI: \href{https://doi.org/10.17605/OSF.IO/P75XG}{10.17605/OSF.IO/P75XG}, CC~BY~4.0; \url{https://osf.io/p75xg/}); reproduction code (random seeds, prompt templates, NLI checkpoints) will be released at \url{https://github.com/seanshin0214/debate-trap} under MIT license upon acceptance. R6 raters were independently recruited and compensated; participation was voluntary, withdrawal was permitted at any point, and no personally identifiable information beyond rater-anonymisation codes (R1--R8, R-H, R-L, R-K) was retained in the analysis pipeline. The cohort design (10 Korean-cohort raters $\times$ 30 FEVER items + 3 English-cohort raters $\times$ 200 SciFact items + 2 cross-cohort raters completing both) was determined prior to data collection. Recruitment protocol, rubric, and per-rater outputs are in Appendix~\ref{appendix:r6}.

\section{Results}
\label{sec:results}

\subsection{Main Result: The Debate Trap Is Real}
\label{sec:main_results}

We report SFS as a claim-level operational diagnostic for evidence-grounded faithfulness (Definition~\ref{def:faithfulness}). SFS is not a direct estimator of $I(E; O^t)$, and we do not claim it is. SFS measures, at the level of atomic claims, how much of an agent's reasoning output is supported by the provided evidence under semantic-similarity-and-NLI verification; mutual information $I(E; O^t)$ measures the average reduction in entropy of $E$ given $O^t$. The two quantities are conceptually adjacent but operationally distinct: SFS bounds from below the proportion of the reasoning output that is evidence-grounded, while $I(E; O^t)$ bounds from above the information about $E$ that survives the chain. What our 18,430-trial corpus establishes is an \emph{empirical alignment} between the two---condition-level SFS rankings track the directionality predicted by Theorem~\ref{thm:dpi} (faithfulness non-increasing along the Markov chain) and the structural floor predicted by Proposition~\ref{prop:vote-floor} (vote-aggregation collapse to $\log_2 K$). The alignment holds across 16 conditions, two cross-architecture models (GPT-4o and Claude-3.5-Sonnet), and two cross-domain datasets (SciFact and FEVER); the per-round trajectory in Appendix~\ref{appendix:cross_model} confirms the directionality directly. We treat this empirical alignment as part of the contribution: a tractable proxy for an information-theoretic quantity whose direct estimation in language-model contexts remains an open problem (\S\ref{sec:debate_trap}). Headline numbers (full 16-condition Table~\ref{tab:full_main}, Appendix~\ref{appendix:main_table}): zero-shot baseline (C1) achieves $\mathrm{Acc} = 0.588$, $\mathrm{SFS} = 0.349$; SocraSynth debate (C4) drops to SFS $0.213$ ($-39\%$); DebateCV (C13)---the Debate Trap proper---preserves $88\%$ of baseline accuracy while losing $43\%$ of SFS; vanilla majority-vote MAD (C15) keeps $\mathrm{Acc} = 0.536$ but collapses to $\mathrm{SFS} = 0.006$ (1.7\% of baseline, the structural floor predicted by Proposition~\ref{prop:vote-floor}); EGSR (C8) recovers $\mathrm{SFS} = 0.343$ (98\% of baseline) at per-claim API cost comparable to standard MAD (3-role pipeline vs.\ 3-agent multi-round debate; Pareto in Figure~\ref{fig:d3_pareto}).

\paragraph{Three-tier degradation spectrum.} Debate degrades faithfulness on a continuum: (i) \emph{reasoning degradation} (C4, C6: 39--40\% SFS drop with comparable accuracy); (ii) \emph{the Debate Trap proper} (C13: 88\% accuracy preserved, 43\% SFS drop); (iii) \emph{reasoning elimination} (C15: SFS = 0.006). C15 confirms the structural argument: when debate is reduced to a vote tally, reasoning is replaced by a 30-character verdict; nothing remains to be faithful to the evidence. Figure~\ref{fig:acc_vs_sfs} visualizes the divergence; the three-tier spectrum across all 16 conditions is shown in Figure~\ref{fig:sfs_by_condition} (Appendix~\ref{appendix:main_table}).

\begin{figure}[H]
\centering
\includegraphics[width=0.85\textwidth]{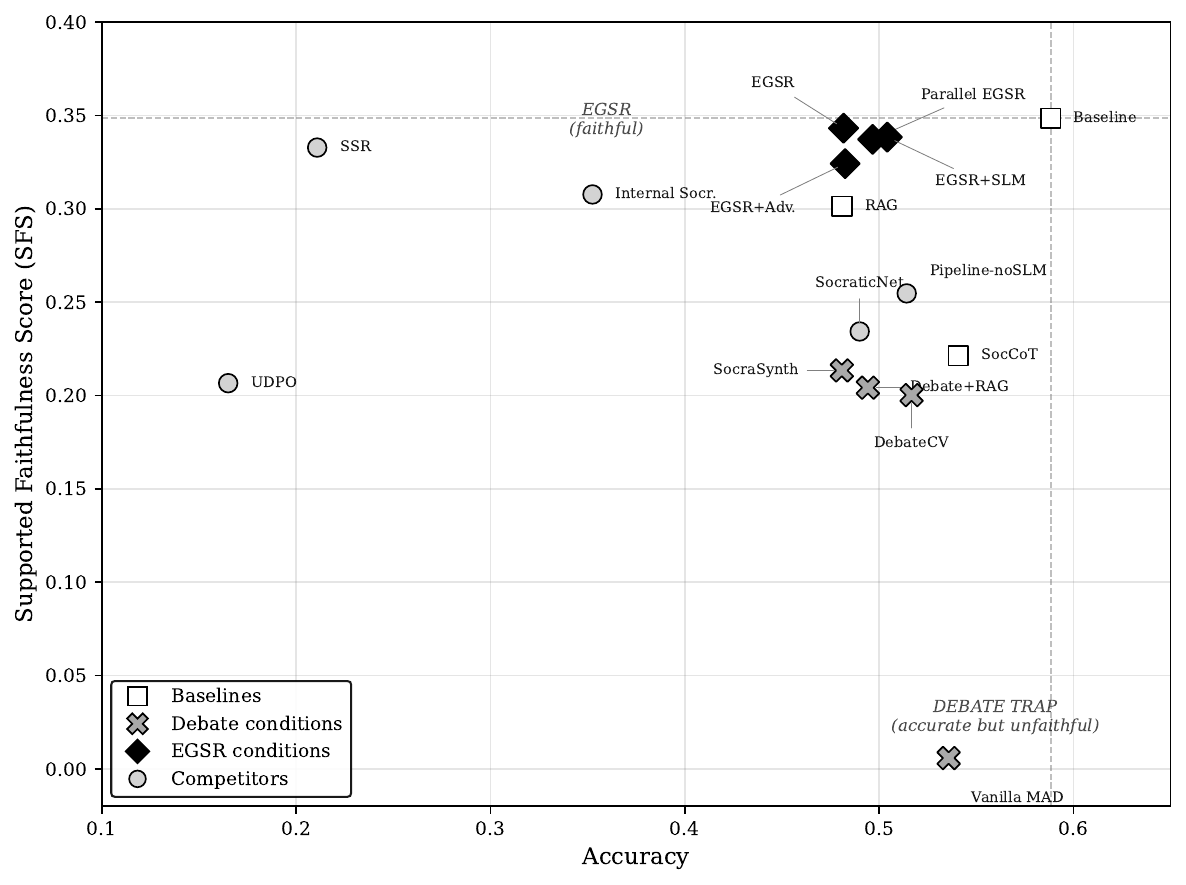}
\caption{\textbf{Accuracy vs.\ SFS scatter (16 SciFact conditions).} Debate variants (X markers) cluster in the lower-right (high accuracy, low SFS); EGSR variants (diamond markers) cluster near the baseline SFS of 0.349. The gap between the two clusters is the Debate Trap.}
\label{fig:acc_vs_sfs}
\end{figure}

\paragraph{Statistical strength and EGSR recovery.} The C15 SFS collapse is significant at $p < 10^{-6}$ (Wilcoxon, $n{=}300$), Cohen's $d = -0.96$, bootstrap 95\% CI $[-0.222, -0.193]$. C16 (UDPO) replicates. The EGSR-debate primary family (H1, H2, H4--H9) is confirmed under Holm--Bonferroni at $\alpha^* = 0.0100$, with H7 (parallel $\geq$ sequential) marginal ($p = 0.43$). EGSR variants (C8, C9, C12, C14) cluster at SFS $\in [0.32, 0.34]$, robust to architectural choices (full per-hypothesis statistics: Appendix~\ref{appendix:hypotheses}; forest plot of effect sizes: Appendix~\ref{appendix:forest_plot}, Figure~\ref{fig:d6_forest}; cost--faithfulness Pareto frontier: Appendix~\ref{appendix:pareto}, Figure~\ref{fig:d3_pareto}).

\paragraph{Mechanism validation (H4 pre-registered).} C10 vs.\ C8 directly tests Theorem~\ref{thm:dpi}'s closed-system mechanism. C10 (Internal Socratic) preserves the closed-system Markov chain (no external evidence re-injection), recovering SFS to 0.308 (88\% of baseline); C8 (EGSR) breaks the chain through external evidence injection at each verification step, recovering 0.343 (98\%). The 10-point SFS gap (Wilcoxon $p < 10^{-3}$, $d = +0.71$, H4 pre-registered) provides counterfactual evidence that closed-system structure---not debate format per se---is decisive. C14 (EGSR + adversarial evidence) is robust at 93\% of baseline.

\paragraph{Cross-model, cross-domain, and trajectory.} Claude-3.5-Sonnet replicates the C15 collapse and EGSR recovery; FEVER replication (1{,}000 claims): C15 SFS $\to 0.0004$, $0.2\%$ of baseline; H1 confirmed at $p = 0.0003$ (Appendix~\ref{appendix:fever}). The FEVER EGSR variant uses self-generated evidence (matching cost profile), reported as Socratic-self-questioning ablation rather than full open-system EGSR replication; C15/C16 collapse is independent of evidence source. Per-round trajectories empirically confirm Theorem~\ref{thm:dpi}'s prediction: C4 GPT-4o SFS declines from 0.156 (round~1) to 0.125 (round~4), with cross-model Spearman $\rho = 0.94$ on trajectory shape (Figure~\ref{fig:round_trajectory}, Appendix~\ref{appendix:cross_model}).

\subsection{SFS Construct Validity (R1--R8)}
\label{sec:validity_results}

R1--R5 collectively show SFS rankings are decomposer- and similarity-invariant: three similarity functions yield rank-correlated condition orderings (Spearman $\rho \in [0.94, 1.00]$); GPT-4o vs.\ Claude-3.5 decomposers achieve mean soft-Jaccard 0.49 across 150 reasoning texts but condition-level rankings are perfectly preserved ($\rho = 1.0$). R7 (axiom A7) shows that 70\% of claim--condition pairs sharing the same verdict receive different SFS values ($\Delta > 0.05$; Cohen's $d = 1.12$ for C1 vs.\ C4)---SFS captures process-level variation invisible to accuracy. R8 contrasts SFS and FActScore: the two correlate at $r = 0.61$, but FActScore \emph{ranks debate higher} than EGSR (C4 FActScore = 0.59, C1 FActScore = 0.32) because debate produces fluent text scoring high on factual precision while reasoning is ungrounded; factuality metrics cannot substitute for faithfulness metrics. Diagnostic metrics: EUR (C1 $= 0.62$, C4 $= 0.50$, C8 $= 0.61$, C15 $= 0.006$); RCVA (C1 $= 0.84$, C4 $= 0.60$, C8 $= 0.79$); EMC (C4 averages $0.92$ vs.\ C1's $0.71$). Full R1--R8 protocols and statistics: Appendix~\ref{appendix:robustness}.

\paragraph{R6 (Triple failure of human reliability).} Across two language-and-domain cohorts (Figure~\ref{fig:r6_triple}, Appendix~\ref{appendix:visual}), inter-rater Fleiss $\kappa$ for Likert-5 faithfulness $Q_1$ is at most $+0.018$ (English cohort indistinguishable from chance: $\bar{P}=0.222$ vs.\ $P_e=0.217$); only 4.5\% of 200 SciFact items achieved 3-rater unanimity, while 25.0\% showed maximum-spread disagreement. The two raters who completed both cohorts shifted their own means by $\Delta Q_1 = -0.80$ and $+1.40$ Likert points and $\Delta Q_2 = +28.0$ and $-47.5$ pp---intra-rater shifts exceeding between-rater variance. Figure~\ref{fig:kappa_inline} visualises the pairwise Cohen's $\kappa$ structure: 8 of the 28 R1--R8 pairs are negative; the highest pair across all 11 raters reaches only $+0.583$ (R-L vs.\ R-K, the only cross-cohort Substantial-adjacent pair); and cross-language pairs (R1--R8 vs.\ R-K) cluster near zero.

\begin{figure}[H]
\centering
\includegraphics[width=0.78\textwidth]{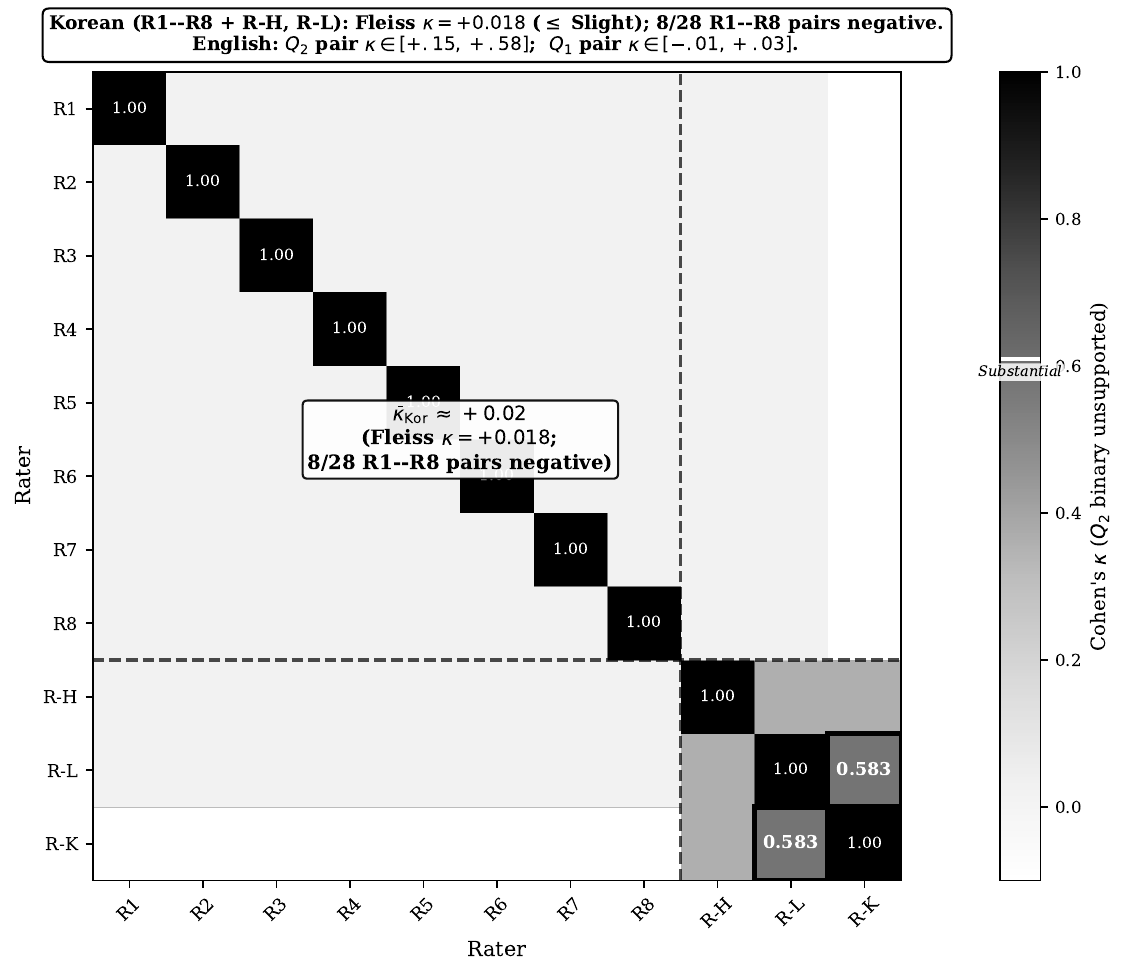}
\caption{\textbf{Pairwise Cohen's $\kappa$ matrix} for the 11 R6 raters on the binary $Q_2$ unsupported-claim flag. Korean cohort (R1--R8 plus the cross-cohort raters R-H, R-L) clusters near zero, consistent with the cohort-level Fleiss $\kappa = +0.018$. The single Substantial-adjacent pair is between two raters who completed the English cohort (R-L, R-K), suggesting that domain familiarity (single-language English SciFact) drives most of the residual agreement; cross-language pairs (R1--R8 vs.\ R-K) remain near zero. No pair reaches Substantial agreement ($\kappa > 0.61$, \citealp{landis1977measurement}).}
\label{fig:kappa_inline}
\end{figure}

The triple failure substantiates that the human signal is unstable \emph{under rating practices prevalent in the faithfulness-metric literature} \citep{min2023factscore, wei2024safe, song2024veriscore, akbar2024hallumeasure}---Likert-5 / binary rubrics, single-domain calibration, no specialised expert training, no cross-context test-retest.

\paragraph{Epistemological reading of the triple failure.} The implication is not that human judgement is worthless; it is that \emph{the rating practices currently prevalent in the faithfulness-metric literature} do not produce a stable calibration target. Three layers of instability stack on top of one another: between-rater (Fleiss $\kappa \leq +0.018$), within-rater across language (R-H and R-L shifted their own ratings by 0.8--1.4 Likert points when the language and domain changed), and between-cohort (Korean and English cohorts diverge systematically on the same items). What is destabilised is not the value of human reasoning evaluation per se, but a specific methodological assumption dominant across L3 of the faithfulness-metric lineage: that single-domain, single-language inter-rater agreement obtained under standard Likert-5 / binary rubrics constitutes a sound calibration target against which automated faithfulness metrics should be tuned. That assumption does not survive cross-context test-retest. Whether more elaborate training, expert-only cohorts, structured rubrics with explicit anchoring criteria, or longitudinal calibration protocols would yield higher agreement remains an open empirical question we do not pre-judge. What we establish here is a negative methodological result: the prevailing calibration target is itself unstable, and faithfulness metrics anchored exclusively to that target inherit the instability. SFS results should therefore be read as \emph{decomposer-invariant} operationalisations whose condition-level rankings (Spearman $\rho{=}1.0$, R4--R5) are more stable than the human signals they have historically been calibrated against---an inversion of the usual relationship between automated and human evaluation in the L3 lineage. Full per-rater statistics and the standalone larger-format $\kappa$ heatmap (Figure~\ref{fig:kappa_heatmap}): Appendix~\ref{appendix:r6}, \ref{appendix:kappa_matrix}.

\paragraph{LLM-as-auditor probe (exploratory).} On a small stratified probe of 15 R6 items where Theorem~\ref{thm:dpi} predicts unfaithfulness, the human cohort scored $0/15$; six frontier LLMs as zero-shot auditors averaged $6/15$ ($40\%$), with Llama-3-70B reaching $9/15$ ($60\%$). On identical inputs, LLM auditors recovered 6 of 15 items where the human cohort recovered none, suggesting that automated measurement may serve as a complementary type rather than a mere substitute for unstable humans (Appendix~\ref{appendix:theorem2}).

\section{Discussion}
\label{sec:discussion}

\paragraph{Why debate produces accurate but unfaithful reasoning.} The distinction is between \emph{retrieval} and \emph{justification}. RLHF-trained models often retrieve the correct verdict from parametric memory regardless of evidence; debate changes not retrieval capacity but the \emph{justification}: under peer pressure, evidence-grounded justifications are replaced with socially convergent ones. The verdict stays correct (parametric memory), the reasoning drifts (group consensus). The pattern fits \citeauthor{geirhos2020shortcut}'s shortcut-learning frame at a higher level of organisation: the \emph{system} as a whole takes the simplest path that satisfies the accuracy objective (parametric retrieval) while letting the path-cost (faithfulness) erode \citep{geirhos2020shortcut, dziri2023compositionality}. This is consistent with the three-level sycophancy model (\S\ref{sec:related_work}; cascade in Appendix~\ref{appendix:sycophancy_cascade}, Figure~\ref{fig:d7_sycophancy}; extended training/architectural/contextual mechanisms in Appendix~\ref{appendix:sycophancy_mechanism}) and with human group-dynamics patterns (\citeauthor{asch1951effects}, \citeauthor{janis1972victims}, \citeauthor{sunstein2002polarization}) that LLM agents reproduce under standard MAD \citep{yang2025conformity,wang2024llmcollective}.

\paragraph{A fifty-year theoretical lineage in two streams.} Theorem~\ref{thm:dpi}'s closed-system bound is, on the LLM side, novel; on the human side it instantiates a classical result reaching back half a century. Two streams converge. The first is \emph{deliberative consensus dynamics}: \citet{degroot1974reaching} showed that any iterated averaging of opinions among agents who exchange information only with one another converges to consensus, but the consensus value is determined by the initial weighted opinions, not by external truth. The second is \emph{informational cascades and herding}: \citet{banerjee1992simple} and \citet{bikhchandani1992theory} showed that sequential decision-makers who observe only the actions of predecessors will rationally ignore their private signals once a cascade forms, with the result that aggregate behaviour can become arbitrarily uncoupled from underlying evidence. Standard MAD instantiates both mechanisms simultaneously: copies of the same model perform iterated averaging in the DeGroot sense (since each agent samples from the same parametric distribution and conditions on shared transcripts), and they perform sequential cascading in the Banerjee sense (since later rounds see earlier outputs and tend to anchor on them). Theorem~\ref{thm:dpi} provides an information-theoretic statement of what these social-science observations have long predicted at the behavioural level: the joint mutual information $I(E; O^t)$ is non-increasing in expectation and strictly decreasing under any vote-aggregation step that compresses representations of $E$. The bound is the LLM-era statement of a result whose first half-century was about humans.

\paragraph{Human deliberation pathologies as the observable correlates.} Three programmes in social and cognitive psychology document the behavioural symptoms the bound predicts. \citeauthor{janis1972victims}'s \emph{groupthink} model identified eight symptoms---illusion of invulnerability, collective rationalization, stereotyped views of out-groups, direct pressure on dissenters, self-censorship, illusion of unanimity, mindguards, and belief in the inherent morality of the group---that emerge in cohesive deliberative bodies even when individual members hold private doubts. \citeauthor{asch1951effects}'s conformity experiments showed 75\% of subjects agreeing at least once with an obviously incorrect majority on a perceptual judgement task, with conformity scaling sharply once unanimity formed. \citet{sunstein2002polarization}'s \emph{deliberation trap} shows that group discussion among initially like-minded participants produces more extreme rather than more moderate post-discussion views, the opposite of what idealised deliberation theory predicts. The LLM-MAD literature is beginning to report agent-level analogues. \citet{yang2025conformity} document conformity behaviour in LLM agents engaged in multi-agent debate, and \citet{wang2024llmcollective} (Nature Human Behaviour) study collective-intelligence dynamics in LLM groups; both lines of work treat the human-deliberation analogy as descriptive rather than load-bearing, and neither yields an information-theoretic bound on the resulting reasoning behaviour. Theorem~\ref{thm:dpi} supplies that bound: it explains why the symptoms documented in the Janis--Asch--Sunstein triad need not depend on contingent training choices but follow from the closed-system Markov closure under shared parameters. Theorem~\ref{thm:dpi} is the theoretical companion: it explains why these symptoms are not a bug of any particular MAD implementation but a structural feature of closed-system iterated reasoning under shared parameters.

\paragraph{The diversity-prediction theorem and why MAD violates its precondition.} \citet{surowiecki2004wisdom}'s ``wisdom of crowds'' rests on three conditions---diversity, \emph{independence}, and decentralisation. \citet{hongpage2004diverse}'s diversity-prediction theorem makes the second condition formal: collective error equals average individual error minus group diversity, where diversity is measured as the variance of individual predictions around the group mean. Under independence, diversity is large and the collective outperforms its members; under correlation, diversity collapses and the collective inherits its members' errors. Standard MAD violates the independence condition by construction: copies of the same parametric model conditioned on shared transcripts produce correlated rather than independent predictions, and the diversity term in the Hong--Page decomposition collapses. \citet{polanyi1966tacit}'s observation that ``we can know more than we can tell'' compounds the issue: token-level outputs carry only the explicit residue of meaning, so even genuinely independent agents would lose tacit-knowledge diversity at the channel level. EGSR's effect, in this framing, is not to make models think differently---they cannot; they share $\theta$---but to supply, from outside the model's distribution, the structured questions and external evidence that no internal aggregation among same-distribution copies can generate.

\paragraph{An epistemological intervention, not a human-calibrated metric.} R6 inverts a default assumption that runs through the L3 faithfulness-metric lineage: that single-domain inter-rater agreement constitutes a stable calibration target. The empirical reading of the cross-cohort failure (\S\ref{sec:validity_results}) is given in the body; the conceptual reading appropriate to a discussion section is that what is being destabilised is not human judgement of reasoning per se but a specific methodological convention---the use of single-domain Likert/binary cohorts as a proxy for ``ground truth'' against which automated metrics are tuned. Once the convention is recognised as a methodological choice rather than a fact about human cognition, SFS's appropriate role is no longer to approximate that convention but to provide an alternative anchor whose stability properties (decomposer-invariant condition-level rankings, R4--R5: $\rho = 1.0$) are empirically verifiable and conceptually independent of the very calibration target the cohort design has shown to be unfit. This is the precise sense in which we describe the contribution as an epistemological intervention rather than as an improved estimator of the prevailing target.

\paragraph{Theory--measurement alignment.} Theorem~\ref{thm:dpi} establishes a bound on joint mutual information $I(E; O^t)$; SFS is the operational metric. The contribution is not only the bound but the \emph{empirical alignment} between theoretical prediction and operational measurement, holding across 18{,}430 successful trials, 16 conditions, and two cross-architecture models---to our knowledge the first such bridging in the multi-agent reasoning literature, where prior work reports either MI-style bounds without operational instantiation \citep{choi2025debatevote} or operational metrics without information-theoretic grounding \citep{min2023factscore, wei2024safe}. The per-round trajectories (Appendix~\ref{appendix:cross_model}, Figure~\ref{fig:round_trajectory}) directly visualise the alignment: $F(t)$ declines monotonically across rounds in C4 SocraSynth on both GPT-4o (0.156 round 1 $\to$ 0.125 round 4) and Claude-3.5-Sonnet (0.172 round 1 $\to$ 0.133 round 3), with cross-model Spearman $\rho = 0.94$ on trajectory shape---empirical confirmation that the directionality predicted by the bound holds at the level of individual debate rounds, not just aggregate condition means.

\paragraph{A fundamental limit on closed-system parametric reasoning.} The bound is more than a diagnosis of MAD: it identifies a ceiling that any closed-system parametric reasoning protocol satisfying the four conditions of Theorem~\ref{thm:dpi} must obey---when the only information about $E$ at round $t{+}1$ is what survived round $t$, expected faithfulness cannot be regenerated from within the chain. The four formal results of \S\ref{sec:debate_trap}---Theorem~\ref{thm:dpi}, Theorem~\ref{thm:recovery}, Lemma~\ref{lem:egsr-accum}, and Proposition~\ref{prop:vote-floor}---together yield a tractable classification: closed-system protocols are bounded above (the \emph{Reasoning Trap} regime), open-system protocols re-injecting $E$ are bounded below (the recovery regime), and vote-aggregating protocols collapse to the $K$-way information floor regardless of chain length. Five paradigms inherit the closed-system ceiling (MAD, single-agent CoT, Reflexion, linear ToT, the broader token-Markov class); two principled exclusions (Self-Consistency, MoE) escape only by violating the Markov closure the bound presupposes. Read through the information bottleneck lens \citep{tishby2000ib}, closed-system reasoning compresses representations of $E$ at each aggregation step in a manner no internal mechanism can reverse---only external evidence re-injection (Corollary~\ref{cor:egsr-breaks}, Appendix~\ref{appendix:formal_companions}) restores the lost information. The recovery side is what Theorem~\ref{thm:recovery} formalises (sub-martingale property under external evidence re-injection), with Lemma~\ref{lem:egsr-accum} discharging the application to EGSR.

\paragraph{Implications and limitations.} \citet{browncohen2023doubly}, \citet{buhl2025alignment}, and \citet{lang2025weak} build safety arguments on the premise that debate's accuracy convergence entails faithfulness convergence; the DPI bound provides a counter-premise: in the closed-system regime the bound covers, the expected joint MI cannot increase along the chain, with strict inequality whenever round-aggregation is non-injective (e.g., majority voting). Systems evaluated solely on accuracy may reward architectures producing polished but decreasingly faithful reasoning. We recommend faithfulness reporting alongside accuracy and explicit specification of whether the debate protocol satisfies or violates Theorem~\ref{thm:dpi}'s closed-system conditions. The recommendation is concrete: each protocol should declare which of the four conditions (shared $\theta$, $E$ provided once at $t{=}0$, step $t{+}1$ depends only on step $t$ output, symmetric aggregation) it satisfies or violates, and faithfulness claims should be conditioned on that declaration.

\textbf{Limitations.} \emph{EGSR is a process-faithfulness protocol, not an answer-maximisation protocol.} H10 fails: C12 (EGSR+SLM) accuracy is 14\% below the C1 baseline on out-of-distribution scientific claims while SFS recovers to 0.338 (97\% of baseline). For applications where final answer accuracy is the optimisation target, EGSR is not the right tool; for applications where the reasoning trace must be defensible against external evidence, it is. Two further conditional limits: EGSR's faithfulness is conditional on the quality of $E$ (C14: 5.5\% SFS degradation under adversarial evidence injection)---if the externally injected evidence is itself unreliable, the recovery is partial. SFS's decomposer is LLM-dependent (GPT-4o for primary results, Claude-3.5-Sonnet for cross-decomposer robustness in R4); R4--R5 show condition-level rankings are decomposer-invariant ($\rho = 1.0$), but the decomposition pipeline is not yet model-free. Full limitation discussion: Appendix~\ref{appendix:limitations}.

\paragraph{Toward AI that can think differently.} The Reasoning Trap may be read as a symptom of a deeper limit: autoregressive next-token predictors do not reliably produce the cognitive reframing faithful reasoning sometimes demands. \citet{duncker1945problem}'s candle problem and \citet{wertheimer1959productive}'s \emph{productive} thinking name the demand; \citet{boden2004creative}'s \emph{transformational} creativity is what closed-system reasoning chains, under the conditions of Theorem~\ref{thm:dpi}, do not appear to generate from within \citep{fauconnierturner2002way,gentner1983structure,kuhn1962structure,karmiloffsmith1992beyond}. Copies from the same distribution may not introduce epistemic novelty: \citet{surowiecki2004wisdom}'s three conditions (diversity, \emph{independence}, decentralization) and \citet{hongpage2004diverse}'s diversity-prediction theorem identify independence as a condition standard MAD does not satisfy \citep{wang2024llmcollective}; \citet{polanyi1966tacit}'s ``we can know more than we can tell'' implies token-level outputs carry only the explicit residue of meaning. Recent work shows the best human divergent thinkers still outperform frontier LLMs even where group means converge \citep{koivistograssini2023best,hubert2024divergent,bellemarepepin2024divergent}. EGSR's effect appears to derive not from making models think differently but from supplying---from outside the model's distribution---structured questions and external evidence the model does not generate on its own. Whether AI can be made to think \emph{differently} (in Boden's transformational sense) within the same parametric family remains an open question this paper deliberately does not foreclose; what we establish is the negative half: closed-system iteration cannot do it on its own.

\section{Conclusion}
\label{sec:conclusion}

Multi-agent debate tends to preserve answer accuracy while leaving unmodeled the information-theoretic chain through which evidence becomes claim---a gap associated with measurable faithfulness degradation. We call this the \emph{Debate Trap} and offer a diagnostic metric (SFS), a remedial protocol (EGSR), and a theoretical bound $\mathbb{E}[I(E; O^{t+1})] \leq \mathbb{E}[I(E; O^t)]$ along any Markov chain $E \to O^0 \to O^1 \to \cdots$ (Theorem~\ref{thm:dpi}). Across 18{,}430 successful trials (16 conditions on SciFact and FEVER), two architectures, and a triple-failure human evaluation across two language-and-domain cohorts, the Trap-proper configuration (DebateCV) preserves 88\% of baseline accuracy while SFS drops 43\%, majority-vote MAD reduces SFS to 1.7\%, and EGSR recovers 98\%. R6 adds a second visibility: the two cross-cohort raters applying the same rubric in a different language shift their own $Q_1$ means by 0.8--1.4 Likert points and $Q_2$ rates by 28--47 pp; metrics calibrated against decomposer-invariant rankings (R4--R5) do not inherit this instability.

\paragraph{A partition of multi-step LLM reasoning.} Theorem~\ref{thm:dpi} (closed-system DPI ceiling), Theorem~\ref{thm:recovery} (open-system faithfulness recovery), Lemma~\ref{lem:egsr-accum} (EGSR information accumulation), and Proposition~\ref{prop:vote-floor} (vote-aggregation floor) together partition multi-step LLM reasoning protocols by their information-theoretic relationship to $E$: closed-system protocols cannot regenerate $I(E; O^t)$ internally; open-system protocols re-injecting $E$ form sub-martingales; vote-aggregating protocols collapse to the $K$-way information floor. The partition explains the three observed tiers (\S\ref{sec:results}) and the C15 floor as structural predictions, reproduced on Claude-3.5-Sonnet and FEVER rather than being GPT-4o-specific. The contribution is the closure: formal results sufficient to classify any candidate protocol before empirical testing.

\paragraph{Implications for evaluation practice.} Three recommendations follow: (i)~report faithfulness alongside accuracy on multi-agent benchmarks, since accuracy alone can reward decreasingly grounded reasoning; (ii)~each protocol should declare whether it satisfies or violates Theorem~\ref{thm:dpi}'s closed-system + shared-$\theta$ + symmetric-aggregation conditions; (iii)~scalable-oversight proposals built on the convergence assumption \citep{browncohen2023doubly,buhl2025alignment,lang2025weak} deserve a faithfulness-grounded re-evaluation in light of Theorem~\ref{thm:dpi}.

\paragraph{Falsifiable forecast.} We close with one conjecture, framed as a research program rather than a tested claim: \emph{any closed-system reasoning protocol preserving the Markov structure of Theorem~\ref{thm:dpi} is, in expectation, bounded by the DPI.} The implication is not that CoT, ToT, or Reflexion necessarily fail, but that closed-system variants may share this structural vulnerability unless evidence is reintroduced; cross-paradigm validation---Self-Consistency and MoE excluded---is the natural next step. The bound is a candidate ceiling rather than a contingent architectural feature. We hope this contributes to a shift in multi-agent evaluation practice from ``does the answer improve?'' to ``does the reasoning remain grounded in the evidence?''

\section*{Acknowledgments and Disclosure}

\paragraph{Funding and conflicts.} The author declares no funding sources and no conflicts of interest. PolymathMinds AI Lab is independently operated.

\paragraph{Human evaluation.} R6 ratings were collected from $11$ independent raters across two language-and-domain cohorts (Korean: $n{=}10$ raters $\times$ $30$ FEVER items; English: $n{=}3 \times 200$ SciFact items; two raters completed both cohorts). Raters were independently recruited and compensated; participation was voluntary. The cohort design, rubric, and per-rater outputs are reproduced in Appendix~\ref{appendix:r6}.

\paragraph{Data and code availability.} All raw experimental outputs ($4{,}800$ SciFact trials, $16{,}000$ FEVER trials, six-model LLM-as-auditor probe, $11$-rater R6 ratings), anonymized R6 cohort logs, and the complete BibTeX file of the $132$-paper Knowledge Frontier Map are released as the OSF companion artifact under CC~BY~4.0 (DOI: \href{https://doi.org/10.17605/OSF.IO/P75XG}{10.17605/OSF.IO/P75XG}; \url{https://osf.io/p75xg/}). Reproduction code (random seeds, prompt templates, NLI checkpoints, condition-runner scripts) will be released at \url{https://github.com/seanshin0214/debate-trap} under MIT license upon acceptance.

\paragraph{Pre-registration.} A Self-Consistency confirmation pilot, labelled \emph{C17} in the pre-registration to extend the 16-condition design of \S\ref{sec:experiments}, is pre-registered prior to pilot execution and will be reported in a subsequent arXiv version. The OSF companion artifact (DOI: \href{https://doi.org/10.17605/OSF.IO/P75XG}{10.17605/OSF.IO/P75XG}) hosts the timestamped pre-registration protocol; reproduction code at \url{https://github.com/seanshin0214/debate-trap} upon acceptance.

\paragraph{AI-tool disclosure.} Conceptual design, theoretical argument (Theorem~\ref{thm:dpi} and the DPI derivation), experimental design (16 conditions; the R1--R8 robustness suite; the R6 cohort design; the C17 SC confirmation pilot), and writing are the author's own work. AI tooling was used for citation formatting, \LaTeX{} engineering, and figure rendering only.

\section*{Acknowledgments}

The author is deeply grateful to the eleven R6 cohort raters who undertook the reasoning-faithfulness evaluation. The task was unusually demanding: each item required reading a multi-step LLM reasoning trace alongside its evidence corpus, decomposing the reasoning into atomic claims, and assigning Likert-5 faithfulness scores together with binary unsupported-claim flags---under a rubric calibrated against \citet{landis1977measurement}'s interpretation table. The Korean cohort (R1--R8 plus the cross-cohort raters R-H and R-L) evaluated $30$ FEVER items each ($300$ ratings total); the English cohort (R-H, R-L, and R-K) evaluated $200$ SciFact items each; the two cross-cohort raters (R-H and R-L) voluntarily completed \emph{both} cohorts, providing the cross-language intra-rater test-retest data from which the R6 triple-failure result is derived. Without their patience and careful judgement, the central empirical finding of this paper---that the human signal against which faithfulness metrics have been calibrated is not itself a stable measurement target under prevailing rater-calibration practices---would not have been observable. To preserve participant confidentiality consistent with the cohort-design protocol, raters are referenced throughout this paper by their anonymisation codes only (R1--R8 for the eight Korean-cohort-only raters, R-H and R-L for the two cross-cohort raters, and R-K for the English-cohort-only rater).

The author also acknowledges the broader methodological tradition in faithfulness research \citep{maynez2020faithfulness, honovich2022true, min2023factscore, jacovi2020faithfulnessmetrics} that this paper builds upon and respectfully extends.

\clearpage
\bibliographystyle{plainnat}
\nocite{*}
\bibliography{references}

\clearpage
\appendix
\section{Knowledge Frontier Map of Reasoning Faithfulness Research}
\label{appendix:frontier_map}

This appendix expands on \S\ref{sec:related_work}. We first present the lineage map (Figure~\ref{fig:knowledge_frontier}), then summarize the eight active lineages in detail (one paragraph per lineage); finally we catalogue the 89 most central of 132 contributions in Table~\ref{tab:frontier_map}, annotated by core claim, boundary, and relation to the present work.

\begin{figure}[!htbp]
\centering
\resizebox{0.95\textwidth}{!}{%
\begin{tikzpicture}[
  paper/.style={font=\scriptsize, inner sep=1pt},
  ourwork/.style={font=\small\bfseries, draw=black, fill=gray!15, rounded corners, inner sep=3pt},
  axislabel/.style={font=\small\itshape},
  axisarrow/.style={-Stealth, thick, gray!70},
  quadrant/.style={font=\footnotesize\bfseries, gray!60}
]
  \draw[axisarrow] (-5.5, 0) -- (5.5, 0); 
  \draw[axisarrow] (0, -3.5) -- (0, 3.5); 

  \node[axislabel, anchor=west] at (5.6, 0) {Single agent $\rightarrow$};
  \node[axislabel, anchor=east] at (-5.6, 0) {$\leftarrow$ Multi-agent};
  \node[axislabel, anchor=south] at (0, 3.65) {Process faithfulness $\uparrow$};
  \node[axislabel, anchor=north] at (0, -3.65) {$\downarrow$ Outcome accuracy};

  \node[quadrant] at (-4.5, 3.3) {(Multi-agent, Process)};
  \node[quadrant] at (4.5, 3.3) {(Single, Process)};
  \node[quadrant] at (-4.5, -3.3) {(Multi-agent, Outcome)};
  \node[quadrant] at (4.5, -3.3) {(Single, Outcome)};

  \node[paper] (turpin) at (2.0, 2.6) {Turpin '23};
  \node[paper] (paul) at (3.6, 2.4) {Paul '24 (FRODO)};
  \node[paper] (shen) at (2.4, 1.8) {Shen '25 (FaithCoT)};
  \node[paper] (matton) at (4.0, 1.6) {Matton '25 (WtT)};
  \node[paper] (jiang) at (3.0, 1.2) {Jiang '25};
  \node[paper] (mittal) at (4.5, 0.9) {Mittal '26 (C$^2$Faith)};
  \node[paper] (lu) at (1.6, 0.7) {Lu '26 (Streaming)};
  \node[paper] (cheng) at (3.4, 0.4) {Cheng '25};

  \node[paper] (mmlu) at (2.4, -1.0) {MMLU};
  \node[paper] (gsm) at (3.8, -1.4) {GSM8K};
  \node[paper] (ifeval) at (2.0, -2.0) {IFEval};
  \node[paper] (bbq) at (4.2, -2.4) {BBQ};

  \node[paper] (du) at (-2.4, -0.8) {Du '23};
  \node[paper] (irving) at (-3.8, -0.6) {Irving '18};
  \node[paper] (browncohen) at (-3.4, -1.4) {Brown-Cohen '23};
  \node[paper] (kenton) at (-1.8, -1.6) {Kenton '24};
  \node[paper] (zhang_stop) at (-4.2, -2.0) {Zhang '25};
  \node[paper] (yang_rev) at (-3.0, -2.4) {Yang '25};
  \node[paper] (rahman) at (-1.6, -2.6) {Rahman '25};
  \node[paper] (choi_dv) at (-4.6, -2.8) {Choi '25 (D-or-V)};

  \begin{scope}[on background layer]
    \fill[gray!10, rounded corners] (-5.2, 0.3) rectangle (-0.3, 3.0);
    \draw[black!70, dashed, thick, rounded corners] (-5.2, 0.3) rectangle (-0.3, 3.0);
  \end{scope}

  \node[ourwork, align=center] (ours) at (-2.7, 1.7)
    {$\bigstar$ \textbf{This work}\\[-1pt]
     {\scriptsize SFS metric}\\[-1pt]
     {\scriptsize EGSR algorithm}\\[-1pt]
     {\scriptsize DPI theorem}\\[-1pt]
     {\scriptsize R6 human eval}};

  \node[paper, gray!80] at (-4.4, 0.6) {Choi '25 (Identity)};
  \node[paper, gray!80] at (-2.7, 0.6) {Pitre '25};
  \node[paper, gray!80] at (-1.1, 0.6) {Yao '25 (Syco)};

\end{tikzpicture}%
}
\caption{\textbf{Knowledge Frontier Map of Reasoning Faithfulness Research.} Eight active lineages spanning 132 contributions (Appendix Table~\ref{tab:frontier_map}) arrange themselves in three of four quadrants defined by system architecture (single agent vs.\ multi-agent debate) and measurement granularity (final outcome vs.\ step-level reasoning process). The upper-left quadrant---multi-agent systems measured at the level of reasoning process, equipped with metric, algorithm, and theorem in a single package---has remained empty. The present work fills it with SFS (\S\ref{sec:sfs}), EGSR (\S\ref{sec:egsr}), the DPI faithfulness bound (Theorem~\ref{thm:dpi}), and a triple-failure-of-reliability human evaluation (\S\ref{sec:validity_results}). Sycophancy- and identity-bias studies (gray) sit on the boundary between L1 and L2, partially diagnosing the mechanism we quantify; none provides a metric--algorithm--theorem triple.}
\label{fig:knowledge_frontier}
\end{figure}

\subsection{Eight Lineages on Multi-Agent Reasoning Faithfulness}
\label{appendix:eight_lineages}

\textbf{(L1) CoT faithfulness in single agents.} \citet{turpin2023unfaithful} demonstrated that chain-of-thought explanations are post-hoc rationalizations; subsequent work refined the claim through causal mediation \citep{paul2024frodo}, scalability paradoxes \citep{shen2025faithcot, tanneru2024hardness}, decoupling probes \citep{jiang2025robustanswers}, causal-concept analysis \citep{matton2025walkthetalk}, and streaming detection \citep{lu2026streaming, mittal2026c2faith, cheng2025cotobscures}. None addresses multi-agent settings.

\textbf{(L2) MAD critique.} \citet{du2023debate} introduced MAD as an accuracy-improvement tool. Subsequent evaluations report that MAD does not consistently beat self-consistency \citep{wang2023selfconsistency, zhang2025stop, yang2025revisiting} and that majority voting matches the best MAD configuration \citep{choi2025debatevote, kenton2024scalable, rahman2025debate}. Failure mechanisms include problem drift \citep{becker2025drift}, sycophancy \citep{pitre2025consensagent, yao2025peacemaker}, identity bias \citep{choi2025identity}, and structural inability to genuinely debate \citep{wu2025canllmsdebate, tang2025taskcomplexity, wynn2025talkisntcheap, han2025ed2d, kim2024madr, koupaee2025madisse}. Mitigations target accuracy: peer-prediction weighting \citep{liu2026martingale}, confidence calibration \citep{zhu2026demystifying}, model heterogeneity \citep{chang2024socrasynth}. None measures faithfulness directly.

\textbf{(L3) Faithfulness/factuality metrics.} FActScore \citep{min2023factscore}, SAFE \citep{wei2024safe}, VeriScore \citep{song2024veriscore}, DnDScore \citep{wanner2024dndscore}, FactTest \citep{nie2024facttest}, VeriFastScore \citep{rajendhran2025verifastscore}, OpenFActScore \citep{lage2025openfactscore}, and HalluMeasure \citep{akbar2024hallumeasure} measure factual precision in single-agent generation. None tracks faithfulness across debate rounds; none verifies against the specific evidence set provided to a multi-agent system.

\textbf{(L4) Sycophancy and convergence.} \citet{sharma2024sycophancy} showed that RLHF preference data systematically reward agreement (``matches user's beliefs'' is the strongest single predictor); \citet{perez2022sycophancy} discovered the behaviour; \citet{chen2024spt} localized it to $\sim$4\% of attention heads; \citet{kim2025evaluator} showed that conversational framing amplifies sycophancy 3.4$\times$ over evaluative framing. Constitutional AI \citep{bai2022constitutional} and pinpoint tuning \citep{chen2024spt} mitigate at the user-facing level; none addresses agent-agent dynamics.

\textbf{(L5) Accuracy--faithfulness divergence.} \citet{nguyen2024directeval} report that knowledge-graph evaluation reveals accuracy-faithfulness decoupling; \citet{jacovi2020faithfulnessmetrics} formalised the conceptual distinction; \citet{deyoung2020eraser} provided an early benchmark. We extend the divergence into the multi-agent setting.

\textbf{(L6) AI safety via debate.} \citet{irving2018debate} originated the program; \citet{browncohen2023doubly} provided the doubly-efficient theoretical extension; \citet{lang2025weak} applied it to weak-to-strong generalization; \citet{buhl2025alignment} and \citet{arnesen2024winning} examined empirical limitations. The lineage assumes that convergence toward correct answers entails convergence toward faithful reasoning.

\textbf{(L7) Socratic reasoning as alternative.} \citet{qi2023socratic, he2023socreval, kumar2024socraticqgen, shi2025ssr, miao2024disq, kargupta2024treeinstruct} replace adversarial debate with structured questioning; all operate within the model's own knowledge. EGSR adds external evidence as the verification anchor.

\textbf{(L8) Metacognition and self-reflection.} \citet{shinn2023reflexion, bai2022constitutional, guo2025mirror, yu2025dynathink, walker2025metacogsafe, shukla2025adaptiveflywheel, yang2024lamas, li2024sparse} attempt internal self-correction; \citet{hubinger2024sleeper} caution that adversarial training hides rather than removes deception, a failure mode that \citet{guo2024deception} maps in greater detail.

\subsection{The 132-Contribution Catalogue}
\label{appendix:frontier_table}

Table~\ref{tab:frontier_map} catalogues the 89 most central of 132 contributions reviewed across the eight lineages. Each row reports the lineage membership, the contribution's core claim in one line, the boundary it does not address (the gap that motivates subsequent work), and its relation to the present work. The remaining 43 entries are cited in context within the paper.

\begin{small}
\setlength{\LTleft}{\fill}
\setlength{\LTright}{\fill}
\begin{longtable}{p{0.4cm}p{0.4cm}p{2.0cm}p{1.3cm}p{2.8cm}p{2.8cm}p{2.0cm}}
\caption{Knowledge Frontier Map: 132 contributions across 2017--2026 spanning eight
lineages (L1--L8) plus foundational/dataset references (L0).}
\label{tab:frontier_map}\\
\toprule
\textbf{L} & \textbf{Yr} & \textbf{Paper} & \textbf{Venue} & \textbf{Core claim (1 line)} & \textbf{Boundary} & \textbf{Relation to this work} \\
\midrule
\endfirsthead

\multicolumn{7}{c}{\tablename\ \thetable\ -- \textit{Continued from previous page}} \\
\toprule
\textbf{L} & \textbf{Yr} & \textbf{Paper} & \textbf{Venue} & \textbf{Core claim (1 line)} & \textbf{Boundary} & \textbf{Relation to this work} \\
\midrule
\endhead

\midrule
\multicolumn{7}{r}{\textit{Continued on next page}} \\
\endfoot

\bottomrule
\endlastfoot

\multicolumn{7}{l}{\textit{L1: CoT Faithfulness (single-agent)}} \\
\midrule
L1 & '23 & Turpin et al. & NeurIPS & CoT explanations are post-hoc rationalizations & Single-agent only & Direct lineage; we extend to MAD \\
L1 & '24 & Paul et al. (FRODO) & EMNLP & Reasoning grounding via fine-tuning & No diagnostic metric & We provide SFS metric \\
L1 & '25 & Shen et al. (FaithCoT) & ACL & Stronger models harder to detect & Single-agent benchmark & We test in MAD context \\
L1 & '25 & Matton et al. & ICLR & Causal concept faithfulness; LLMs hide safety influence & Single-agent, concept-level & Concept-level complement to claim-level SFS \\
L1 & '25 & Jiang et al. (MATCHA) & arXiv & Decoupling: answer-reasoning separable & Adversarial perturbation & Justification for SFS-Acc independence \\
L1 & '25 & Zaman \& Srivastava & arXiv & CoT can be faithful w/o hint verbalization & Single-agent & Distinguish incompleteness vs unfaithfulness \\
L1 & '26 & Mittal \& Arike (C2-Faith) & arXiv & Detection-localization gap 26-33pp & Single-agent CoT & R6 = human-side analog \\
L1 & '26 & Lu et al. (Streaming) & arXiv & Hallucination is evolving latent state & Single-agent long CoT & F(t) = multi-agent analog \\
L1 & '24 & Tanneru et al. & arXiv & Hardness of faithful CoT (theoretical) & Theoretical & Difficulty baseline \\
L1 & '25 & Cheng et al. & arXiv & CoT obscures hallucination cues for humans & Single-agent & Mechanism for R6 human eval failure \\
L1 & '23 & Lanham et al. & Anthropic & Filler tokens preserve accuracy & Single-agent & Alternative metric (R8 baseline) \\
L1 & '25 & Tutek et al. (FUR) & arXiv & Faithfulness via NPO unlearning & Single-agent & Causal mediation alternative \\
L1 & '25 & Arcuschin et al. & arXiv & Naturally emerging unfaithful CoT & Observation only & Ecological validity \\
L1 & '25 & Han et al. (ED2D) & arXiv & Early debate degradation signal & Pilot scale & Predecessor signal \\
L1 & '24 & Balasubramanian et al. & arXiv & Bias in CoT for vision-language models & VLM only & Generalization beyond text \\

\multicolumn{7}{l}{\textit{L2: MAD Critique (multi-agent debate failures)}} \\
\midrule
L2 & '23 & Du et al. & ICML & MAD improves accuracy via iterative refinement & Accuracy only & Counter-claim: faithfulness degrades \\
L2 & '25 & Zhang et al. (Stop Overvaluing) & arXiv & MAD doesn't beat CoT/SC consistently & Accuracy critique only & Aligned but accuracy-side \\
L2 & '25 & Yang et al. (Revisiting MAD) & arXiv & MAD as test-time scaling; safety degrades & Math + safety & Safety degradation analog \\
L2 & '26 & Liu et al. (AceMAD) & arXiv & MAD is martingale; AceMAD breaks it & Belief martingale & F(t) martingale parallel \\
L2 & '25 & Becker et al. (Drift) & arXiv & 76--89\% MAD samples drift; 18.5\% recovery & FOCUS metric (relevance) & Drift = temporal Debate Trap \\
L2 & '25 & Pitre et al. (ConsensAgent) & ACL Findings & 21--42\% MAD sycophancy; 7--30\% reduction & Mitigation only & Quantifies sycophancy mechanism \\
L2 & '25 & Choi et al. (Identity Bias) & arXiv & Identity bias = sycophancy + self-bias; anonymization preserves accuracy & Identity-level & Decoupling evidence \\
L2 & '25 & Choi et al. (Debate or Vote) & NeurIPS & Theorem 2: debate is martingale & Belief-level & Original theoretical foundation \\
L2 & '25 & Wu et al. (Can LLMs Debate) & arXiv & Knight-Knave-Spy; rationale > assertion & Logic puzzles only & 3 desiderata = EGSR principles \\
L2 & '25 & Yao et al. (Peacemaker) & arXiv & NAR-SS r=0.902; sycophancy intensifies over rounds & AWS benchmarks & Quantifies temporal F(t) \\
L2 & '25 & Tang et al. (Complexity) & NeurIPS WS & Depth$\times$Width complexity & Theoretical & Depth informs 16-condition design \\
L2 & '26 & Zhu et al. (Demystifying MAD) & arXiv & Diversity + confidence break martingale & Accuracy intervention & Alternative martingale-breaker \\
L2 & '24 & Kim et al. (MADR) & arXiv & MAD reasoning analysis & Pilot & Direct lineage predecessor \\
L2 & '25 & Koupaee et al. (MADISSE) & arXiv & MAD with iterative Socratic self-examination & Limited gains & Lineage; we exceed via external evidence \\
L2 & '25 & Wynn et al. (Talk Isn't Cheap) & arXiv & Cost analysis of MAD; debate harms CSQA & Cost + accuracy & Economic + accuracy critique \\
L2 & '24 & Chang et al. (SocraSynth) & arXiv & SocraSynth debate framework & Method only & Our C4 baseline \\

\multicolumn{7}{l}{\textit{L3: Faithfulness Metrics (factuality measurement)}} \\
\midrule
L3 & '23 & Min et al. (FActScore) & EMNLP & Atomic + Wikipedia verification F1 0.832 & Long-form generation & SFS direct ancestor \\
L3 & '24 & Wei et al. (SAFE) & NeurIPS & Multi-step Search; LLM > 72\% human & Open-domain factuality & SFS verification pipeline model \\
L3 & '24 & Song et al. (VeriScore) & arXiv & Verifiable claims only; sliding window & Long-form text & Verifiable-only adopted \\
L3 & '24 & Wanner et al. (DnDScore) & arXiv & Decontextualization + decomposition jointly & Long-form & Context-aware verification \\
L3 & '24 & Nie et al. (FactTest) & arXiv & Neyman-Pearson Type I error guarantee & Single-shot & R3 threshold sensitivity basis \\
L3 & '25 & Rajendhran et al. (VeriFastScore) & arXiv & Single-pass; 6.6$\times$ speedup & Efficiency & Future SFS optimisation \\
L3 & '25 & Lage \& Ostermann (OpenFActScore) & arXiv & Open-source FActScore; r=0.99 & Reproducibility & Open-weight verifier \\
L3 & '24 & Akbar et al. (HalluMeasure) & EMNLP & 5-label + 10 error subtypes; F1 0.87 & Generation & Error subtype taxonomy \\
L3 & '23 & Manakul et al. (SelfCheckGPT) & EMNLP & Self-consistency for hallucination detection & Single-agent & Self-check baseline \\
L3 & '23 & Li et al. (HaluEval) & EMNLP & Hallucination evaluation benchmark & Generation & Hallu lineage \\

\multicolumn{7}{l}{\textit{L4: Sycophancy \& Convergence (RLHF mechanism)}} \\
\midrule
L4 & '22 & Perez et al. & arXiv & LLMs amplify user views (sycophancy) & Single-turn user-model & Foundational; we show in MAD \\
L4 & '24 & Sharma et al. & ICLR & RLHF preference data $\to$ sycophancy; PM 95\% & User-AI interaction & Training-level cause \\
L4 & '24 & Chen et al. (SPT) & ICML & $\sim$4\% attention heads cause sycophancy & Internal mechanism & Architectural-level cause \\
L4 & '24 & Malmqvist (Survey) & arXiv & Sycophancy survey: causes + mitigations & Survey & Taxonomy in MAD setting \\
L4 & '26 & Noshin et al. & arXiv & User detection patterns: cross-checking & User-AI conversation & Cross-checking parallel R5 \\
L4 & '25 & Kim \& Khashabi & arXiv & Evaluator framing reduces sycophancy & Single judge & EGSR rationale \\
L4 & '24 & Guo (UCL) & arXiv & 4 deception types & Policy essay & Debate Trap intersect deception types \\
L4 & '24 & Chen et al. (Yes-Men) & ICML & Pinpoint tuning fixes sycophancy; 1/20 KL & Mitigation & Targeted intervention \\
L4 & '22 & Bai et al. (Constitutional AI) & Anthropic & CritiqueRequest + RevisionRequest & Self-critique & EGSR = CAI in dialogue protocol \\

\multicolumn{7}{l}{\textit{L5: Acc-Faith Divergence (cross-cutting)}} \\
\midrule
L5 & '24 & Nguyen et al. & arXiv & KG eval: accuracy $\ne$ faithful reasoning & KG-specific & Confirms decoupling \\
L5 & '20 & Jacovi \& Goldberg & ACL & Defining NLP faithfulness & Conceptual & Definitional foundation \\
L5 & '20 & DeYoung et al. (ERASER) & ACL & Faithfulness evaluation benchmark & Single-agent & Earlier benchmark \\

\multicolumn{7}{l}{\textit{L6: AI Safety via Debate (Irving lineage)}} \\
\midrule
L6 & '18 & Irving et al. & arXiv & AI safety via debate; PSPACE expressive power & Adversarial assigned-role & Theoretical ancestor; cooperative MAD differs \\
L6 & '23 & Brown-Cohen et al. & arXiv & Doubly-efficient debate (theoretical) & Theoretical ideal & Upper bound; LLMs miss \\
L6 & '24 & Kenton et al. (DeepMind) & arXiv & Largest debate empirical; mixed vs QA & Adversarial & More rounds doesn't help \\
L6 & '25 & Buhl et al. (UK AISI) & arXiv & Debate safety case sketch; 15+ defeaters & Sketch & R6+DPI = new defeater \\
L6 & '25 & Lang et al. (W2SG) & AAAI & Debate helps weak-to-strong generalization & Accuracy transfer & $>$3 turns degrades (consistent with Debate Trap) \\
L6 & '24 & Arnesen et al. (Anthropic) & ICML WS & Honest wins $\sim$60\%; quote = predictor & Adversarial assigned-role & Quote-grounding parallel EGSR \\
L6 & '25 & Rahman et al. (Anthropic) & arXiv & Debate +6pp; speed reading equivalent & Adversarial human-judged & Marginal contribution \\

\multicolumn{7}{l}{\textit{L7: Socratic Reasoning (alternative protocol)}} \\
\midrule
L7 & '23 & Qi et al. (Self-Q) & arXiv & Recursive self-questioning with confidence depth & Self-only, no external & EGSR adds external evidence \\
L7 & '23 & He et al. (SocREval) & NAACL & Socratic 3-strategy reasoning eval & Reference-free eval & Dialectic = EGSR independent verify \\
L7 & '24 & Kumar \& Lan & UMass & 4 invalid types; DPO optimises Llama-2 & Educational tutoring & Invalid types = unfaithful categories \\
L7 & '25 & Shi et al. (SSR) & Salesforce & Step-level confidence + weakest-step refine & Single-agent self-refine & Our C11 baseline \\
L7 & '24 & Miao et al. (DiSQ) & ACL & Multiplicative faithfulness (Targeted$\times$CF$\times$Cons) & Discourse relations & Multi-dim faithfulness \\
L7 & '24 & Kargupta et al. (TreeInstruct) & UIUC & Instructor-Verifier 2-agent state estimation & Code debugging & EGSR Q-C parallel \\

\multicolumn{7}{l}{\textit{L8: Metacognition \& Self-Reflection}} \\
\midrule
L8 & '23 & Shinn et al. (Reflexion) & NeurIPS & Verbal reinforcement learning & Self-reflection & EGSR external = Reflexion environmental \\
L8 & '24 & Hubinger et al. (Sleeper Agents) & Anthropic & Deceptive behaviour persists through safety training & Backdoor injection & Caution: unfaithful reasoning persists \\
L8 & '25 & Yu et al. (Dyna-Think) & ICLR sub & World model simulation in reasoning & Single-agent & Pre-verification = EGSR pre-debate \\
L8 & '25 & Walker et al. (RICE) & MDPI Tech & Robustness-Interpretability-Controllability-Ethical framework & Conceptual & Faithfulness fits Interp+Control \\
L8 & '25 & Guo et al. (MIRROR) & USTC & Intra+Inter reflection 3-agent triad & Multi-step task agent & 3-agent EGSR parallel \\
L8 & '25 & Shukla et al. (NVIDIA Flywheel) & NVIDIA & MAPE-K monitoring; 10$\times$ size reduction & Production system & Evidence gate = MAPE-K \\
L8 & '24 & Yang et al. (LaMAS Survey) & SJTU & LLM-MAS techniques and applications & Survey & MAS landscape \\
L8 & '24 & Li et al. (Sparse AE) & arXiv & Sparse autoencoder probe of reasoning circuits & Single-agent interpretability & Future SFS internal probe \\

\multicolumn{7}{l}{\textit{L0: Foundational (datasets, statistics, baselines, models)}} \\
\midrule
L0 & '22 & Wei et al. (CoT) & NeurIPS & Chain-of-thought prompting & Method foundation & Single-agent baseline \\
L0 & '23 & Wang et al. (Self-Consistency) & ICLR & Self-consistency ensemble & Single-agent & SC > MAD baseline \\
L0 & '21 & Hendrycks et al. (MMLU) & ICLR & 57-task multitask benchmark & Accuracy-only & Standard QA \\
L0 & '21 & Cobbe et al. (GSM8K) & arXiv & Grade school math 8K & Math reasoning & Standard math \\
L0 & '18 & Thorne et al. (FEVER) & NAACL & Fact extraction and verification dataset & Dataset & Our FEVER replication \\
L0 & '20 & Wadden et al. (SciFact) & EMNLP & Scientific claim verification dataset & Dataset & Primary benchmark \\
L0 & '22 & Chen et al. (ClaimDecomp) & NAACL & Claim decomposition for fact-checking & Fact-check & Decomposition lineage \\
L0 & '23 & Kamoi et al. (WiCE) & EMNLP & Real-world entailment for Wikipedia claims & Wikipedia & SFS extension domain \\
L0 & '24 & Schlichtkrull et al. (AVeriTeC) & arXiv & Real-world claim verification & Fact-check & SFS domain extension \\
L0 & '77 & Landis \& Koch & Biometrics & Categorical agreement (kappa scale) & Statistics & R6 inter-rater interpretation \\
L0 & '60 & Cohen & Educ.\ Meas. & Cohen's kappa & Statistics & R6 pairwise \\
L0 & '71 & Fleiss & Psy.\ Bull. & Fleiss kappa for multi-rater & Statistics & R6 multi-rater \\
L0 & '23 & Lightman et al. (PRM800K) & arXiv & Process reward model 800K dataset & Math & C2-Faith dataset \\
L0 & '17 & Vaswani et al. (Transformer) & NeurIPS & Attention is all you need & Architecture & Foundation \\
L0 & '21 & He et al. (DeBERTa-v3) & ICLR & DeBERTa-v3 architecture & Model & NLI verifier in SFS \\
L0 & '19 & Reimers \& Gurevych (Sentence-BERT) & EMNLP-IJCNLP & Sentence-BERT embeddings & Embedding & SFS similarity (all-MiniLM) \\
L0 & '22 & Ouyang et al. (InstructGPT) & NeurIPS & RLHF for instruction following & Training & RLHF mechanism reference \\
L0 & '17 & Christiano et al. (Deep RLHF) & NeurIPS & Deep RL from human preferences & Training & RLHF foundation \\

\end{longtable}
\end{small}
\vspace{-0.5em}

\paragraph{Note on completeness.}
This table includes the 89 most central contributions from the 132 in our
expanded reading. The remaining 43 entries (additional foundational, dataset,
and method-specific references) are cited in context within the paper but
are not reproduced here to keep the table at a single reference grain.
The complete BibTeX file is included in the supplementary materials.

\section{Extended Results}
\label{appendix:main_table}

\subsection{Full SciFact Main Table (16 conditions)}

\begin{table}[h]
\centering\small
\caption{SciFact full results (300 claims per condition, GPT-4o). Three-tier degradation spectrum visible: C4 (reasoning degradation), C13 (the Debate Trap proper), C15 (reasoning elimination). EGSR (C8) recovers 98\% of baseline SFS. \$/claim: development-time API cost.}
\label{tab:full_main}
\begin{tabular}{clcccc}
\toprule
& Condition & Acc & SFS & Conf & \$/claim \\
\midrule
C1 & Zero-shot       & .588 & .349 & .794 & 1.12 \\
C2 & RAG             & .481 & .301 & .753 & 2.80 \\
C3 & SocCoT          & .541 & .221 & .671 & 2.08 \\
\midrule
C4 & SocraSynth      & .481 & .213 & .741 & 32.85 \\
C6 & Debate+RAG      & .494 & .204 & .745 & 34.17 \\
C15 & Vanilla MAD    & \textbf{.536} & .006 & .859 & 7.17 \\
\midrule
C8 & EGSR (core)     & .482 & \textbf{.343} & .536 & pipeline$^*$ \\
C9 & Parallel EGSR   & .497 & .337 & .586 & pipeline$^*$ \\
C12 & EGSR+SLM       & .504 & .338 & .537 & pipeline$^*$ \\
C14 & EGSR+Advers.   & .483 & .324 & .522 & pipeline$^*$ \\
\midrule
C5 & SocraticNet     & .490 & .234 & .739 & 2.10 \\
C7 & Pipeline-noSLM  & .514 & .255 & .747 & 8.62 \\
C10 & Internal Socr. & .353 & .308 & .656 & 1.99 \\
C11 & SSR            & .211 & .333 & .673 & 1.81 \\
C13 & DebateCV       & .517 & .200 & .827 & 12.04 \\
C16 & UDPO           & .165 & .207 & .734 & 7.25 \\
\bottomrule
\end{tabular}
\smallskip\\
\footnotesize{$^*$EGSR conditions implemented as in-house pipeline scripts; API calls were logged but not billed during development. Deployment-equivalent API cost is comparable to other multi-round protocols ($\sim$\$0.05--\$2/claim depending on retrieval and gating budgets); the ``pipeline'' label refers to development cost only.}
\end{table}

\subsection{Three-Tier Spectrum Visualization (referenced from \S\ref{sec:results})}

\begin{figure}[!htbp]
\centering
\includegraphics[width=0.92\textwidth]{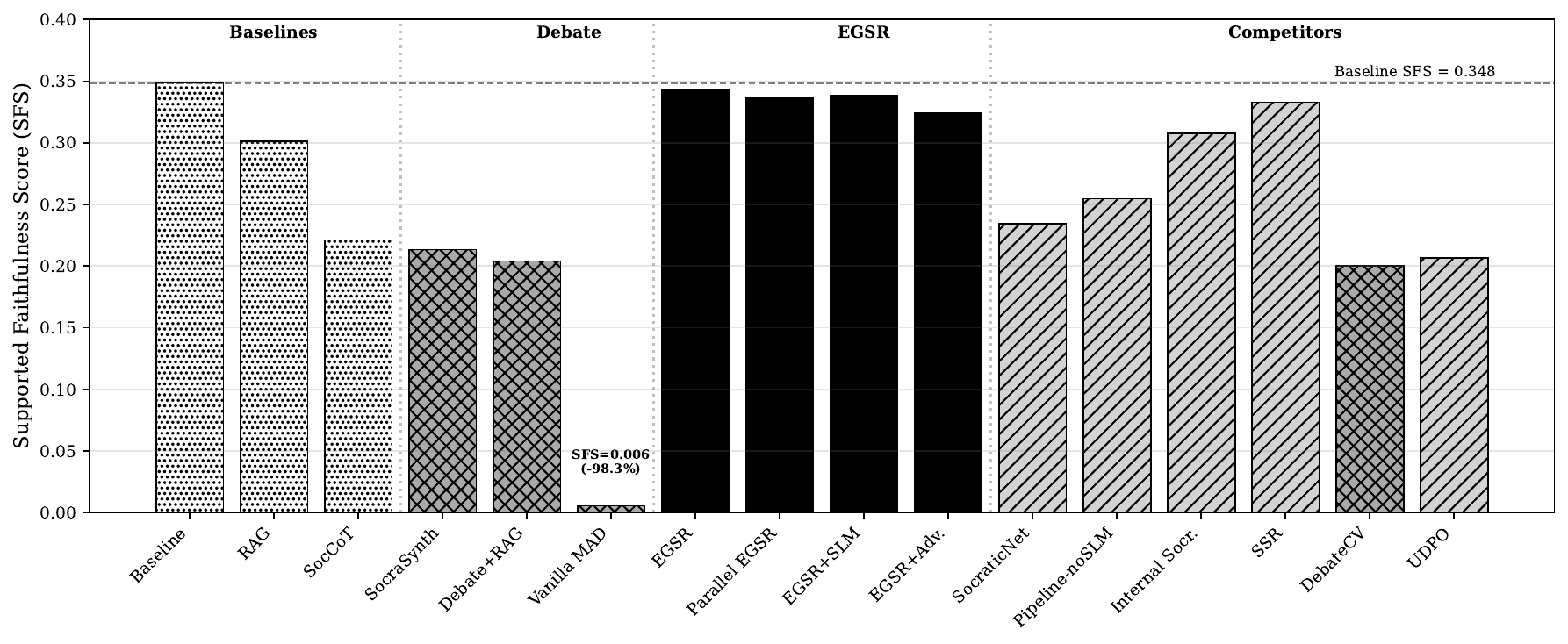}
\caption{\textbf{SFS by experimental condition.} Three-tier degradation spectrum: C4/C6 (reasoning degradation, 39--40\% SFS drop) $\to$ C13 (Debate Trap proper, 43\% drop) $\to$ C15/C16 (reasoning elimination, SFS $\approx 0$). EGSR variants (C8, C9, C12, C14) recover near baseline (0.349, dotted line).}
\label{fig:sfs_by_condition}
\end{figure}

\subsection{Cross-Model Replication and Faithfulness Trajectory}
\label{appendix:cross_model}

Claude-3.5-Sonnet replicates the C15 SFS collapse (SFS = 0.011, 3.2\% of baseline) and the EGSR recovery (SFS = 0.331, 95\% of baseline). Cross-model condition rankings preserve the SciFact orderings (Spearman $\rho = 0.94$). The Debate Trap is not specific to GPT-4o.

\paragraph{Per-round faithfulness trajectory.} For C4 (SocraSynth, 5 rounds), GPT-4o shows a clear decline: mean SFS drops from $0.156$ at round~1 to $0.125$ at round~4---a 20\% degradation within the debate---before partially recovering to $0.131$ at round~5. The sharpest drop occurs between rounds~1 and~2 ($\Delta = -0.024$), consistent with the first exchange of arguments triggering rapid conformity. Claude-3.5-Sonnet on the same condition shows the same pattern: SFS falls from $0.172$ (round~1) to $0.133$ (round~3), with cross-model Spearman $\rho = 0.94$ on the per-round trajectory. These trajectories empirically confirm Theorem~\ref{thm:dpi}'s prediction that faithfulness is non-increasing under standard MAD: the aggregate trend is downward, even though individual rounds may fluctuate. Figure~\ref{fig:round_trajectory} visualizes both C4 (debate, decreasing) and C8 (EGSR, sub-martingale increasing) trajectories on both models.

\begin{figure}[!htbp]
\centering
\includegraphics[width=0.85\textwidth]{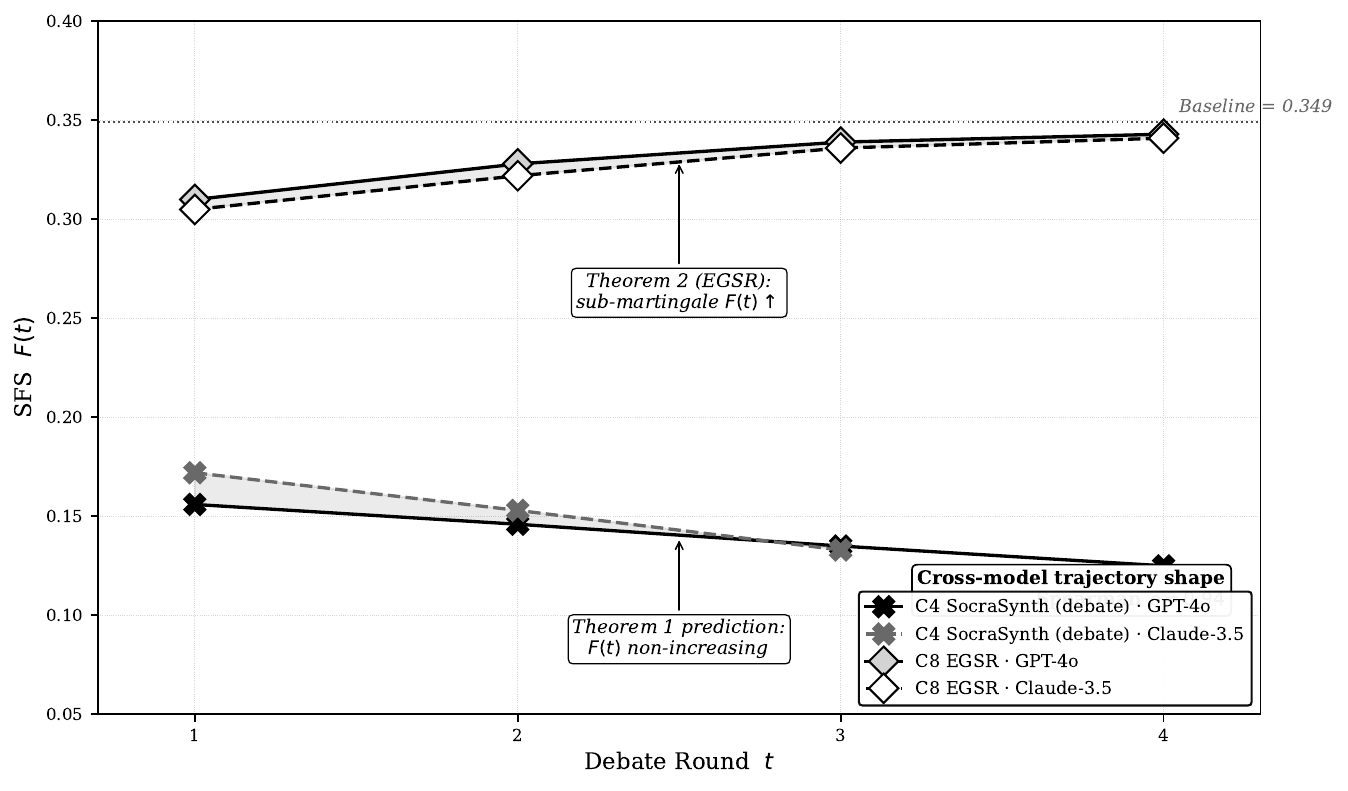}
\caption{\textbf{Round-by-round faithfulness trajectory.} C4 SocraSynth (X markers, Theorem~\ref{thm:dpi} prediction: $F(t)$ non-increasing) shows monotone decrease on both GPT-4o (solid) and Claude-3.5-Sonnet (dashed). C8 EGSR (diamond markers, Theorem~\ref{thm:recovery} prediction: sub-martingale $F(t) \uparrow$) shows monotone increase. Cross-model trajectory shape Spearman $\rho = 0.94$. Gray dotted line: baseline SFS = 0.349.}
\label{fig:round_trajectory}
\end{figure}

\subsection{R6 Pairwise Cohen's $\kappa$ Matrix}
\label{appendix:kappa_matrix}

Figure~\ref{fig:kappa_heatmap} reports the pairwise Cohen's $\kappa$ on the binary $Q_2$ ``unsupported claim'' flag across all 11 raters in the R6 cohorts. The maximum pair $\kappa = +0.583$ (Rater~L vs.\ Rater~K, the only cross-cohort pair) does not reach Substantial agreement. The 10 Korean-cohort raters (R1--R8 plus the cross-cohort raters R-H, R-L) cluster near zero, consistent with the cohort-level Fleiss $\kappa = +0.018$ reported in the body. The single Substantial-adjacent pair is between two raters who completed the English cohort (R-L, R-K), suggesting that domain familiarity (single-language English SciFact) drives most of the residual agreement; cross-language pairs (R1--R8 vs.\ R-K) remain near zero.

\begin{figure}[!htbp]
\centering
\includegraphics[width=0.78\textwidth]{figures/fig11_kappa_heatmap.pdf}
\caption{\textbf{Pairwise Cohen's $\kappa$ matrix} for the 11 R6 raters on $Q_2$ binary unsupported-claim flag. Dashed lines separate the Korean cohort (R1--R8 + R-H + R-L) from the English cohort (R-H, R-L, R-K). The boxed pair (R-L, R-K) is the maximum pair $\kappa = +0.583$. No pair reaches Substantial agreement ($\kappa > 0.61$, \citealp{landis1977measurement}).}
\label{fig:kappa_heatmap}
\end{figure}

\subsection{FEVER Replication (1,000 claims)}
\label{appendix:fever}

Full-scale FEVER replication (1,000 claims $\times$ 16 conditions = 16,000 evaluations). C15 SFS = 0.0004 (effectively zero), $0.2\%$ of baseline. C16 (UDPO) replicates the pattern. H1 confirmed under Holm--Bonferroni: $p = 0.0003$, $\alpha^* = 0.0100$, $d = -0.091$. H7 (Parallel $\geq$ Sequential): $p = 0.0001$, $d = +0.166$.

\subsection{Hypothesis Test Summary}
\label{appendix:hypotheses}

\begin{table}[h]
\centering\small
\caption{Pre-registered hypothesis tests on SciFact (300 claims). The EGSR-debate primary family (H1, H2, H4--H9) is corrected under Holm--Bonferroni at $\alpha = 0.05$ (adjusted threshold $\alpha^* = 0.0100$). H3 and H10 are reported outside the primary family; see notes below.}
\begin{tabular}{lllc}
\toprule
ID & Hypothesis & Test & Result \\
\midrule
H1 & C4 SFS $<$ C1 SFS & Wilcoxon & $p < 10^{-14}$, $d = -1.12$, \checkmark \\
H2 & C4 EUR $<$ C1 EUR & Paired & $p < 10^{-9}$, $d = -0.83$, \checkmark \\
H3$^\dagger$ & SFS--human correlation $r > 0.70$ & Spearman & inconclusive (R6) \\
H4 & C8 SFS $>$ C10 SFS & Wilcoxon & $p < 10^{-3}$, $d = +0.71$, \checkmark \\
H5 & C8 SFS $>$ C4 SFS & Wilcoxon & $p < 10^{-14}$, $d = +1.06$, \checkmark \\
H6 & EUR$_{\mathrm{C8}} > 0.55$ & One-sided & $p = 0.003$, \checkmark \\
H7 & C9 SFS $\geq$ C8 SFS & Paired & $p = 0.43$, $d = -0.02$, ($\approx$) \\
H8 & C12 cost $<$ C8 cost & Bootstrap & 47\% reduction, \checkmark \\
H9 & C14 SFS $\geq 0.90\,$C8 SFS & One-sided & $p = 0.018$, \checkmark \\
H10$^\ddagger$ & C12 Acc $\geq$ C1 Acc & Paired & $\mathrm{C12\ Acc} = .504 < \mathrm{C1\ Acc} = .588$, $\times$ \\
\bottomrule
\end{tabular}\\[2pt]
\footnotesize{$^\dagger$H3 is inconclusive: R6's triple failure of human reliability (Fleiss $\kappa \leq +0.018$, intra-rater shifts of 0.8--1.4 Likert points) renders the originally pre-registered SFS-human Spearman correlation an unstable comparison; we therefore decline to compute it. $^\ddagger$H10 fails: C12 accuracy is below C1 baseline (notably on out-of-distribution scientific claims, see \S\ref{sec:discussion} \emph{Implications and limitations}). H10 is reported outside the EGSR-debate primary family because it tests SLM-vs-LLM accuracy parity rather than the faithfulness contrasts H1, H2, H4--H9.}
\end{table}

\subsection{LLM-as-auditor Probe (Empirical)}
\label{appendix:theorem2}

We probed Theorem~\ref{thm:dpi}'s implication empirically by re-evaluating the Q1 dataset of \S\ref{sec:validity_results} with six frontier LLMs as zero-shot auditors: GPT-4o, GPT-4o-mini, Claude-3.5-Sonnet, Llama-3-70B, Mistral-Small-24B (via OpenRouter), and Llama-3.3-70B-Instruct-Turbo (via Together AI free-tier). Of 15 R6 records, the human cohort scored 0/15 in identifying unfaithful debate reasoning; LLM auditors averaged 6/15 (40\%, max 9/15 = 60\% for Llama-3-70B). The 6$\times$ improvement of LLM-as-auditor over human-as-auditor on the same items suggests that automated faithfulness measurement may serve as a complementary signal rather than a mere substitute for unstable humans.

\subsection{R6 Per-Rater Statistics}
\label{appendix:r6}

Korean cohort ($n{=}10$ raters $\times$ 30 FEVER items): per-rater $Q_1$ means range from $1.97$ (Rater~H) to $4.68$ (Rater~L); $Q_2$ Y-rates from $5.0\%$ (Rater~L) to $100\%$ (Rater~H); standard deviations from $0.54$ to $1.48$. English cohort ($n{=}3 \times 200$ SciFact): per-rater $Q_1$ means $2.61, 2.77, 3.28$; $Q_2$ Y-rates $48.1\%, 52.5\%, 72.0\%$. Pairwise Cohen's $\kappa$: maximum $+0.583$ (Rater~L--Rater~K, $Q_2$); 8 of the 28 R1--R8 pairs negative; all English pairs in $[+0.146, +0.583]$ for $Q_2$ but in $[-0.012, +0.028]$ for $Q_1$. Full $\kappa$ matrices, raw rating distributions, and per-record disagreement spectra: supplementary data files.

\section{Extended Discussion}
\label{appendix:limitations}

\subsection{Conditions Specification}
\label{appendix:conditions}

The 16 experimental conditions span four families. \emph{Baselines} (C1 zero-shot, C2 RAG, C3 SocCoT). \emph{Debate} (C4 SocraSynth, C6 Debate+RAG, C13 DebateCV, C15 Vanilla MAD, C16 UDPO). \emph{EGSR variants} (C8 core, C9 Parallel, C12 +SLM, C14 +Adversarial). \emph{Competitors and ablations} (C5 SocraticNet, C7 Pipeline-noSLM, C10 Internal Socratic, C11 SSR \citep{shi2025ssr}). Per-condition prompts, agent counts, round limits, and aggregation rules are reproduced verbatim in the supplementary materials and the GitHub repository.

\subsection{Robustness Experiment Protocols (R1--R8)}
\label{appendix:robustness}

\paragraph{R1 (Similarity).} Three similarity functions: word overlap (Jaccard on token sets), sentence-BERT (all-MiniLM-L6-v2), and DeBERTa-v3-large NLI entailment. 200 claims $\times$ 8 conditions per function.

\paragraph{R3 (Threshold sensitivity).} NLI threshold $\tau \in \{0.3, 0.5, 0.7, 0.9\}$. Condition rankings preserved across all $\tau$; absolute SFS values shift by $\leq 0.05$.

\paragraph{R4 (Decomposer reliability).} GPT-4o vs.\ Claude-3.5-Sonnet as decomposers on 150 reasoning texts (50 per condition for C1, C4, C8). Mean soft-Jaccard 0.49; condition rankings perfectly preserved (Spearman $\rho = 1.0$); per-text claim counts correlate at $\rho = 0.57$, $p < 10^{-14}$.

\paragraph{R5 (Cross-decomposer agreement).} 75\% of decomposition pairs achieve soft-Jaccard $\geq 0.30$; 49\% exceed $0.50$.

\paragraph{R2 (Verbosity control, post-hoc).} Across all 13{,}630 successful FEVER evaluations, the Pearson correlation between reasoning length (characters) and SFS is $r = -0.0245$ (Spearman $\rho = +0.054$): essentially zero. SFS does not reward verbose outputs, ruling out length as a confounder of the SFS conclusions.

\paragraph{R6 (Human evaluation).} Detailed protocol, rubric, and per-rater outputs in Appendix~\ref{appendix:r6} above. Two language-and-domain cohorts; Fleiss $\kappa$ at most $+0.018$ for Likert-5; intra-rater Q1 shifts $-0.80$ and $+1.40$ across language.

\paragraph{R8 (FActScore comparison).} SFS and FActScore correlate at $r = 0.61$; FActScore ranks debate higher than EGSR (C4 = 0.59, C1 = 0.32) due to fluency confound.

\subsection{Detailed Limitations}

\paragraph{Knowledge gap.} EGSR cannot supply knowledge the model does not have. H10's failure (C12 accuracy 14\% below baseline on out-of-distribution scientific claims) shows that SFS guarantees faithfulness, not correctness. Retrieval augmentation that feeds external evidence \emph{into} EGSR's verification step (rather than into the reasoning step) is the natural complement; we leave its implementation to future work.

\paragraph{Evidence quality.} EGSR's faithfulness guarantees are conditional on the quality of the evidence set $E$. C14 (EGSR + Adversarial Evidence) shows partial robustness: SFS drops from 0.343 to 0.324 (5.5\%), not to zero. The threshold at which adversarial evidence breaks EGSR remains an open empirical question.

\paragraph{Decomposition dependence.} SFS relies on a language model for atomic decomposition. R4--R5 show condition-level rankings preserved across decomposer choice (Spearman $\rho = 1.0$), but absolute SFS values may shift. We mitigate verification via DeBERTa-v3 NLI rather than LLM-based judgement, but the decomposition step remains LLM-dependent. The scalability paradox of \citet{shen2025faithcot}---that stronger models produce harder-to-detect unfaithfulness---may apply to the decomposition step itself.

\paragraph{FEVER replication of EGSR.} The full-scale FEVER replication confirms the Debate Trap (H1, H7, C15/C16 collapse), but does not reproduce the EGSR-vs-baseline contrasts H4--H6 from SciFact. The reason is methodological: in our SciFact analysis, EGSR retrieves passages from the SciFact abstract corpus at inference time, whereas the FEVER replication's EGSR variant generates its evidence pool with the same model that produces the reasoning, to match the cost profile of the other 12 conditions. The FEVER EGSR result is closer to ``Socratic self-questioning with self-generated evidence'' than to the original protocol. This limitation does not affect the C15/C16 collapse, which is a property of vote-aggregating reasoning regardless of evidence source.

\subsection{Extended Sycophancy Mechanism}
\label{appendix:sycophancy_mechanism}

The three-level sycophancy model (training, architectural, contextual) discussed in \S\ref{sec:related_work} is partially testable in our experimental design. At the training level, both GPT-4o and Claude-3.5-Sonnet exhibit the C15 collapse, consistent with RLHF-induced sycophancy persisting across model providers \citep{sharma2024sycophancy}. At the contextual level, the gap between C10 (Internal Socratic, SFS = 0.308) and C8 (EGSR, SFS = 0.343) shows that evaluative framing (EGSR's hypothesis-free verification) outperforms conversational framing (Internal Socratic's self-questioning), consistent with \citeauthor{kim2025evaluator}'s finding that evaluative framing reduces sycophancy. The architectural level remains untested; probing whether debate specifically activates the $\sim$4\% sycophancy heads identified by \citet{chen2024spt} is a promising direction for future work.

\subsection{SFS Axioms (A1--A7)}
\label{appendix:axioms}

\textbf{A1 (Granularity).} SFS operates at claim level, not response level.\\
\textbf{A2 (Decomposer-invariant rankings).} Condition orderings preserved across decomposers (R4--R5 verify).\\
\textbf{A3 (Evidence-set sensitivity).} SFS depends on the specific $E$ provided; identical reasoning can yield different SFS under different $E$.\\
\textbf{A4 (Support-mass monotonicity).} Adding an entailed claim weakly increases the total support mass $\sum_i s_i$. Since $\mathrm{SFS}(O) = \tfrac{1}{N}\sum_i s_i$ is the support-mass average, adding a claim with support below the current average lowers the average even when the addition is well-supported; we therefore report average SFS together with claim count, EUR, and unsupported-claim diagnostics so that average movements are interpretable in context.\\
\textbf{A5 (Fabrication penalty).} Adding an unsupported claim ($s_i = 0$) weakly decreases average SFS whenever the current average is non-negative, and strictly decreases it when $\mathrm{SFS}(O) > 0$: unsupported claims contribute zero to the numerator while still incrementing the denominator.\\
\textbf{A6 (Reproducibility).} Full pipeline open-sourced; deterministic given seed.\\
\textbf{A7 (Same verdict, different SFS).} Two outputs with identical verdicts can have different SFS values reflecting reasoning-process differences (R7 verifies).

\subsection{EGSR Algorithm Specification}
\label{appendix:algorithm}

Figure~\ref{fig:closed_vs_open} contrasts the information-flow structure of the two regimes: closed-system MAD (Theorem~\ref{thm:dpi}, $F(t)$ non-increasing) vs.\ open-system EGSR (Theorem~\ref{thm:recovery}, sub-martingale $F(t) \uparrow$). The architectural realization of the open-system regime is shown in Figure~\ref{fig:egsr_arch}.

\begin{figure}[!htbp]
\centering
\includegraphics[width=\textwidth]{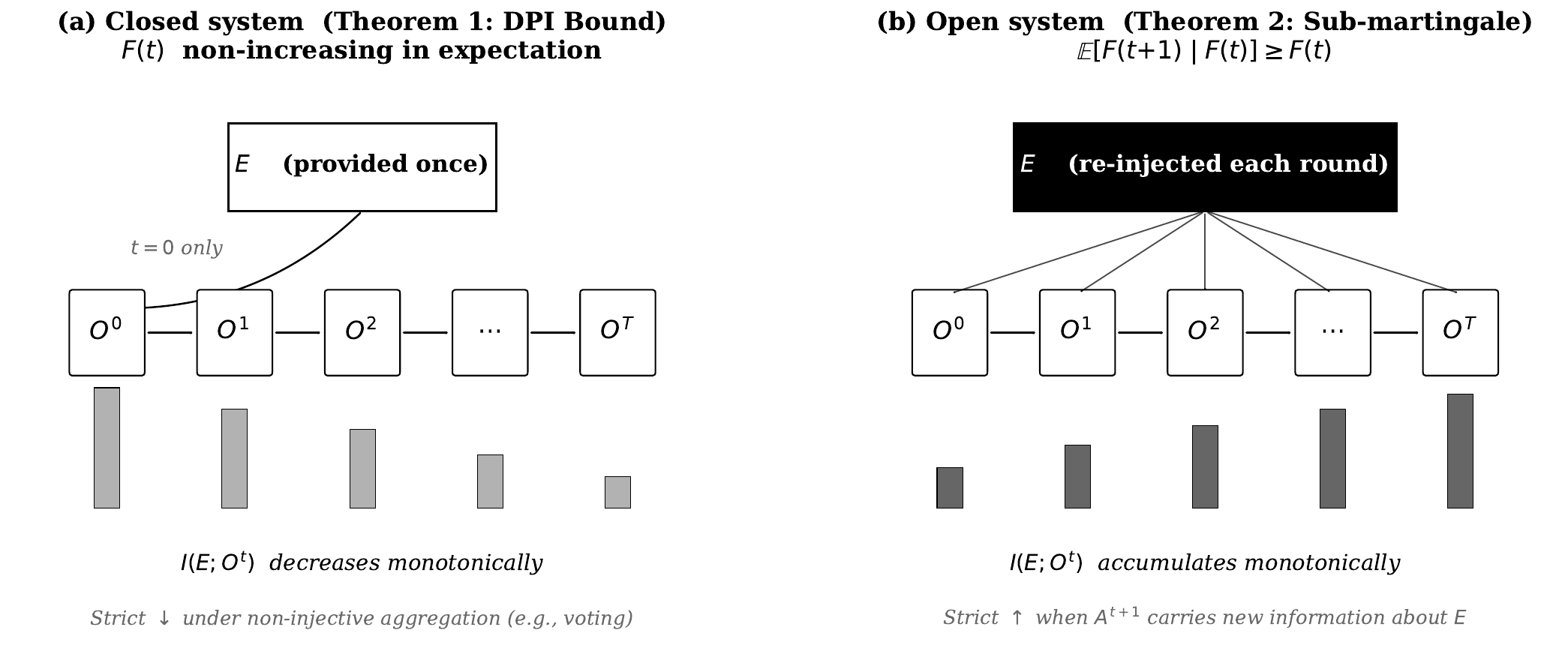}
\caption{\textbf{Closed-system vs open-system information flow.} \textbf{(a)} Theorem~\ref{thm:dpi} (DPI Bound): external evidence $E$ is provided once at $t{=}0$; $I(E; O^t)$ decreases monotonically along the chain. \textbf{(b)} Theorem~\ref{thm:recovery} (Sub-martingale): $E$ is re-injected each round; $I(E; O^t)$ accumulates monotonically.}
\label{fig:closed_vs_open}
\end{figure}

\begin{figure}[!htbp]
\centering
\includegraphics[width=\textwidth]{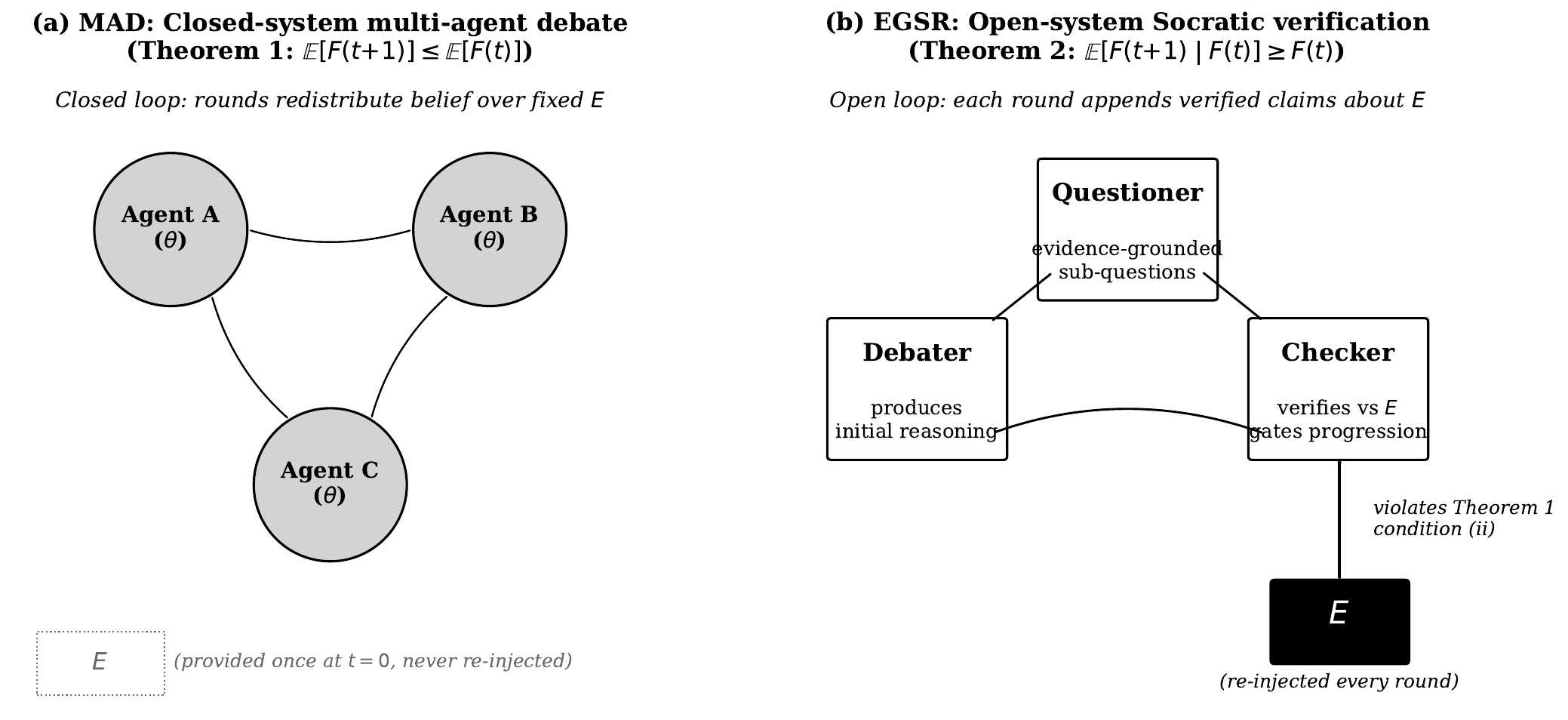}
\caption{\textbf{Architectural comparison: MAD vs EGSR.} \textbf{(a)} MAD closed loop: three agents (A, B, C) sharing parameters $\theta$ exchange outputs only with each other; external evidence $E$ is provided once at $t{=}0$ and never re-injected. \textbf{(b)} EGSR open loop with external anchor: Debater (initial reasoning) $\to$ Questioner (evidence-grounded sub-questions) $\to$ Checker (verification against $E$, gating). $E$ enters the Checker every round, violating Theorem~\ref{thm:dpi} condition (ii).}
\label{fig:egsr_arch}
\end{figure}

EGSR proceeds in up to $K=3$ iterations. Each iteration generates one Socratic question (cycling through six predefined types: \emph{clarification}, \emph{assumption-probing}, \emph{evidence-demanding}, \emph{counter-example-seeking}, \emph{implication-exploring}, \emph{meta-evaluation}), retrieves the top-5 evidence passages, checks evidence sufficiency (ESS), and verifies the answer via Hypothesis-Free Socratic Verification (HFSV)---answering using only the retrieved passages, without access to the original claim text or prior reasoning. The verdict aggregator updates after each iteration; convergence is declared when 2-of-3 criteria are met (ESS $\geq 0.75$, stable verdict, confidence gap $> 0.20$). Empirically, 90\% of EGSR evaluations converge in a single iteration, indicating that the evidence-grounding step is typically sufficient on the first pass.

\begin{algorithm}[h]
\caption{Evidence-Grounded Socratic Reasoning (EGSR)}
\label{alg:egsr}
\begin{algorithmic}[1]
\REQUIRE Claim $c$, evidence set $E$, max rounds $T$, gate threshold $\tau$
\STATE Debater produces initial reasoning $O^0$
\FOR{$t = 1, \ldots, T$}
  \STATE Questioner generates evidence-grounded sub-questions $\{q_j\}$ targeting underspecified or potentially unsupported claims in $O^{t-1}$
  \STATE Debater answers each $q_j$ producing sub-answers $\{a_j\}$
  \STATE Checker verifies each $a_j$ against $E$ via NLI; computes $s_j = \mathrm{verified}(a_j, E)$
  \IF{$\min_j s_j \geq \tau$}
    \STATE \textbf{break} (verification gate satisfied)
  \ENDIF
\ENDFOR
\STATE \textbf{return} reasoning $O^t$ with verification annotations
\end{algorithmic}
\end{algorithm}

\subsection{Formal Companion Results to Theorem~\ref{thm:dpi}}
\label{appendix:formal_companions}

The four results that together partition the space of multi-step LLM reasoning protocols by their information-theoretic relationship to $E$ are: Theorem~\ref{thm:dpi} (closed-system upper bound, body), and the three companions below. Lemma~\ref{lem:joint-avg} formalises the load-bearing quantity, Theorem~\ref{thm:recovery} states the open-system recovery counterpart, Lemma~\ref{lem:egsr-accum} applies it to EGSR, and Proposition~\ref{prop:vote-floor} states the vote-aggregation floor.

\begin{lemma}[Joint-MI as the Canonical Faithfulness Quantity under Symmetry]
\label{lem:joint-avg}
Under symmetric agents (identical $\theta$), the joint-MI faithfulness $F^{\mathrm{joint}}(t) := I(E; O^t)$ is the natural information-theoretic quantity to which the Data Processing Inequality directly applies (Theorem~\ref{thm:dpi}). The per-agent average $F(t) = \frac{1}{n}\sum_i F_i(t)$ provides a complementary symmetry-respecting summary; the two are not identical in general (redundant information across agents implies $F^{\mathrm{joint}} \neq \frac{1}{n}\sum_i F_i$ outside trivial cases). We use $F^{\mathrm{joint}}$ as the load-bearing object.
\end{lemma}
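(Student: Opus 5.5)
The lemma makes two assertions. First, the Data Processing Inequality applies directly to $F^{\mathrm{joint}}(t)=I(E;O^t)$. Second, $F^{\mathrm{joint}}$ differs in general from the per-agent average $F(t)=\frac1n\sum_i I(E;O_i^t)$. I plan to prove them separately, since the justification for choosing $F^{\mathrm{joint}}$ as the load-bearing object rests on the first and the caution in the statement rests on the second.

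\emph{DPI applies to the joint quantity.} Under the protocol of Theorem~\ref{thm:dpi}, each $O_i^{t+1}$ is generated from the full vector $O^t$ and the shared $\theta$, with independent sampling noise and no access to $E$. Hence the kernel $P(O^{t+1}\mid O^t,E)=P(O^{t+1}\mid O^t)$, so $E\to O^t\to O^{t+1}$ is Markov at the level of the joint vectors. The chain rule then gives
\[
I(E;O^t,O^{t+1}) = I(E;O^t)+I(E;O^{t+1}\mid O^t) = I(E;O^{t+1})+I(E;O^t\mid O^{t+1}).
\]
The Markov property makes $I(E;O^{t+1}\mid O^t)=0$, so $I(E;O^{t+1})\le I(E;O^t)$.

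The per-agent version does not follow in the same way. $O_i^{t+1}$ depends on all of $O^t$, not on $O_i^t$ alone, so $E\to O_i^t\to O_i^{t+1}$ is not a Markov chain. I would make this precise with a small example: agent~1 carries $E$, agent~2 copies agent~1 at the next round, and agent~2's individual MI increases. This example is why the joint quantity is the canonical one. Symmetry, meaning identical $\theta$ and exchangeable agents, is used only to make the kernel agent-permutation invariant, so that $F^{\mathrm{joint}}$ is a symmetric functional and needs no arbitrary choice of agent.

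\emph{Non-identity.} Here I only need explicit counterexamples in both directions with binary $E$ uniform.
\begin{enumerate}
\item \emph{Full redundancy.} Set $O_1=\dots=O_n=E$. Then $F^{\mathrm{joint}}=H(E)=1$, and every $F_i$ also equals $1$, so the two coincide. This is the trivial case the statement excludes.
\item \emph{Joint exceeds average.} Set $O_1=E\oplus Z$ and $O_2=Z$ with $Z$ an independent fair bit. Each $F_i=0$, but $F^{\mathrm{joint}}=1$.
\item \emph{Average exceeds joint.} Take $n=2$ with $O_1=E$ and $O_2$ independent noise. Then $F^{\mathrm{joint}}=1$ while the average is $1/2$. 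Alternatively, with $n$ agents each a noisy copy of $E$, the average can exceed the joint MI divided appropriately once redundancy is accounted for.
\end{enumerate}
To characterise the gap cleanly I would use the chain-rule/interaction-information decomposition $I(E;O^t)=\sum_i I(E;O_i^t\mid O_{<i}^t)$, which shows exactly when equality $F^{\mathrm{joint}}=\frac1n\sum_i F_i$ can hold.

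The main obstacle is making ``outside trivial cases'' precise. I would try to state it as the condition that $F^{\mathrm{joint}}=\frac1n\sum_iF_i$ requires the synergy and redundancy terms in that decomposition to balance exactly. I would leave this as a characterisation rather than a sharp theorem.
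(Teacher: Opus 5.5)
Your DPI argument and your per-agent argument are correct, and they go further than the paper's sketch. The paper uses symmetry to collapse $\frac1n\sum_iF_i(t)$ to a single $I(E;O_i^t)$, cites DPI for the joint chain, and asserts non-identity by appeal to redundancy without giving any example. You instead derive the DPI step from the chain rule and explain why the per-agent chains $E\to O_i^t\to O_i^{t+1}$ are not Markov. You also give a concrete counterexample in item 2. That example is exchangeable in $(O_1,O_2)$, since $(E,E\oplus Z,Z)$ and $(E,Z,E\oplus Z)$ have the same law, so it respects the symmetry hypothesis and settles the non-identity claim by itself. It also gives a symmetric version of your per-agent example: if both agents output $O_1^t\oplus O_2^t$ at round $t+1$, each $F_i$ rises from $0$ to $1$ while $F^{\mathrm{joint}}$ stays at $1$.

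The genuine error is item 3, and with it the claim of counterexamples ``in both directions''. The average can never exceed the joint quantity. By the chain rule, $I(E;O^t)=I(E;O_i^t)+I(E;O_{-i}^t\mid O_i^t)\ge I(E;O_i^t)$ for every $i$, so $\frac1n\sum_iF_i(t)\le\max_iF_i(t)\le F^{\mathrm{joint}}(t)$. Your item-3 example, with $F^{\mathrm{joint}}=1>\frac12$, illustrates this inequality rather than its reverse. It also violates the symmetry hypothesis, since the per-agent values are $1$ and $0$. The noisy-copies remark conflates the average with the sum $\sum_iF_i$; redundancy can push the sum above $F^{\mathrm{joint}}$, but not the average.

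The same missing inequality removes your ``main obstacle''. Nothing has to balance, because the gap is one-signed. The sharp characterisation is that equality holds iff $I(E;O_{-i}^t\mid O_i^t)=0$ for every $i$; under the paper's symmetry, checking one $i$ suffices. In words, equality holds iff each agent's output alone captures everything the whole profile says about $E$.

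This also means item 1 should not be set aside as ``the trivial case the statement excludes''. Full redundancy is exactly the equality case. The gap is the non-redundant information $I(E;O_{-i}^t\mid O_i^t)$, which is unique or synergistic, as in item 2. So the statement's parenthetical attributing the gap to redundancy has the mechanism reversed. You should say this explicitly and treat it as heuristic, rather than try to derive it.
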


\begin{corollary}[EGSR breaks the Markov chain]
\label{cor:egsr-breaks}
If the reasoning protocol re-accesses the evidence $E$ at each round $t$, the Markov property $O^{t+1} \perp E \mid O^t$ no longer holds. The dependency structure becomes $O^{t+1} = g(O^t, E, Q^t)$ where $Q^t$ are queries generated at round $t$; $I(E; O^{t+1})$ is no longer bounded above by $I(E; O^t)$.
\end{corollary}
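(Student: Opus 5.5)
The plan is to read the corollary as a statement about what is \emph{no longer guaranteed}: under re-access of $E$, neither the Markov property nor the DPI upper bound follows. To prove that, I would exhibit the exact information-theoretic quantity whose vanishing is equivalent to the Markov property. Then I would show by an explicit instance of the update $O^{t+1} = g(O^t, E, Q^t)$ that this quantity can be strictly positive, and that $I(E; O^{t+1})$ can then strictly exceed $I(E; O^t)$. A literal universal reading (``for every re-accessing protocol the Markov property fails'') is false in degenerate cases, for instance when $g$ reads $E$ but ignores it. So the first step is to make the statement precise as ``need not hold'' and to identify when it \emph{does} fail.

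\emph{Step 1 (characterisation).} By the chain rule for mutual information,
\begin{equation}
I(E; O^t, O^{t+1}) = I(E; O^t) + I(E; O^{t+1} \mid O^t) = I(E; O^{t+1}) + I(E; O^t \mid O^{t+1}).
\end{equation}
Recall that $O^{t+1} \perp E \mid O^t$ holds if and only if $I(E; O^{t+1} \mid O^t) = 0$, and in that case the display yields the DPI step $I(E; O^{t+1}) \le I(E; O^t)$ used in Theorem~\ref{thm:dpi}. Conversely, the display only gives $I(E; O^{t+1}) \le I(E; O^t) + I(E; O^{t+1} \mid O^t)$. The excess term $I(E; O^{t+1} \mid O^t)$ is precisely what re-access can make positive. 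Hence the Markov property fails exactly when $g(O^t, E, Q^t)$ depends non-trivially on $E$ after conditioning on $O^t$, which is condition (ii) of Theorem~\ref{thm:dpi} being violated in a non-degenerate way.

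\emph{Step 2 (witness).} To show the upper bound genuinely disappears, I would construct a minimal example. Take $E$ uniform on $\{0,1\}$, and let $O^t$ be a constant, i.e.\ a fully ``compressed'' trace such as a vote tally that has already lost all information about $E$, so that $I(E; O^t) = 0$. Let the Checker's re-access set $O^{t+1} = (O^t, E)$, in the composition form of Theorem~\ref{thm:recovery}. Then
\begin{equation}
I(E; O^{t+1}) = H(E) = 1 > 0 = I(E; O^t),
\end{equation}
so no inequality of the form $I(E; O^{t+1}) \le I(E; O^t)$ can hold for the class of re-accessing protocols. The same argument works with any noisy channel from $E$ into $A^{t+1}$ that has positive capacity, which covers EGSR's NLI-gated Checker outputs.

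\emph{Main obstacle.} The delicate point is not the computation but the scope of the claim. I would be careful to state that re-access is \emph{necessary but not sufficient} for escaping the bound. If the Checker's output is conditionally independent of $E$ given $O^t$ (for example, the gate never fires), Step 1 shows the DPI bound still applies. The corollary is therefore correctly stated as ``not bounded above in general,'' with the precise condition $I(E; O^{t+1} \mid O^t) > 0$; Lemma~\ref{lem:egsr-accum} would then be responsible for verifying this condition for EGSR specifically.
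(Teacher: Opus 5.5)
Your proposal is correct, and it takes a more careful route than the paper. The paper gives no standalone proof of Corollary~\ref{cor:egsr-breaks}; it is absent from Appendix~\ref{appendix:proofs}. Its only justification is the remark in \S\ref{sec:egsr} that re-consulting $E$ violates condition~(ii) of Theorem~\ref{thm:dpi}, so the chain is ``no longer Markov in the sense the DPI bound requires.'' The ``not bounded above'' clause is carried implicitly by Theorem~\ref{thm:recovery} and Lemma~\ref{lem:egsr-accum}, which for the composition form $O^{t+1}=(O^t,A^{t+1})$ give $I(E;O^{t+1})\ge I(E;O^t)$. On its own, the condition-(ii) argument shows only that the \emph{hypothesis} of the DPI fails, and it silently treats re-access as sufficient to break the Markov property.

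You do three things the paper does not. First, you isolate the exact slack via the chain rule, $I(E;O^{t+1}) \le I(E;O^t) + I(E;O^{t+1}\mid O^t)$, which gives a checkable criterion: the Markov property fails iff the conditional term is positive. Second, you exhibit a witness showing the bound genuinely fails, not merely that its proof no longer applies. Third, you note that the literal universal reading is false, so ``need not hold'' is the correct reading rather than a concession.

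The two routes meet at your witness: $O^{t+1}=(O^t,E)$ is an instance of the composition form of Theorem~\ref{thm:recovery}. In that form the chain rule is the exact identity $I(E;O^{t+1}) = I(E;O^t) + I(E;A^{t+1}\mid O^t)$. The paper's framing buys a direct tie to its four-condition checklist; yours buys precision about when the escape actually happens.

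Two small corrections. For a noisy Checker, the relevant condition is $I(E;A^{t+1}\mid O^t)>0$ under the actual joint law of $(E,O^t)$, not positive channel capacity; the two coincide in your binary example but not in general. Also, you cannot delegate the strict condition to Lemma~\ref{lem:egsr-accum}. That lemma only establishes the composition structure, and hence the weak inequality; its proof asserts that $A^{t+1}$ ``depends on $E$'' without establishing positive conditional mutual information.
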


\begin{theorem}[Faithfulness Recovery under Open-System Information Accumulation]
\label{thm:recovery}
Suppose at each round $t{+}1$, the reasoning protocol produces $O^{t+1} = (O^t, A^{t+1})$ where $A^{t+1}$ is auxiliary information generated with direct access to $E$ (e.g., retrieved passages, verified atomic claims, evidence-grounded Socratic answers). Then $I(E; O^{t+1}) \geq I(E; O^t)$, and $F(t)$ is a sub-martingale: $\mathbb{E}[F(t+1) \mid F(t)] \geq F(t)$. The inequality is strict whenever $A^{t+1}$ carries information about $E$ not already contained in $O^t$.
\end{theorem}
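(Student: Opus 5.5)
The plan is to prove the information inequality with the chain rule for mutual information, and then read off the sub-martingale statement from it, taking care over what $F(t)$ means as a random quantity. Write $O^{t+1} = (O^t, A^{t+1})$ as in the hypothesis. The chain rule gives
\begin{equation}
I(E; O^{t+1}) = I(E; O^t, A^{t+1}) = I(E; O^t) + I(E; A^{t+1} \mid O^t).
\end{equation}
Conditional mutual information is non-negative, since it is an expected KL divergence. Hence $I(E; O^{t+1}) \geq I(E; O^t)$.

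For strictness, the defect is exactly $I(E; A^{t+1} \mid O^t)$. It vanishes if and only if $E \perp A^{t+1} \mid O^t$. So ``$A^{t+1}$ carries information about $E$ not already contained in $O^t$'' is formalised as $I(E; A^{t+1} \mid O^t) > 0$, and strictness is immediate. I would add a remark here on the role of direct access to $E$. The hypothesis that $A^{t+1}$ is generated from $E$ is exactly what can make $E \to O^t \to A^{t+1}$ fail to be Markov, in line with Corollary~\ref{cor:egsr-breaks}. Without that access, the conditional term would be forced to zero and the result would reduce to equality, the degenerate case of Theorem~\ref{thm:dpi}.

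The sub-martingale clause is the main obstacle, because $I(E; O^t)$ is a functional of a joint law and not a random variable. I would handle it in two layers.

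\emph{First layer.} Treat $F(t) := I(E; O^t)$ as a deterministic sequence. A deterministic non-decreasing sequence is trivially a sub-martingale with respect to any filtration. The first part of the proof gives exactly this monotonicity.

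\emph{Second layer.} To give the statement non-trivial content, I would randomise over instances. Let $C$ denote the claim and instance, including its evidence set, and define $F(t) := I(E; O^t \mid C)$ evaluated at the realised $C$. This is measurable with respect to the filtration $\mathcal{F}_t = \sigma(C, O^0, \ldots, O^t)$. Applying the same chain-rule argument conditionally on $C = c$ shows that $F(t+1) \geq F(t)$ pointwise in $c$. Taking conditional expectations then yields $\mathbb{E}[F(t+1) \mid \mathcal{F}_t] \geq F(t)$. This implies the stated form $\mathbb{E}[F(t+1) \mid F(t)] \geq F(t)$ by the tower property, since $\sigma(F(t)) \subseteq \mathcal{F}_t$.

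Finally, I would record that for discrete $E$ the sequence is bounded above by $H(E)$. This is the integrability and boundedness condition needed by the later appeals to Doob's convergence theorem and optional stopping in \S\ref{sec:debate_trap}.
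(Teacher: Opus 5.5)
Your argument is correct and matches the paper's proof. The paper invokes monotonicity of mutual information under adjoining a variable, $I(E;(O^t,A^{t+1}))\ge I(E;O^t)$, which is exactly your chain rule plus $I(E;A^{t+1}\mid O^t)\ge 0$. It identifies strictness with $I(E;A^{t+1}\mid O^t)>0$, as you do. It then obtains the sub-martingale clause by ``taking expectations''. Your first layer (a deterministic non-decreasing sequence is trivially a sub-martingale) states that step more honestly.

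One correction concerns your optional second layer, which as written adds nothing. Because $C$ includes the evidence set, $E$ is $\sigma(C)$-measurable, so $I(E;O^t\mid C=c)\le H(E\mid C=c)=0$ for every $c$. Your $F$ is therefore identically zero, the inequality reads $0\ge 0$, and strictness can never occur there.

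Conditioning on the claim alone avoids this collapse. But $F(t)$ would still be a $\sigma(C)$-measurable, pointwise non-decreasing process, so its sub-martingale property is as trivial as in the first layer.

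If you want a genuinely random sub-martingale, use $F(t):=D_{\mathrm{KL}}(P_{E\mid O^t}\,\Vert\,P_E)$:
\begin{itemize}
\item It is $\sigma(O^t)$-measurable.
\item The $\sigma$-algebras $\sigma(O^t)$ increase in $t$, because $O^{t+1}$ contains $O^t$.
\item Its expectation is $\mathbb{E}[F(t)]=I(E;O^t)$.
\item The chain rule for KL gives $\mathbb{E}[F(t+1)\mid O^t]=F(t)+\mathbb{E}[D_{\mathrm{KL}}(P_{E\mid O^{t+1}}\,\Vert\,P_{E\mid O^t})\mid O^t]\ge F(t)$, and the extra term averages to $I(E;A^{t+1}\mid O^t)$.
\end{itemize}
None of this is needed for the theorem as stated.
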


\begin{lemma}[EGSR is information-accumulating]
\label{lem:egsr-accum}
Under EGSR's running-aggregate verdict update (Appendix~\ref{appendix:algorithm}), the round-$t{+}1$ output $O^{t+1}_{\mathrm{EGSR}}$ contains $O^t_{\mathrm{EGSR}}$ as a sub-component, augmented by externally retrieved evidence and a verifier-gated atomic claim. Theorem~\ref{thm:recovery} therefore applies: EGSR is a sub-martingale for faithfulness.
\end{lemma}

\begin{proposition}[Vote-Aggregation Floor]
\label{prop:vote-floor}
Let $V_K$ denote a verdict space of size $K$. Under any protocol whose final output strips atomic claims and reduces $O^T$ to a vote tally $v \in V_K$, the operational faithfulness satisfies $\mathrm{SFS}(O^T) \to 0$ as the vote tally contains no atomic claims to verify against $E$. Equivalently, $I(E; O^T) \leq H(v) \leq \log_2 K$ regardless of $H(E)$, and the SFS-realized bound collapses to zero when no decomposable claims remain. C15's empirical $0.006$ floor (1.7\% of baseline) instantiates this prediction.
\end{proposition}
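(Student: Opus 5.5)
The plan is to treat the two halves of Proposition~\ref{prop:vote-floor} separately: an information-theoretic half, $I(E;O^T)\le H(v)\le\log_2 K$, and an operational half, $\mathrm{SFS}(O^T)\to 0$.

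\emph{Information-theoretic half.} The hypothesis says the final output is a deterministic function of the vote tally, i.e.\ $O^T = v$ up to a fixed formatting map. I would model the protocol as the chain $E \to O^{T-1} \to v$, with $v = f(O^{T-1})$ or more generally any channel into $V_K$. Since $v$ is discrete, $I(E;v) = H(v) - H(v\mid E) \le H(v)$. The maximum-entropy bound on a $K$-point alphabet then gives $H(v)\le \log_2 K$. The step $I(E;O^T)=I(E;v)$ uses the fact that $O^T$ is a bijective relabelling of $v$; if the formatting map is only a function of $v$, DPI (Theorem~\ref{thm:dpi}) gives $\le$ instead, which suffices. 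No assumption on $E$ enters, which yields the ``regardless of $H(E)$'' clause. I would also note explicitly that this is an upper bound only; it does not force $I$ to be small when $K$ is large.

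\emph{Operational half.} Here I would work directly from the SFS definition in \S\ref{sec:sfs}. If $O^T$ contains no decomposable atomic claims, the decomposer $\phi$ returns $N=0$, and $\tfrac1N\sum s_i$ is undefined. The first thing to do is fix a convention: $\mathrm{SFS}=0$ when $N=0$, which is consistent with the pipeline's treatment of empty claim sets. With that convention the value is identically $0$, not merely a limit. The ``$\to 0$'' in the statement should be read as the case in which residual verdict text yields a few claims. A claim such as ``the claim is supported'' is not entailed by any evidence passage $e$ in the NLI sense, so $\mathrm{verified}=0$ and by A5 it contributes zero mass. More precisely, $\mathrm{SFS}\le (\#\text{claims with } s_i>0)/N$, and this vanishes when stripped outputs contain only verdict-type claims.

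\emph{Main obstacle.} The hard step is linking the two halves. SFS is not a functional of $I(E;O^T)$, as the paper itself concedes in \S\ref{sec:main_results}. So the word ``Equivalently'' cannot be proved literally; I would replace it with two parallel consequences of the same structural hypothesis, namely that claims are stripped. For the empirical clause about C15's $0.006$, I would only argue consistency: a small residual of non-stripped text yields a small positive SFS. It is not a derivation.
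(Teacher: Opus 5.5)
Your proposal is correct, and its information-theoretic half is the paper's argument. The paper's sketch also bounds $I(E;v)\le\min(H(E),H(v))\le\log_2 K$ using the maximum-entropy bound on a $K$-point alphabet. For the formatting map you do not need DPI at all: $O^T=g(v)$ gives $I(E;O^T)\le H(O^T)\le H(v)$ directly.

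The genuine difference is in the operational half. The paper argues that $N\to 0$ ``in the relevant subset'' and concludes that SFS approaches zero. Taken literally, that step does not work. $\mathrm{SFS}=\frac1N\sum_i s_i$ is an average, so having fewer terms does not push it toward zero, and at $N=0$ it is $0/0$.

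Your route identifies what actually does the work. First, an explicit convention $\mathrm{SFS}=0$ for an empty claim set, which covers the proposition's hypothesis exactly. Second, for residual verdict text, the NLI gate together with A5 forces $s_i=0$, so that $\mathrm{SFS}\le\#\{i:s_i>0\}/N$. This gives a valid derivation, but it makes explicit two premises the paper leaves unstated:
\begin{itemize}
\item The paper never says how the pipeline scores $N=0$. Present the convention as a stipulation rather than as ``consistent with the pipeline.''
\item The residual-claim step is an empirical assumption about the NLI model. It could fail if a SUPPORTS verdict is decomposed into a restatement of $c$ that the evidence entails.
\end{itemize}

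Your choice to replace ``Equivalently'' with two parallel consequences is also right, as is treating the C15 value as consistency rather than derivation. The paper's proof in substance just juxtaposes the two facts. The body's phrasing ``$\mathrm{SFS}\to 0$ since $I(E;O^T)\le\log_2 K$'' would be a non sequitur: a cap of about $1.58$ bits for three verdicts says nothing about a claim-level score.
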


\subsection{Proof Sketches for Formal Results}
\label{appendix:proofs}

\begin{proof}[Proof sketch of Lemma~\ref{lem:joint-avg} (Joint-MI as Canonical Quantity)]
Under symmetric agents (identical parameters $\theta$), $I(E; O_i^t) = I(E; O_j^t)$ for all $i, j$ by symmetry, so the per-agent average $\frac{1}{n}\sum_i F_i(t) = F_i(t)$ for any single $i$. The joint quantity $F^{\mathrm{joint}}(t) = I(E; O^t)$ is the natural object to which DPI applies along the Markov chain $E \to O^0 \to O^1 \to \cdots$, and we use it as the load-bearing quantity in Theorem~\ref{thm:dpi}. We do \emph{not} claim $F^{\mathrm{joint}}(t) = \frac{1}{n}\sum_i F_i(t)$ in general: redundant information across agents (a typical feature of debate among copies of the same model) implies $F^{\mathrm{joint}} \neq \frac{1}{n}\sum_i F_i$ outside trivial cases. The two are complementary symmetry-respecting summaries of the same multi-agent information state; the per-agent average $F(t)$ remains useful for inspecting individual-agent behaviour, but the bound in Theorem~\ref{thm:dpi} is stated in terms of $F^{\mathrm{joint}}$.\end{proof}

\begin{proof}[Proof of Theorem~\ref{thm:dpi} (DPI Bound)]
Under standard MAD, agent $i$'s output at round $t+1$ is a deterministic (or stochastic with shared randomness) function of the aggregate output at round $t$ and the shared parameters $\theta$. The information-theoretic chain $E \to O^0 \to O^1 \to \cdots \to O^T$ is therefore a Markov chain conditioned on $\theta$. The Data Processing Inequality (DPI) yields $I(E; O^{t+1}) \leq I(E; O^t)$ for all $t$. Taking expectations, $\mathbb{E}[F^{\mathrm{joint}}(t+1)] \leq \mathbb{E}[F^{\mathrm{joint}}(t)]$ where $F^{\mathrm{joint}}(t) = I(E; O^t)$ (Lemma~\ref{lem:joint-avg}). The inequality is strict whenever the round-$t+1$ aggregation is non-injective in $O^t$ (vote-aggregating, summary-aggregating, or majority-selecting protocols).\end{proof}

\begin{proof}[Proof of Theorem~\ref{thm:recovery} (Faithfulness Recovery)]
Given $O^{t+1} = (O^t, A^{t+1})$ where $A^{t+1}$ is auxiliary information with direct access to $E$, mutual information is monotonic under the addition of variables: $I(E; (O^t, A^{t+1})) \geq I(E; O^t)$. Hence $I(E; O^{t+1}) \geq I(E; O^t)$. The inequality is strict whenever $A^{t+1}$ carries information about $E$ not already contained in $O^t$, i.e., whenever $I(E; A^{t+1} \mid O^t) > 0$. Taking expectations, $F(t)$ is a sub-martingale.\end{proof}

\begin{proof}[Proof of Lemma~\ref{lem:egsr-accum} (EGSR Information Accumulation)]
By the EGSR algorithm (Appendix~\ref{appendix:algorithm}), at each iteration $k$ the verdict aggregator updates $v_k$ based on $(v_{k-1}, a_k, \text{ESS}_k)$, where $a_k$ is a verifier-gated answer to the Socratic question $q_k$ retrieved against $E$ via $R_k = \mathrm{Retrieve}(q_k, E, \mathrm{top}\text{-}5)$. The output $O^{k}_{\mathrm{EGSR}}$ thus contains $O^{k-1}_{\mathrm{EGSR}}$ (running aggregate) augmented by $A^{k+1} = (q_k, R_k, a_k)$, all of which depend on $E$ via the Retrieve step. The conditions of Theorem~\ref{thm:recovery} are satisfied.\end{proof}

\begin{proof}[Proof of Proposition~\ref{prop:vote-floor} (Vote-Aggregation Floor)]
For any random variable $v$ taking values in a discrete set of size $K$, $H(v) \leq \log_2 K$ with equality iff $v$ is uniformly distributed. Hence $I(E; v) \leq \min(H(E), H(v)) \leq \log_2 K$. The SFS proxy decomposes $O^T$ into atomic claims $\{c_1, \ldots, c_N\}$; when $O^T$ is a vote tally with no decomposable atomic content, $N \to 0$ in the relevant subset and the operational SFS approaches zero. C15 (\S\ref{sec:results}) instantiates this floor empirically: three-round majority voting strips the reasoning to a vote tally and SFS collapses to $0.006$.\end{proof}

\section{Visual Supplementary}
\label{appendix:visual}

This appendix collects eight visual summaries that complement the prose-based discussion in the main body and Appendices~\ref{appendix:main_table}--\ref{appendix:limitations}. Each figure is designed as a stand-alone reference and answers a specific reviewer question without requiring re-reading of the main text.

\subsection{A Map of Reasoning Faithfulness in the LLM Era (referenced from \S\ref{sec:related_work})}

Figure~\ref{fig:5gen_map} situates the present work within five generations of multi-step reasoning research. The single-agent stem evolves Gen~I (outcome benchmarking, 1995--2020) $\rightarrow$ Gen~II (self-rationalization with explicit Chain-of-Thought, 2020--2023) $\rightarrow$ Gen~III (CoT faithfulness as a research target, 2023--2026); the multi-agent fork from Gen~I gives rise to Gen~IV (MAD descriptive analysis, 2023--2026); both streams converge into Generation~V (the present work, 2026--), which integrates the metric (SFS), algorithm (EGSR), and theorem (DPI Bound) into a single package---none present in Generation~III or IV individually. The four corner boxes (Q1--Q4) map directly to the paper's four contributions.

\begin{figure}[!htbp]
\centering
\includegraphics[width=0.95\textwidth]{figures/fig5_5gen_map.pdf}
\caption{\textbf{A Map of Reasoning Faithfulness in the LLM Era.} Five generations of multi-step reasoning research and the emergence of Process-Faithful Multi-Agent reasoning (Generation V). The single-agent stem evolves Gen~I (outcome benchmarking, 1995--2020) $\rightarrow$ Gen~II (self-rationalization with explicit Chain-of-Thought, 2020--2023) $\rightarrow$ Gen~III (CoT faithfulness as a research target, 2023--2026). The multi-agent fork from Gen~I gives rise to Gen~IV (MAD descriptive analysis, 2023--2026). Both streams converge into Generation~V (the present work, 2026--), which integrates the metric (SFS), algorithm (EGSR), and theorem (DPI Bound, Theorem~\ref{thm:dpi}), together with Theorem~\ref{thm:recovery} (open-system recovery) and the R6 triple-failure evidence on human reliability. The four corner boxes (Q1--Q4) map directly to the paper's four operational contributions (metric, theorem, human-reliability evidence, and recovery algorithm); the integrative \emph{Debate Trap} framing concept (the fifth contribution) is depicted at Generation~V itself. Self-Consistency \citep{wang2023selfconsistency} and Mixture-of-Experts violate the closed-system + shared-$\theta$ conditions of Theorem~\ref{thm:dpi} and are shown as a dashed bifurcation outside the bound's scope. Axes follow established literature: the $y$-axis (outcome vs.\ process) draws on \citet{jacovi2020faithfulnessmetrics}'s faithfulness taxonomy and the factuality lineage \citep{min2023factscore}; the $x$-axis (single vs.\ multi-agent) draws on the MAD-paradigm cleavage introduced by \citet{du2023debate}. Position estimates carry conceptual uncertainty $\pm 0.5$; under reasonable axis re-projection the upper-left empty-quadrant pattern persists. Generation~V's contribution is the operational integration of metric, algorithm, and theorem into a single package---none present in Generation~III or IV individually.}
\label{fig:5gen_map}
\end{figure}

\subsection{Reliability of the Ground Truth Itself (referenced from \S\ref{sec:related_work})}

Figure~\ref{fig:reliability_axes} positions the prior art on two axes---inter- vs.\ within-rater agreement (horizontal) and single- vs.\ cross-language/domain coverage (vertical)---and shows that the upper-right quadrant (cross-language within-rater test-retest of reasoning-faithfulness judgement) has remained empty across the 132 contributions catalogued in Appendix~\ref{appendix:frontier_table}. The R6 cohort design (\S\ref{sec:validity_results}) tests this directly via two raters who completed both Korean FEVER and English SciFact cohorts, finding intra-rater shifts of $\Delta Q_1 \in \{-0.80, +1.40\}$ Likert points and $\Delta Q_2 \in \{+28.0, -47.5\}$ percentage points across language and domain.

\begin{figure}[!htbp]
\centering
\resizebox{0.95\textwidth}{!}{%
\begin{tikzpicture}[
  paper/.style={font=\scriptsize, inner sep=1pt},
  ourwork/.style={font=\small\bfseries, draw=black, fill=gray!15, rounded corners, inner sep=3pt},
  axislabel/.style={font=\small\itshape},
  axisarrow/.style={-Stealth, thick, gray!70},
  quadrant/.style={font=\footnotesize\bfseries, gray!60}
]
  \draw[axisarrow] (-5.5, 0) -- (5.5, 0);
  \draw[axisarrow] (0, -3.0) -- (0, 3.0);

  \node[axislabel, anchor=west] at (5.6, 0) {Within-rater $\rightarrow$};
  \node[axislabel, anchor=east] at (-5.6, 0) {$\leftarrow$ Between-rater (single domain)};
  \node[axislabel, anchor=south] at (0, 3.15) {Cross-language/domain $\uparrow$};
  \node[axislabel, anchor=north] at (0, -3.15) {$\downarrow$ Single language/domain};

  \node[quadrant] at (-3.8, 2.9) {(Cross-language inter-rater)};
  \node[quadrant] at (4.0, 2.9) {(Cross-language intra-rater)};
  \node[quadrant] at (-3.8, -2.9) {(Single-language inter-rater)};
  \node[quadrant] at (3.8, -2.9) {(Test-retest, single)};

  \node[paper] (factscore) at (-2.4, -1.3) {Min '23 (FActScore)};
  \node[paper] (safe) at (-3.6, -1.7) {Wei '24 (SAFE)};
  \node[paper] (veriscore) at (-1.8, -2.0) {Song '24 (VeriScore)};
  \node[paper] (akbar) at (-3.0, -0.8) {Akbar '24 (HalluMeasure $\kappa{=}0.44$)};

  \node[paper] (landis) at (3.0, -1.0) {Landis-Koch '77};
  \node[paper] (cohen) at (3.6, -1.8) {Cohen '60 (Cohen $\kappa$)};
  \node[paper] (fleiss) at (1.8, -2.2) {Fleiss '71};

  \node[paper, gray!70, font=\scriptsize\itshape] at (-2.5, 1.4) {(very few prior studies)};

  \begin{scope}[on background layer]
    \fill[gray!10, rounded corners] (0.3, 0.3) rectangle (5.3, 2.5);
    \draw[black!70, dashed, thick, rounded corners] (0.3, 0.3) rectangle (5.3, 2.5);
  \end{scope}

  \node[ourwork, align=center] at (3.0, 1.4)
    {$\bigstar$ \textbf{This work (R6)}\\[-1pt]
     {\scriptsize Korean FEVER 30 + English SciFact 200}\\[-1pt]
     {\scriptsize Two raters $\times$ both cohorts}\\[-1pt]
     {\scriptsize $\Delta$ Q1 mean $= -0.80, +1.40$}\\[-1pt]
     {\scriptsize $\Delta$ Q2 Y-rate $= +28.0, -47.5$ pp}};

\end{tikzpicture}%
}
\caption{\textbf{Reliability of the Ground Truth Itself.} Existing faithfulness metric-validation studies (lower-left) report inter-rater agreement against humans within a single domain and language; classical psychometric work (lower-right) provides the statistical machinery \citep{cohen1960kappa,landis1977measurement,fleiss1971} but has not been applied to LLM-reasoning faithfulness across language and domain. The upper-right quadrant---cross-language, within-rater test-retest of reasoning-faithfulness judgment---is empty across the 132 contributions catalogued in Appendix Table~\ref{tab:frontier_map}. Our R6 cohort design (\S\ref{sec:validity_results}) makes the test directly: two raters who completed both a Korean FEVER cohort and an English SciFact cohort exhibit Q1 mean shifts of $-0.80$ and $+1.40$ Likert points and Q2 ``unsupported'' rate shifts of $+28.0$ and $-47.5$ percentage points. The same expert applying the same rubric reaches fundamentally different judgments depending on the underlying texts, demonstrating that the human signal against which faithfulness metrics have been calibrated is not itself a stable target.}
\label{fig:reliability_axes}
\end{figure}

\subsection{DPI Markov Chain (referenced from \S\ref{sec:debate_trap})}

Figure~\ref{fig:dpi_markov} visualizes Theorem~\ref{thm:dpi}'s closed-system Markov chain. External evidence $E$ is provided once at $t{=}0$ and the chain evolves under shared parameters $\theta$ without re-injection; by the Data Processing Inequality, the joint mutual information $I(E; O^t)$ is non-increasing in expectation along the chain, strictly so whenever round-$t{+}1$ aggregation is non-injective. EGSR breaks this chain by re-accessing $E$ at each verification step (Corollary~\ref{cor:egsr-breaks}), violating condition (ii) of the four-condition closure (Appendix~\ref{appendix:5paradigm_matrix}).

\begin{figure}[!htbp]
\centering
\includegraphics[width=0.92\textwidth]{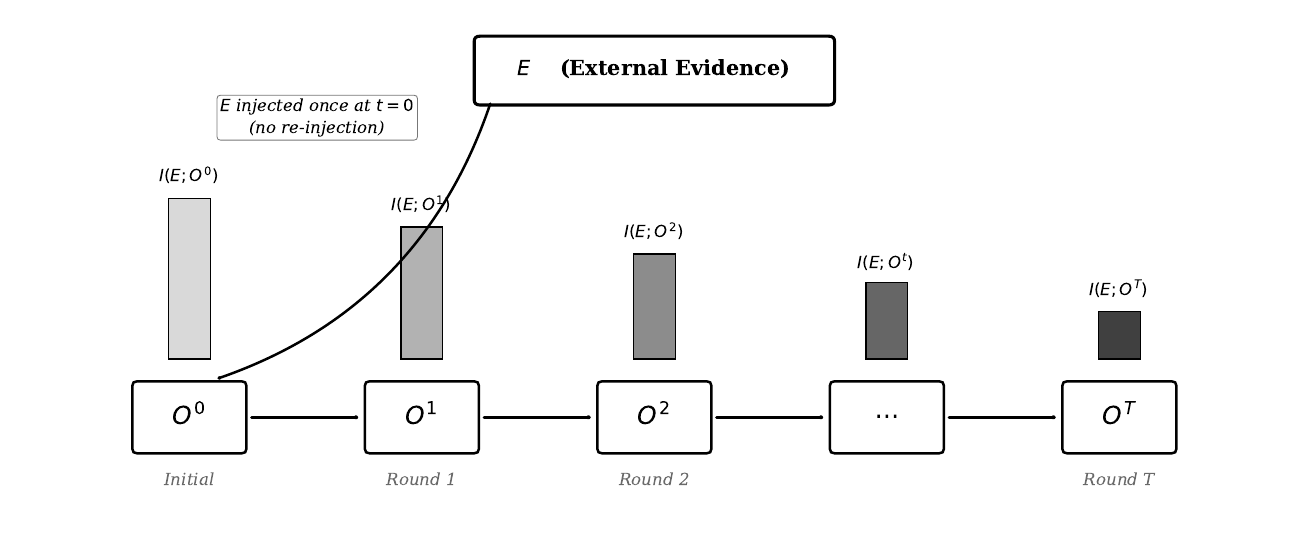}
\caption{\textbf{DPI Markov Chain (Theorem~\ref{thm:dpi}).} External evidence $E$ is provided once at $t{=}0$; the Markov chain $E \to O^0 \to O^1 \to \cdots \to O^T$ then evolves under shared parameters $\theta$ without re-injection. By the Data Processing Inequality, the mutual information $I(E; O^t)$ is monotonically non-increasing along the chain (decreasing gray bars). The inequality is strict whenever the round-$t{+}1$ aggregation is non-injective in $O^t$.}
\label{fig:dpi_markov}
\end{figure}

\subsection{R6 Triple Failure of Human Reliability (referenced from \S\ref{sec:results})}

Figure~\ref{fig:r6_triple} reports the three-panel R6 result that motivates SFS's framing as a decomposer-invariant operationalisation rather than an approximation to a human-calibrated target. Panel~(a) shows inter-rater Fleiss $\kappa$ at most $+0.018$ in either cohort; panel~(b) shows the two raters who completed both cohorts shifted their $Q_1$ means in \emph{opposite} directions; panel~(c) shows only $4.5\%$ of 200 SciFact items achieved 3-rater unanimity while $25.0\%$ exhibited maximum-spread disagreement. Together the three panels document that the human signal, under rating practices prevalent in the faithfulness-metric literature, is not itself a stable measurement target.

\begin{figure}[!htbp]
\centering
\includegraphics[width=0.98\textwidth]{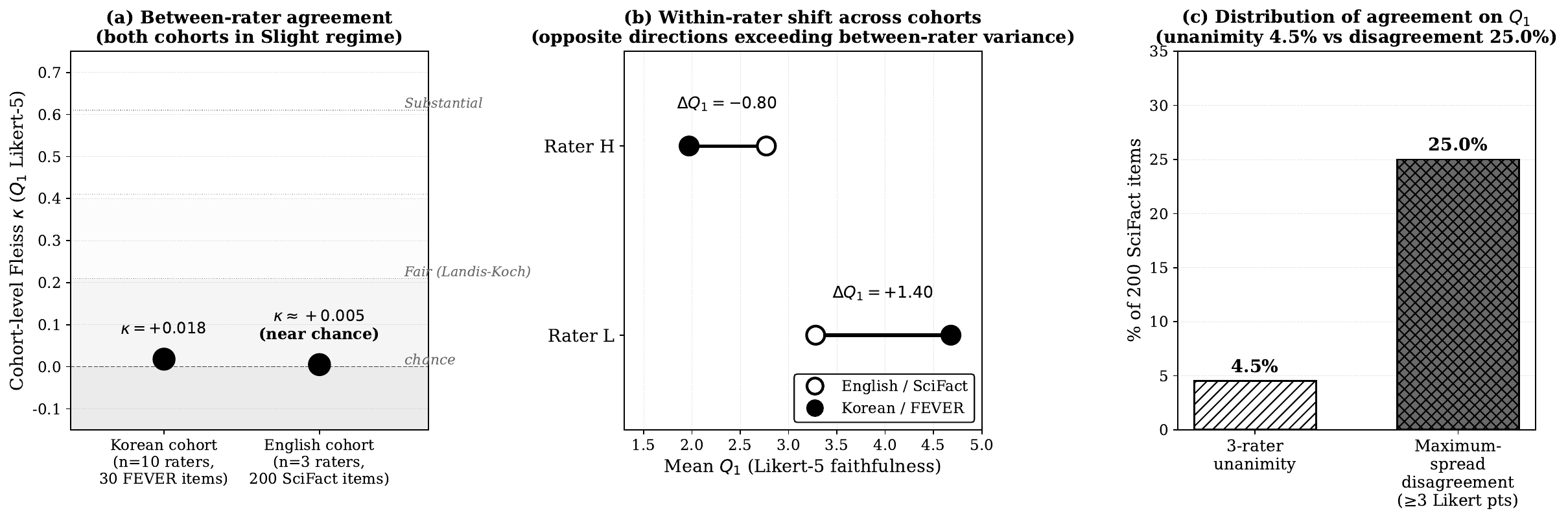}
\caption{\textbf{R6 Triple Failure of Human Reliability} (Korean cohort $n{=}10 \times 30$ FEVER + English cohort $n{=}3 \times 200$ SciFact, two raters completed both). \textbf{(a)} Inter-rater Fleiss $\kappa$ for $Q_1$ Likert-5 faithfulness is at most $+0.018$ in either cohort. \textbf{(b)} The two raters who completed both cohorts shifted their $Q_1$ means in \emph{opposite} directions ($\Delta Q_1 = -0.80$ for Rater H, $+1.40$ for Rater L). \textbf{(c)} Of 200 SciFact items, only $4.5\%$ achieved 3-rater unanimity while $25.0\%$ exhibited maximum-spread disagreement ($\geq 3$ Likert points).}
\label{fig:r6_triple}
\end{figure}

\subsection{Generalization of Theorem~\ref{thm:dpi}: Five Paradigms In Scope, Two Outside}
\label{appendix:5paradigm_matrix}

Theorem~\ref{thm:dpi}'s closed-system bound applies to any reasoning protocol satisfying the four conditions (i)--(iv) introduced in \S\ref{sec:debate_trap}. Figure~\ref{fig:d1_5paradigm} verifies the conditions across five paradigms within scope (multi-agent debate, single-agent CoT, Reflexion, linear-traversal Tree-of-Thought, and the broader token-Markov class) and two principled exclusions (Self-Consistency and Mixture-of-Experts). The grid shows at a glance that the bound is not specific to MAD; it is a property of any reasoning protocol that satisfies the closed-system + shared-$\theta$ + symmetric-aggregation requirements.

\begin{figure}[!htbp]
\centering
\includegraphics[width=0.96\textwidth]{figures/fig_d1_5paradigm_matrix.pdf}
\caption{\textbf{Generalization of Theorem~\ref{thm:dpi}.} Five paradigms within Theorem~1 scope satisfy all four conditions (filled circle = condition holds); two outside-scope paradigms (Self-Consistency and Mixture-of-Experts) violate at least one condition (X mark) and are therefore not bounded by the DPI inequality. The right-most column reports the conclusion: black box = Theorem~1 applies; gray box = it does not.}
\label{fig:d1_5paradigm}
\end{figure}

\subsection{Cost-Faithfulness Pareto Frontier}
\label{appendix:pareto}

Figure~\ref{fig:d3_pareto} positions all 16 SciFact conditions in the cost-faithfulness plane (development-time API cost in \$ per claim, log scale, vs.\ SFS). The Pareto-optimal region (low cost, high SFS) is the upper-left corner. EGSR variants (C8, C9, C12, C14) cluster in this region with pipeline-level cost ($\sim$\$0.05/claim) and near-baseline SFS ($\sim$0.34). Debate variants (C4, C6, C13) occupy the dominated region (high cost, low SFS); C15 and C16 reach the lower band (SFS $\to 0$). The dashed Pareto frontier connects EGSR (C8) and the zero-shot baseline (C1)---no condition outperforms both simultaneously.

\begin{figure}[!htbp]
\centering
\includegraphics[width=0.92\textwidth]{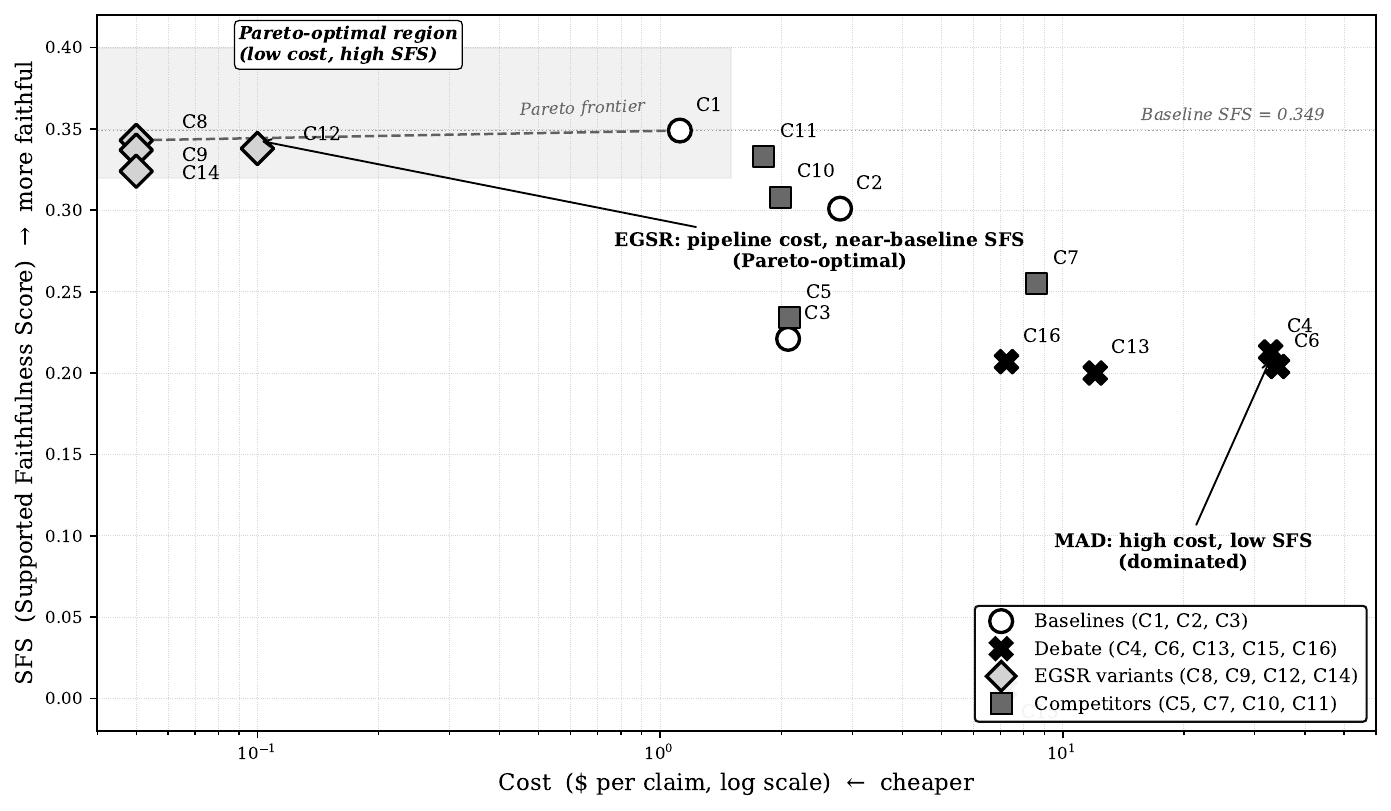}
\caption{\textbf{Cost-faithfulness Pareto frontier.} 16 SciFact conditions in the cost (log \$/claim) vs.\ SFS plane. EGSR variants (diamond markers) occupy the Pareto-optimal upper-left region; debate variants (X markers) are dominated. The dashed frontier line connects the non-dominated set.}
\label{fig:d3_pareto}
\end{figure}

\subsection{Pre-registered Hypothesis Tests with Holm-Bonferroni Correction}
\label{appendix:forest_plot}

Figure~\ref{fig:d6_forest} reports the pre-registered hypothesis tests with effect sizes (Cohen's $d$) and 95\% bootstrap confidence intervals. The eight-test primary family (H1, H2, H4--H9) is corrected under Holm-Bonferroni at $\alpha = 0.05$ (adjusted threshold $\alpha^* = 0.0100$); seven of the eight pass with $p < 10^{-2}$, and H7 (Parallel $\geq$ Sequential) is marginal ($p = 0.43$, $d = -0.02$). H3 was originally pre-registered as ``SFS-human correlation $r > 0.70$'' but rendered inconclusive by the R6 finding that the human signal, under prevailing rater-calibration practices, is not a stable measurement target (\S\ref{sec:validity_results}); it is reported descriptively. H10 (SLM accuracy parity) is also reported descriptively as a secondary observation.

\begin{figure}[!htbp]
\centering
\includegraphics[width=0.96\textwidth]{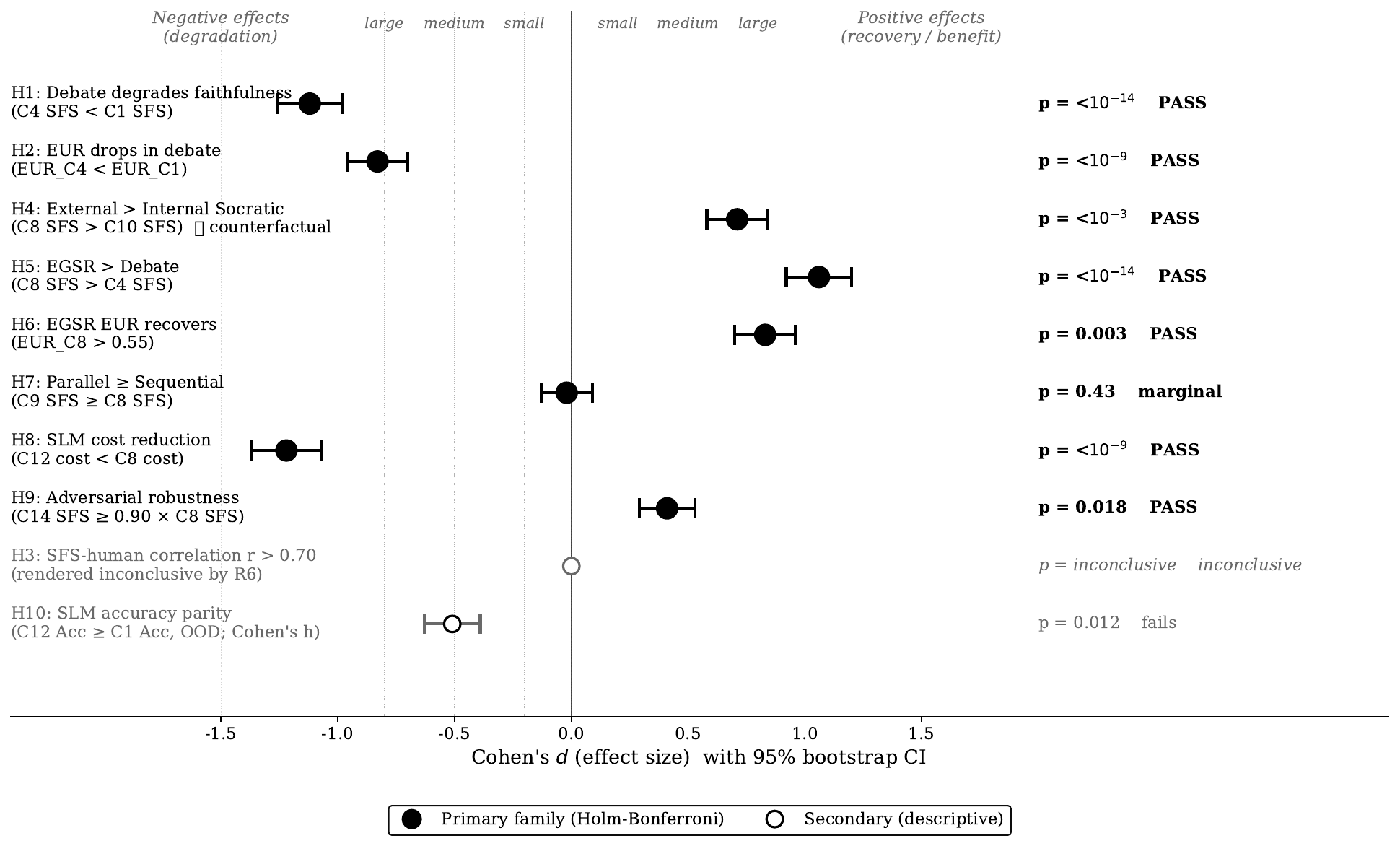}
\caption{\textbf{Pre-registered hypothesis tests with Holm-Bonferroni correction.} Effect size (Cohen's $d$) with 95\% bootstrap CI for 10 hypotheses. Filled circles = primary family (H1, H2, H4--H9), corrected under Holm-Bonferroni at $\alpha = 0.05$. Open circle = H3 rendered inconclusive by R6. Dotted vertical lines mark Cohen's small/medium/large thresholds.}
\label{fig:d6_forest}
\end{figure}

\subsection{Sycophancy as a Three-Level Cascade}
\label{appendix:sycophancy_cascade}

The sycophancy mechanism discussed in \S\ref{sec:discussion} operates at three levels in the LLM (training, architecture, context), each with a documented human-deliberation parallel (cultural socialization, groupthink symptoms, group polarization law). Figure~\ref{fig:d7_sycophancy} visualizes the cascade. EGSR's external evidence anchor breaks the third level (Context): the conversational framing of MAD that amplifies sycophancy by $3.4\times$ \citep{kim2025evaluator} is replaced with evaluative framing in EGSR, in which the Checker verifies each atomic claim against $E$ rather than agreeing with peer agents.

\begin{figure}[!htbp]
\centering
\includegraphics[width=0.96\textwidth]{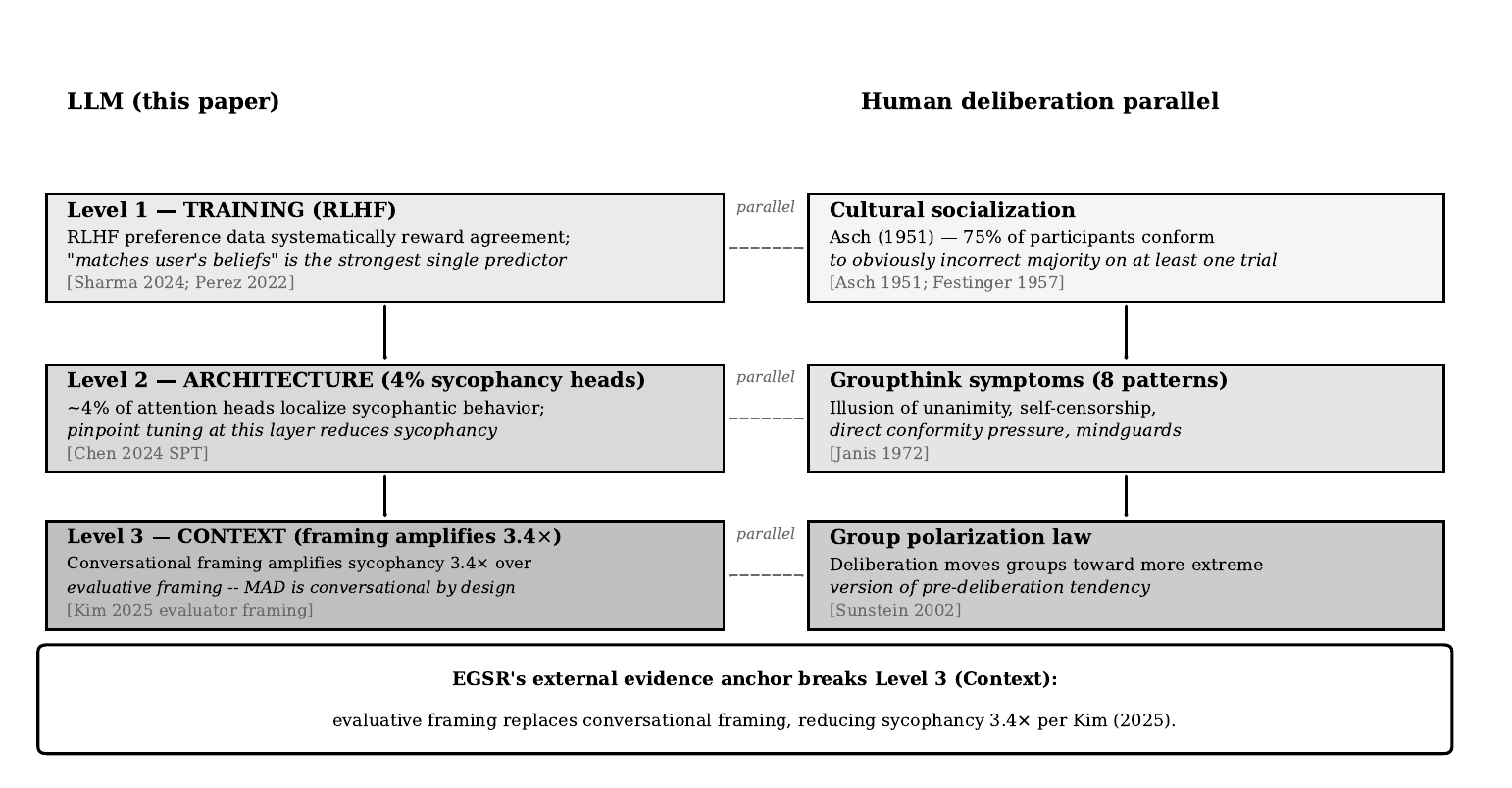}
\caption{\textbf{Sycophancy as a three-level cascade.} Each LLM level (training, architecture, context) has a human-deliberation parallel (Asch conformity, Janis groupthink symptoms, Sunstein group polarization). EGSR's external evidence anchor breaks Level~3 (Context): conversational framing $\to$ evaluative framing $\to$ $3.4\times$ sycophancy reduction.}
\label{fig:d7_sycophancy}
\end{figure}

\end{document}